\documentclass[letterpaper]{article} 
\usepackage{aaai2026}  
\usepackage{times}  
\usepackage{helvet}  
\usepackage{courier}  
\usepackage[hyphens]{url}  
\usepackage{graphicx} 
\urlstyle{rm} 
\usepackage{natbib}  
\usepackage{caption} 
\frenchspacing  
\setlength{\pdfpagewidth}{8.5in} 
\setlength{\pdfpageheight}{11in} 
%
\usepackage{algorithm}
\usepackage{algorithmic}

\usepackage{amsmath,amssymb,amsfonts} 
\usepackage{amsthm,mathrsfs}
\usepackage{bm}

\usepackage{graphicx} 
\usepackage[font=small,skip=2pt]{caption} 
\usepackage{subcaption}
\usepackage{booktabs,tabularx,multirow,makecell,diagbox}
\usepackage{tabularray}

\usepackage[title]{appendix}
\usepackage{xcolor}
\usepackage{textcomp}
\usepackage{manyfoot}
\usepackage[switch]{lineno}
\usepackage{placeins}

\theoremstyle{thmstyleone}
\newtheorem{theorem}{Theorem}

\newtheorem{lemma}{Lemma}
\newtheorem{Corollary}{Corollary}

\theoremstyle{thmstyletwo}


\theoremstyle{thmstylethree}
\newtheorem{definition}{Definition}
\newtheorem{Property}{Property}


\urlstyle{same}

%
\usepackage{newfloat}
\usepackage{listings}
\DeclareCaptionStyle{ruled}{labelfont=normalfont,labelsep=colon,strut=off} 
\lstset{%
	basicstyle={\footnotesize\ttfamily},
	numbers=left,numberstyle=\footnotesize,xleftmargin=2em,
	aboveskip=0pt,belowskip=0pt,%
	showstringspaces=false,tabsize=2,breaklines=true}
\floatstyle{ruled}
\newfloat{listing}{tb}{lst}{}
\floatname{listing}{Listing}
%
\pdfinfo{
/TemplateVersion (2026.1)
}

\setcounter{secnumdepth}{0} 

%


\title{RI-Loss: A Learnable Residual-Informed Loss for Time Series Forecasting}
\author {
    Jieting Wang\textsuperscript{\rm 1,2},
    Xiaolei Shang\textsuperscript{\rm 1},
    Feijiang Li\textsuperscript{\rm 1,2},
    Furong Peng\textsuperscript{\rm 1,2}
}
\affiliations {
    \textsuperscript{\rm 1}Institute of Big Data Science and Industry, Shanxi University\\
    \textsuperscript{\rm 2}Key Laboratory of Evolutionary Science Intelligence of Shanxi Province, Taiyuan, Shanxi, China\\
}

\usepackage{bibentry}

\begin{document}

\maketitle

\begin{abstract}

Time series forecasting relies on predicting future values from historical data, yet most state-of-the-art approaches—including transformer and multilayer perceptron-based models—optimize using Mean Squared Error (MSE), which has two fundamental weaknesses: its point-wise error computation fails to capture temporal relationships, and it does not account for inherent noise in the data. To overcome these limitations, we introduce the Residual-Informed Loss (RI-Loss), a novel objective function based on the Hilbert-Schmidt Independence Criterion (HSIC). RI-Loss explicitly models noise structure by enforcing dependence between the residual sequence and a random time series, enabling more robust, noise-aware representations. Theoretically, we derive the first non-asymptotic HSIC bound with explicit double-sample complexity terms, achieving optimal convergence rates through Bernstein-type concentration inequalities and Rademacher complexity analysis. This provides rigorous guarantees for RI-Loss optimization while precisely quantifying kernel space interactions. Empirically, experiments across eight real-world benchmarks and five leading forecasting models demonstrate improvements in predictive performance, validating the effectiveness of our approach. The code is publicly available at: https://github.com/shang-xl/RI-Loss.

\end{abstract}


\section{Introduction}

Time series data consist of chronologically ordered observations, where forecasting involves identifying latent patterns and trends from historical data to predict future values. In practice, temporal datasets typically demonstrate considerable complexity, manifesting as nonlinear relationships, non-stationary distributions, long-term dependencies, and significant noise contamination. Deep learning architectures excel at modeling these intricate temporal structures due to their exceptional capacity for nonlinear function approximation, allowing for more precise characterization of dynamic temporal patterns ~\cite{Cao2018BRITSBR,Liu2021SCINetTS,2021GraphNN}. These advanced modeling capabilities have enabled transformative applications across multiple domains, including financial market prediction \cite{Informer}, industrial process monitoring \cite{li2023r}, disease spread forecasting~\cite{Matsubarakdd14,Li2025}, and meteorological condition estimation \cite{Angryk2020MultivariateTS}.

Recent advances in time series analysis have witnessed the emergence of transformer-based and multilayer perceptron (MLP)-based architectures as dominant approaches, with numerous studies demonstrating their state-of-the-art performance \cite{autoformer,zeng2022,xu2024fits,Wang2023rss,liu2024itransformer}. However, despite their widespread adoption of MSE as the standard loss function, this conventional choice presents two critical limitations for long-term forecasting tasks.

The first limitation of MSE lies in its point-wise error computation, which makes it susceptible to overfitting observational noise, thereby impairing model generalization. More critically, MSE’s formulation disregards the temporal dynamics of sequential data, neglecting essential interdependencies across time steps. These inherent weaknesses hinder the model’s capacity to capture and preserve long-range dependencies effectively.

To better capture temporal dependencies, we propose a novel loss function inspired by a fundamental principle from \cite{timeseries2019}: an ideal model should produce residuals that are statistically indistinguishable from random noise, indicating complete extraction of all predictable patterns. While directly quantifying the difference between residuals and random sequences is problematic due to noise's inherent stochasticity, we reformulate this intuition through statistical independence measures. Specifically, our approach maximizes the dependence between model residuals and random noise sequences, thereby compelling the model to extract more informative temporal features from the input data.

Building on these insights, we propose a novel forecasting framework that incorporates kernel-based nonparametric independence testing via a residual-informed loss function. Our approach centers on the Hilbert-Schmidt Independence Criterion (HSIC), which provides a theoretically sound measure of statistical dependence while remaining computationally efficient. The framework achieves two key innovations: it effectively captures nonlinear temporal dependencies between model residuals and random noise through kernel methods, and it establishes a new dependency-aware modeling paradigm that unifies statistical learning principles with deep neural forecasting architectures.

Theoretically, we establish rigorous learning guarantees by analyzing the self-bounding properties of HSIC and deriving novel generalization bounds through double-sample Rademacher complexity analysis. These theoretical contributions achieve dual significance: they formally validate the learnability of HSIC-Loss, and they provide a generalizable analytical framework for assessing the generalization capacity of dependence-aware loss functions. Our comprehensive empirical evaluation demonstrates consistent performance gains across diverse real-world datasets.

The main contributions of this paper are as follows:
\begin{itemize}
    \item Methodological Innovation: We propose RI-Loss, a novel residual-informed loss function that simultaneously addresses two fundamental limitations of MSE: susceptibility to observational noise and failure to capture temporal dependencies, establishing a new kernel-based learning paradigm for time series forecasting.

    \item Theoretical Advancement: We develop the first second-order Rademacher complexity bounds for the Hilbert-Schmidt Independence Criterion (HSIC) in time series analysis, providing rigorous generalization guarantees and creating a theoretical framework for analyzing dependence-aware loss functions.

    \item Empirical Validation: Through extensive experiments on eight diverse real-world benchmarks, we demonstrate that RI-Loss consistently enhances performance across state-of-the-art architectures (including Transformers and MLPs).
\end{itemize}
Complete proofs and additional experimental results are provided in the Appendix.

\section{Related Work}

\subsection{Time Series Forecasting}
Time series forecasting has undergone significant evolution, progressing from simple moving averages to classical statistical methods like Autoregressive Integrated Moving Average (ARIMA)~\cite{Box1970}, through machine learning approaches such as Support Vector Regression (SVR)~\cite{Smola2004}, to contemporary deep learning architectures. The deep learning revolution has introduced specialized models for temporal data processing: Recurrent Neural Networks (RNNs) effectively capture sequential dependencies through iterative time-step processing~\cite{Cao2018BRITSBR}; Convolutional Neural Networks (CNNs) extract localized temporal patterns via convolutional kernels~\cite{Hewage20,Liu2021SCINetTS}; Graph Neural Networks (GNNs) model complex multivariate interactions~\cite{2021GraphNN,Cao20}; Transformers utilize self-attention mechanisms to identify both long-range dependencies and cross-variable correlations~\cite{Informer,autoformer,liu2024itransformer}; while Multilayer Perceptrons (MLPs) provide computationally efficient solutions for resource-constrained scenarios~\cite{li2023r,zeng2022,xu2024fits}. Despite these architectural advances, most deep time series models continue to rely on Mean Squared Error (MSE) loss for parameter optimization, presenting limitations in handling noise and temporal relationships.

Recent work has proposed MSE alternatives via shape alignment \cite{DTW,leguen19dilate} and dependency modeling \cite{wang2025fredf,li2025k}, but these face computational complexity or residual pattern neglect. Our RI-Loss addresses both issues through statistical independence, offering noise-robust temporal modeling via kernel dependency measurement, and provable learning guarantees through HSIC-based theory.

\subsection{HSIC-based Learning Method}

The Hilbert-Schmidt Independence Criterion (HSIC) has emerged as a powerful kernel-based independence measure across multiple machine learning domains. In representation learning, it enhances latent variable interpretability through deep generative model regularization~\cite{li2021selfsupervised,Wang2020learning}. For causal inference, HSIC enables nonlinear conditional independence testing, overcoming linearity constraints in causal discovery~\cite{Li2024PHSICAR,Robert24}. Transfer learning benefits from its synergistic use with maximum mean discrepancy for cross-domain feature alignment~\cite{Wang2020}, while time series analysis leverages HSIC for nonlinear Granger causality detection in financial modeling~\cite{REN2020123245,Wang2025stabilizing}. Despite these successful applications, HSIC's potential for time series forecasting remains largely untapped, offering significant opportunities to advance temporal dependency modeling.

\section{Preliminaries}

\subsection{Time Series Forecasting Task}
%

Time series forecasting aims to predict future values based on historical observations. In our framework, given a lookback window $\boldsymbol{X}_t = (X_t, X_{t-1}, ..., X_{t-w+1})^\top \in \mathbb{R}^w$ of $w$ historical observations, we predict the $H$-step future trajectory $\boldsymbol{Y} = (Y_{t+1}, \dots, Y_{t+H})^\top \in \mathbb{R}^H$ through
$\hat{\boldsymbol{Y}} = f(\boldsymbol{X}_t)$,
where $f:\mathbb{R}^w \to \mathbb{R}^H$ is the forecasting function. For multivariate series with $d$-dimensional observations $\boldsymbol{x}_t \in \mathbb{R}^d$, this extends to predicting $\boldsymbol{Y} = {\boldsymbol{y}_{t+1}, \dots, \boldsymbol{y}_{t+H}} \in \mathbb{R}^{H \times d}$ from input $\boldsymbol{X}_t = {\boldsymbol{x}_t, \boldsymbol{x}_{t-1}, ..., \boldsymbol{x}_{t-w+1}} \in \mathbb{R}^{w \times d}$.

Generally, the forecasting quality is evaluated via the MSE:
\begin{equation}
\mathrm{MSE} = \frac{1}{H}\|\boldsymbol{Y} - \hat{\boldsymbol{Y}}\|_F^2 = \frac{1}{H}\sum_{k=1}^H \|\boldsymbol{y}_{t+k} - \hat{\boldsymbol{y}}_{t+k}\|_2^2,
\end{equation}
where $\| \cdot\|_F^2$ represents the square of the Frobenius norm for matrices.
The optimal solution of MSE corresponds to the conditional expectation of the true data, which means minimizing MSE guarantees that the mean of the predicted sequence matches that of the true sequence.

\subsection{Additive Noise Model for Time Series Observations}
Conventional time series prediction frameworks typically adopt an additive model structure of the form:
$\boldsymbol{Y} = h(\boldsymbol{X}_t) + \boldsymbol{\epsilon},$
where $\boldsymbol{Y}$ denotes the observation sequence, $h:\mathbb{R}^w \to \mathbb{R}^H$ represents the true underlying system dynamics that maps historical patterns to future values, and $\boldsymbol{\epsilon}$ is a zero-mean additive noise vector satisfying $\mathbb{E}[\boldsymbol{\epsilon}] = \boldsymbol{0}$. The noise process $\boldsymbol{\epsilon} = (\epsilon_{t+1},\ldots,\epsilon_{t+H})^\top$ further exhibits two key properties: (i) temporal uncorrelation with $\mathbb{E}[\epsilon_{t+i}\epsilon_{t+j}] = 0$ for all $i,j=1,\ldots,H$ where $i \neq j$, and (ii) signal independence with $\epsilon_{t+i} \perp h(\boldsymbol{X}_t)$ for all prediction horizons $i$.

Although we define the loss within the above basic framework, it naturally generalizes to more complex scenarios, including heteroscedastic noise structures $\epsilon_{t+i} \sim \mathcal{N}(0,\sigma_{t+i}^2)$ that accommodate time-varying volatility patterns commonly observed in financial and environmental time series, as well as state-dependent noise models of the form $Y_{t+i} = h(\boldsymbol{X}_t) + g(\boldsymbol{X}_t)\epsilon_{t+i}$ where the noise amplitude $g(\boldsymbol{X}_t)$ varies with the system state.

\subsection{Hilbert-Schmidt Independence Criterion}

The Hilbert-Schmidt Independence Criterion (HSIC) provides a kernel-based measure of statistical dependence between random variables. Formally, for variables $R \in \mathcal{R}$ and $S \in \mathcal{S}$ with joint distribution $\mathbb{P}_{R,S}$, using reproducing kernel Hilbert spaces (RKHS) $\mathcal{F}$ on $\mathcal{R}$ and $\mathcal{G}$ on $\mathcal{S}$, the HSIC is defined as the squared Hilbert-Schmidt norm of their cross-covariance operator.

As established by \cite{Gretton2005,Greenfeld}, the HSIC admits an important probabilistic interpretation: it represents the maximum achievable covariance between normalized functions in the respective RKHS. Mathematically, for $f \in \mathcal{F}$ and $g \in \mathcal{G}$, this is expressed as:
\begin{equation}
\mathbb{E}(\mathrm{HSIC}(R,S)) = \sup_{\substack{\|f\|_{\mathcal{F}} \leq 1, \|g\|_{\mathcal{G}} \leq 1}} (\mathbb{COV}[f(R),g(S)])^2,
\label{supcov}
\end{equation}
where $\mathbb{COV}$ denotes covariance. The HSIC possesses two key properties:
$\text{HSIC}(R,S) = 0$ if and only if $R$ and $S$ are independent, and
the magnitude of HSIC reflects the strength of dependence.
This formulation makes HSIC particularly powerful for detecting nonlinear dependencies that conventional correlation measures might miss, as it operates in high-dimensional feature spaces through the kernel trick.

By expanding the Hilbert-Schmidt norm through kernel-based inner products, we derive the following equivalent formulation of HSIC:

\begin{definition}[Population HSIC]
\label{def:kernel_hsic}
For random variables $R \in \mathcal{R}$ and $S \in \mathcal{S}$ with joint distribution $\mathbb{P}_{RS}$, let $k: \mathcal{R} \times \mathcal{R} \to \mathbb{R}$ and $l: \mathcal{S} \times \mathcal{S} \to \mathbb{R}$ be characteristic kernels, and let $(R',S')$ be an independent copy of $(R,S)$. The Hilbert-Schmidt Independence Criterion admits the following equivalent expression:
\begin{align}
\mathrm{HSIC}(R,S) &= \mathbb{E}_{RR'SS'}[k(R,R')l(S,S')] \notag \\[0.5ex]
&\quad + \mathbb{E}_{RR'}[k(R,R')]\mathbb{E}_{SS'}[l(S,S')] \notag \\[0.5ex]
&\quad - 2\mathbb{E}_{RS}\left[\mathbb{E}_{R'}[k(R,R')]\mathbb{E}_{S'}[l(S,S')]\right].
\end{align}
\end{definition}

The HSIC statistic has range $[0, \infty)$. This formulation reveals that HSIC measures the discrepancy between the joint embedding and the product of marginal embeddings.

The kernel-based approach provides three key advantages: it is nonparametric (making no assumptions about the dependence structure), flexible (accommodating mixed data types through kernel choice), and universal (detecting any measurable dependence when using characteristic kernels).

We now present the finite-sample estimator of HSIC, which operationalizes the population measure for practical computation.
\begin{definition}[Empirical HSIC Estimator]
\label{def:empirical_hsic}
Given an i.i.d. sample $\{(r_i,s_i)\}_{i=1}^n$, the U-statistic estimator of HSIC is given by:
\begin{align}
&\mathrm{HSIC} = \frac{1}{\binom{n}{2}} \sum_{i<j} k(r_i,r_j)l(s_i,s_j) \nonumber \\
& + \frac{1}{\binom{n}{4}} \sum_{i<j<k<l} k(r_i,r_j)l(s_k,s_l) \nonumber \\
& - \frac{2}{\binom{n}{3}} \sum_{i<j<k} \big[k(r_i,r_j)l(s_j,s_k) + k(r_j,s_k)l(s_i,s_j)\big],
\end{align}
where all sums are over distinct index tuples, the binomial coefficient $\binom{n}{m}$ represents the number of ways to choose $m$ elements from a set of $n$ distinct items, and $k,l$ are characteristic kernels.
\end{definition}

\section{Residual-Informed Loss}

\subsection{Decomposition of MSE Loss for Additive Model}
Under the additive noise model described earlier, where the observed outputs $\bm{Y}$ are corrupted by additive noise $\boldsymbol{\epsilon}$ with mean zero such that $\bm{Y} = h(\bm{X}_t) + \bm{\epsilon}$, we can analyze the relationship between the true MSE and the observed MSE. The true MSE measures the prediction error with respect to the noise-free target $h(\bm{X}_t)$:
$\mathrm{MSE}_{\mathrm{true}} = \frac{1}{H}\|h(\bm{X}_t) - \hat{\bm{Y}}\|_F^2.$
The observed MSE quantifies the discrepancy between predictions and noisy measurements:
$\mathrm{MSE}_{\mathrm{obs}} = \frac{1}{H}\|\bm{Y} - \hat{\bm{Y}}\|_F^2.$

Using the observation model $h(\bm{X}_t) = \bm{Y} - \bm{\epsilon}$, we can express the true MSE as:
\begin{align}
\small
\mathrm{MSE}_{\mathrm{true}} &= \frac{1}{H}\|(\bm{Y} - \bm{\epsilon}) - \hat{\bm{Y}}\|_F^2 \nonumber \\
&= \frac{1}{H}\left( \|\bm{Y} - \hat{\bm{Y}}\|_F^2 - 2\langle \bm{Y} - \hat{\bm{Y}}, \bm{\epsilon} \rangle + \|\bm{\epsilon}\|_F^2 \right) \nonumber \\
&= \mathrm{MSE}_{\mathrm{obs}} - \frac{2}{H}\langle \bm{Y} - \hat{\bm{Y}}, \bm{\epsilon} \rangle + \frac{1}{H}\|\bm{\epsilon}\|_F^2.
\label{dec}
\end{align}

Assuming the noise $\bm{\epsilon}$ is zero-mean and uncorrelated with the prediction error, taking expectations on both sides yields:
\begin{equation}
\mathbb{E}[\mathrm{MSE}_{\mathrm{true}}] = \mathbb{E}[\mathrm{MSE}_{\mathrm{obs}}] - \frac{1}{H}\mathbb{E} [\|\bm{\epsilon}\|_F^2],
\end{equation}
where $\mathbb{E} [\|\bm{\epsilon}\|_F^2]$ denotes the noise variance like term. This shows that the observed MSE systematically overestimates the true prediction error by exactly the noise power.
However, in general cases, since the prediction $\hat{\bm{Y}}$ is learned from noisy observations $\bm{Y}$, the prediction error $\hat{\bm{Y}}-\bm{Y}$ depends on the noisy $\bm{\epsilon}$, making the cross-term in Eq. (\ref{dec}) non-zero:

\begin{theorem}[Cross-Term Expectation for Linear Projection]
\label{Cross-Term-Expectation}
Let $\bm{Y} = (Y_{t+1}, \dots, Y_{t+H})^\top \in \mathbb{R}^H$ be a noisy observation vector generated by:
$\bm{Y} = h(\bm{X}_t) + \bm{\epsilon},$
where $h(\bm{X}_t): \mathcal{X} \to \mathbb{R}^H$ is a deterministic mapping, and the noise vector $\bm{\epsilon} \in \mathbb{R}^H$ satisfies:
$\mathbb{E}[\bm{\epsilon}] = \bm{0},
\mathbb{E}[\bm{\epsilon}\bm{\epsilon}^\top] = \sigma^2\bm{I}_H,
\bm{\epsilon} \sim \mathcal{N}(\bm{0}, \sigma^2\bm{I}_H).$
Here, $\bm{I}_H$ denotes the $H\times H$ identity matrix and $\sigma^2 > 0$ is the noise variance.
For any linear estimator $\hat{\bm{Y}} = \bm{P}\bm{Y}$ with projection matrix $\bm{P} \in \mathbb{R}^{H \times H}$, the expected normalized cross-term evaluates to:
\begin{equation}
\mathbb{E}\left[\frac{1}{H}\langle \bm{Y} - \hat{\bm{Y}}, \bm{\epsilon} \rangle\right] = \frac{\sigma^2}{H}\mathrm{tr}(\bm{I}_H - \bm{P}),
\label{crossterm}
\end{equation}
where $\mathrm{tr}(\cdot)$ represents the trace of a matrix, defined as the sum of its diagonal elements.
\end{theorem}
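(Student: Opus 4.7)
The plan is to decompose the residual $\bm{Y}-\hat{\bm{Y}}$ by substituting the observation model and the linear estimator, then exploit the zero-mean, Gaussian structure of $\bm{\epsilon}$ via the standard trace identity for quadratic forms. First I would write
\[
\bm{Y}-\hat{\bm{Y}} = (\bm{I}_H - \bm{P})\bm{Y} = (\bm{I}_H - \bm{P})\bigl(h(\bm{X}_t) + \bm{\epsilon}\bigr),
\]
and expand the inner product against $\bm{\epsilon}$ into a signal-noise term $\langle (\bm{I}_H - \bm{P})h(\bm{X}_t),\,\bm{\epsilon}\rangle$ and a pure-noise quadratic term $\langle (\bm{I}_H - \bm{P})\bm{\epsilon},\,\bm{\epsilon}\rangle = \bm{\epsilon}^\top(\bm{I}_H - \bm{P})\bm{\epsilon}$.

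Next I would dispatch the cross signal-noise term using the assumption $\bm{\epsilon}\perp h(\bm{X}_t)$ together with $\mathbb{E}[\bm{\epsilon}]=\bm{0}$: conditioning on $\bm{X}_t$ and using linearity of expectation yields $\mathbb{E}[\langle (\bm{I}_H - \bm{P})h(\bm{X}_t),\,\bm{\epsilon}\rangle]=0$. For the remaining quadratic form I would invoke the trace trick $\bm{\epsilon}^\top M \bm{\epsilon} = \mathrm{tr}(M\,\bm{\epsilon}\bm{\epsilon}^\top)$ with $M := \bm{I}_H - \bm{P}$, then swap expectation and trace by linearity, obtaining $\mathbb{E}[\bm{\epsilon}^\top M \bm{\epsilon}] = \mathrm{tr}\!\bigl(M\,\mathbb{E}[\bm{\epsilon}\bm{\epsilon}^\top]\bigr) = \sigma^2\,\mathrm{tr}(M)$ by the covariance assumption $\mathbb{E}[\bm{\epsilon}\bm{\epsilon}^\top]=\sigma^2 \bm{I}_H$. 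Dividing by $H$ delivers the claimed identity.

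The argument is short because both simplifying ingredients—independence of $\bm{\epsilon}$ from the signal and isotropy of its covariance—are assumed directly in the hypothesis, so there is no real analytic obstacle; the Gaussianity of $\bm{\epsilon}$ is not even needed for this particular identity, only its first two moments. The only step that deserves care is justifying the vanishing of the signal-noise cross term when $\bm{X}_t$ is itself random: I would state this explicitly as a conditioning step so that the reader sees the computation does not require $h(\bm{X}_t)$ to be deterministic in an absolute sense, merely measurable with respect to a $\sigma$-algebra independent of $\bm{\epsilon}$. Beyond that, the result is essentially the matrix-form analogue of the degrees-of-freedom identity familiar from linear regression, and the proof amounts to one substitution followed by the trace trick.
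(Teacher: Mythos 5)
Your proposal is correct and follows essentially the same route as the paper: substitute $\hat{\bm{Y}}=\bm{P}\bm{Y}$ and $\bm{Y}=h(\bm{X}_t)+\bm{\epsilon}$, kill the signal--noise cross term with $\mathbb{E}[\bm{\epsilon}]=\bm{0}$, and evaluate the quadratic form via the trace identity with $\mathbb{E}[\bm{\epsilon}\bm{\epsilon}^\top]=\sigma^2\bm{I}_H$. Your observations that Gaussianity is not actually needed and that the cross term should be handled by conditioning when $\bm{X}_t$ is random are both accurate refinements of the paper's argument, which simply treats $h(\bm{X}_t)$ as deterministic and takes the expectation over $\bm{\epsilon}$ alone.
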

The decomposition of MSE in Eq.(\ref{dec}) and Theorem \ref{Cross-Term-Expectation} motivate our design of a loss function that explicitly accounts for the dependency between prediction residuals and inherent noise. To explicitly penalize both prediction accuracy and residual-noise dependency, we first propose a naive loss combining the empirical MSE with a linear residual-noise alignment term:
\(
\mathcal{L}_{\mathrm{naive}} = \mathrm{MSE}_{\mathrm{obs}} - \lambda \langle \bm{Y} - \hat{\bm{Y}}, \bm{\epsilon} \rangle,
\)
where $\lambda$ is the trade-off parameter. However, this linear term only captures first-order dependencies.

\subsection{Definition of Residual-Informed Loss}
To generalize $\langle \bm{Y} - \hat{\bm{Y}}, \bm{\epsilon} \rangle$ to nonlinear relationships, we replace the inner product with the HSIC.
We analyze their relationship.
If \( R \) and \( S \) are centered random variables, that is, \( \mathbb{E}[R] = 0 \) and \(\mathbb{E}[S] = 0 \), then their covariance simplifies to $\mathbb{COV}(R, S) = \mathbb{E}[RS]$.
At this time, we can observe that \( \mathbb{E}[RS] \leq \mathbb{E}(\mathrm{HSIC}(R,S)) \). This inequality holds because
if \(\mathcal{F}\) and \(\mathcal{G}\) contain the identity functions (i.e., \( f(R)=R \in \mathcal{F} \), \( g(S)=S \in \mathcal{G} \)), then
$
\mathbb{E}(\mathrm{HSIC}(R,S))=\sup_{f \in \mathcal{F}, g \in \mathcal{G}} \mathbb{COV}[f(R),g(S)] \geq \mathbb{COV}[R,S].
$ Thus,
assuming the residual expectation is zero ($\mathbb{E}[\bm{Y} - \hat{\bm{Y}}] = \bm{0}$) and noise expectation is zero ($\mathbb{E}[\bm{\epsilon}] = \bm{0}$), we have $\mathbb{E}(\langle \bm{Y} - \hat{\bm{Y}}, \bm{\epsilon} \rangle )
\leq\mathbb{E}( \mathrm{HSIC}(\bm{Y} - \hat{\bm{Y}}, \bm{\epsilon}))$. This
yields the Residual-Informed (RI) loss:
\begin{definition}
For the observed output series $\bm{Y}$, the predicted series $\hat{\bm{Y}}$ and a noise series $\boldsymbol{\epsilon}$, RI loss is defined as:
\begin{equation}
\mathrm{RI}(\bm{Y},\hat{\bm{Y}}) = \mathrm{MSE}_{\mathrm{obs}} + \lambda \exp ( -\tau\cdot\mathrm{HSIC}(\bm{Y} - \hat{\bm{Y}}, \bm{\epsilon})),
\label{eq:RI}
\end{equation}
where the exponential transformation is applied to ensure that the loss function remains positive-valued and $\lambda$, $\tau$ are the trade-off parameter and transformation parameter, respectively.
\end{definition}


RI-Loss in Eq. (\ref{eq:RI})
requires the model to preserve interpretable signals by minimizing the MSE to ensure prediction accuracy,
while explicitly separating the noise structure by maximizing the dependence between the residuals and the noise, thereby forcing the model to push unexplainable variation (noise structure) into the residuals.
Mathematically, when the residual \( \bm{Y} - \hat{\bm{Y}} \) is highly correlated with the noise \(\bm{\epsilon}\), it indicates that the model has excluded as much of the noise-distributed components as possible from the prediction \( \hat{\bm{Y}} \), ensuring that the prediction result \( \hat{\bm{Y}} \) contains only the true signal that is independent of the noise.

\subsection{RI Advantage}

We investigate how RI loss varies with increasing noise ratios $\rho \in [0,1]$, comparing its behavior to conventional MSE. The results is demonstrated in Figure \ref{fig:trader-off}.
Our experiment generates a sinusoidal signal with noise $\bm{y}_{\text{true}} = \sin(\bm{x}) + \bm{\epsilon}$ where $\bm{x} \in [0,2\pi]$ contains 1000 points and $\bm{\epsilon} \sim \mathcal{N}(0,1)$ represents baseline Gaussian noise. Crucially, for each $\rho$ value (51 linearly spaced steps), we corrupt the underlying noise-free sinusoidal component by adding $\mathcal{N}(0,1)$ noise to randomly selected $\rho \times 100\%$ of the $\sin(\bm{x})$ values, producing $\bm{y}_{\text{noisy}}$. The metrics include $\mathrm{MSE} = \mathbb{E}[(\bm{y}_{\text{true}} - \bm{y}_{\text{noisy}})^2]$ and $\mathrm{RI} = \mathrm{MSE} + \exp(-\tau \cdot \mathrm{HSIC}(\bm{y}_\triangle,\bm{\epsilon}))$ ($\tau={50, 100}$, $\lambda=1$), where $\bm{y}_\triangle= \bm{y}_{\text{true}} - \bm{y}_{\text{noisy}}$. For HSIC computation, we employ Gaussian kernel matrices $k_{ij} = \exp(-|\bm{y}_{\triangle,i} -\bm{y}_{\triangle,j}|^2)$ for residual and $l_{ij} = \exp(-|\bm{\epsilon}_i - \bm{\epsilon}_j|^2)$ for noise.

As shown in Fig.~\ref{fig:trader-off}, the RI loss demonstrates a convex dependence on the noise ratio $\rho$, reflecting the fundamental trade-off between signal reconstruction and noise suppression. For about $\rho < 0.5$ in the left panel ($\rho < 0.6$ in the right panel), the system prioritizes signal fitting, where controlled noise retention improves reconstruction fidelity (manifested in decreasing MSE). When about $\rho > 0.5$ in the left panel ($\rho > 0.6$ in the right panel), the exponential HSIC penalty ($\exp(-\tau\cdot\mathrm{HSIC}) \to 1$) becomes dominant, enforcing noise rejection at the expense of elevated RI loss. The characteristic minimum at $\rho \approx 0.5$ in the left panel ($\rho \approx  0.6$ in the right panel) naturally emerges as the optimal operating point that balances these competing requirements.

 \begin{figure}[!htbp]
\centering
\vspace{-3mm}  
\includegraphics[width=0.21\textwidth,height=2.5cm]{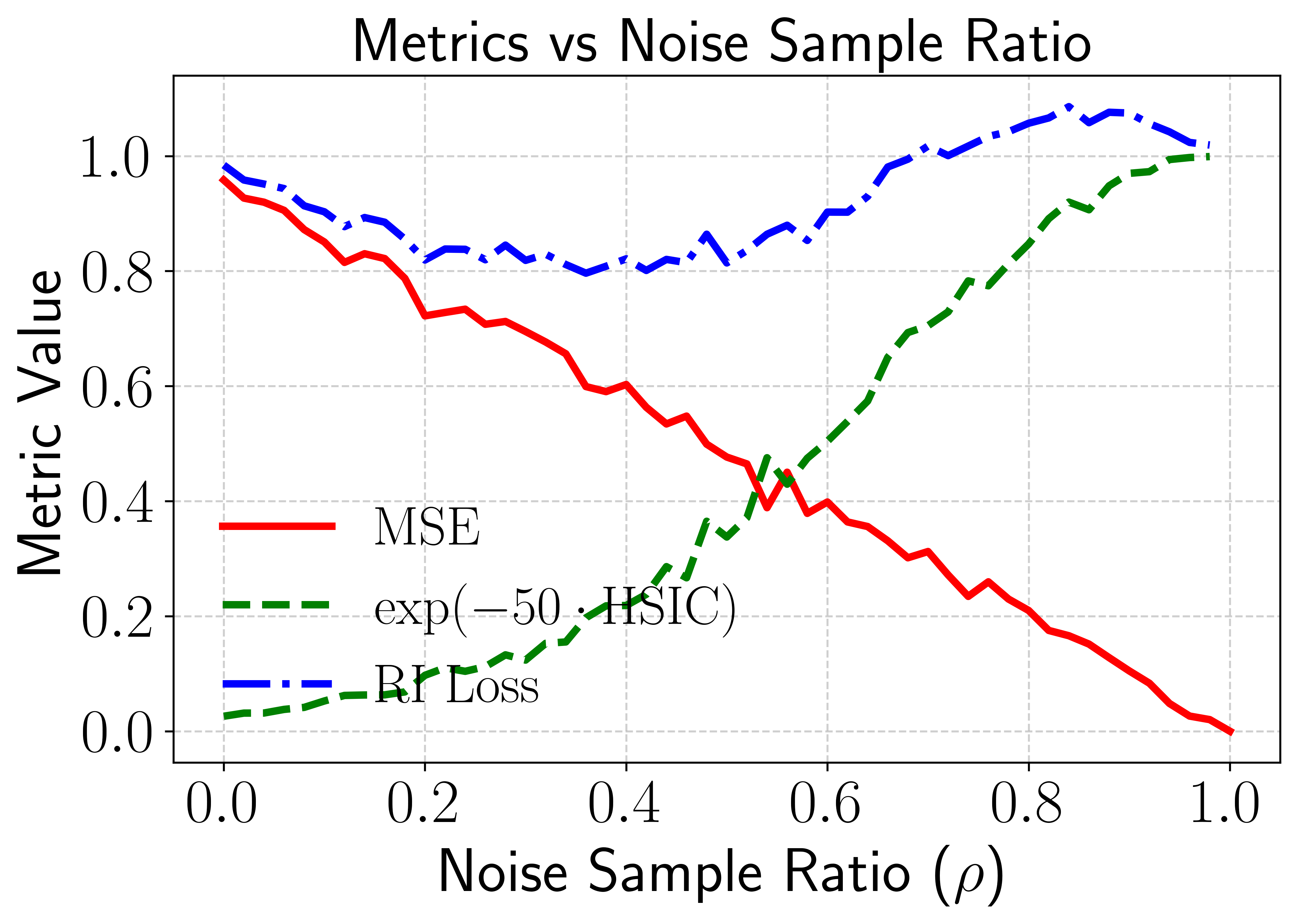}
\includegraphics[width=0.21\textwidth,height=2.5cm]{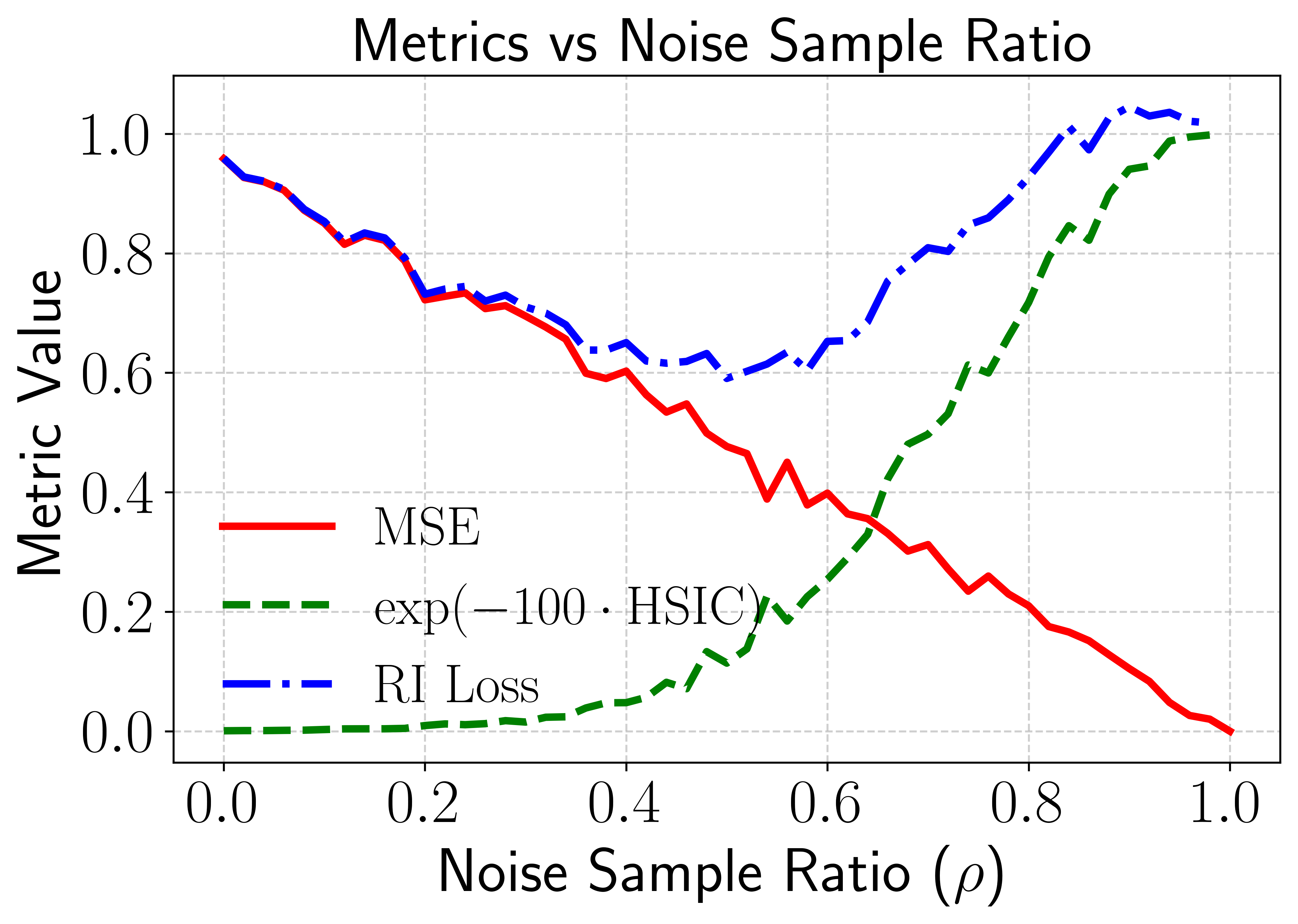}
\caption{The RI loss varies with the noise ratio.}
\label{fig:trader-off}
\vspace{-8mm}  
\end{figure}

\section{Theoretical Analysis of RI}
The major purpose of this paper is presenting a novel approach that enhances standard MSE optimization by incorporating a HSIC regularization term between model residuals and random noise. Hence, we establish a theoretical framework to investigate HSIC's generalization properties.

The current theoretical analysis operates under the baseline assumption of independent and identically distributed (i.i.d.) residuals and noise terms. This initial configuration provides crucial insights into HSIC's generalization properties.
In future work, we will investigate the generalization properties of both HSIC and MSE under more complex dependency structures.

The generalization gap measures how well empirical training error approximates the true expected error~\cite{Bartlett2001Rademacher}. A smaller gap indicates better generalization performance from training data to the underlying distribution. Theoretical bounds on this gap ensure reliable model behavior on unseen data. Our theorem shows that the empirical HSIC converges to the population HSIC as $n \to \infty$, providing generalization guarantees.

\begin{theorem}
\small
\label{hsicfinal}
Let $\sigma_1,...,\sigma_n$ are independent Rademacher random variables ($\mathbb{P}(\sigma_i = \pm 1) = \frac{1}{2}$), where $n$ is the number of samples.
For random variables $R \in \mathcal{R}$ and $S \in \mathcal{S}$ with joint distribution $\mathbb{P}_{RS}$, let $k: \mathcal{R} \times \mathcal{R} \to \mathbb{R}$ and $l: \mathcal{S} \times \mathcal{S} \to \mathbb{R}$ be characteristic kernels. Let $(R',S')$ be an independent and identically distributed (i.i.d.) copy of $(R,S)$.
Suppose that $ c_1(k) \leq k \leq c_2(k)$ and $ c_1(l) \leq l \leq c_2(l)$,
for $\delta>0$,
then the following holds:
\begin{align}
& \big|\mathrm{HSIC}(\{(r_i,s_i)\}_{i=1}^n)  - \mathbb{E}[\mathrm{HSIC}(R,S)]\big|  \\\notag
& \leq 3\sum_{j=k,l} c_2(j)\sum_{m=1}^3 \gamma_m(j).
\end{align}
Specifically, the above key functions for $f \in \{k,l\}$ are defined as:
\begin{align}
\label{zn2}
& \gamma_1(n,\delta;f) \\   \notag
&\quad=\frac{10(c_2(f)-c_1(f))}{n}  \ln \frac{2}{\delta}    \\   \notag
& \gamma_2(n,\delta,R_{\sigma};f)    \\   \notag
& \quad=2(c_2(f)-c_1(f))  \sqrt{
   2\ln \frac{2}{\delta} \bigg(\frac{\mathcal{R}_{\sigma}  }{2n}  + \frac{1}{n^2} \ln \frac{2}{\delta} \bigg)  } \\   \notag
& \gamma_3(n,\delta,W_{\sigma,\sigma},W_{\sigma,\alpha},W_{\sigma},F;f)   \\   \notag
&\quad=4 \bigg( \ln \frac{2}{\delta} \bigg( \frac{C_0}{ n-1 }
 \big( \mathbb{E}W_{\sigma,\sigma}
 +\sqrt{2} \mathbb{E}W_{\sigma,\alpha}
 \\ \notag
 & +\frac{2 ( \mathbb{E}W_{\sigma}+F) }{n}
+ \frac{\sqrt{8} F  n^{1/2}}{n^2} +  \frac{4F}{n^2}  \big)
 +\frac{1 }{n^2}  \ln\frac{2}{\delta} \bigg) \bigg)^{\frac{1}{2}}.
\end{align}
In the above functions, the first-order and second-order sample Rademacher complexities are:
\begin{align}
\mathcal{R}_{\sigma} &=\mathbb{E}_\sigma\left[ \sup_{f \in \mathcal{G}} \frac{1}{n} \sum_{i=1}^n \sigma_i f_1(X_i) \right], \\
W_{\sigma,\sigma}&=\sup_{f\in\mathcal{F}}\frac{1}{n^2}\bigg|\sum_{i,j}\sigma_{i}\sigma_{j}f_2\left(X_{i},X_{j}\right)\bigg|, \quad \\
 W_{\sigma,\alpha}&=\sup_{f\in\mathcal{F}}\sup_{\alpha:\left\|\alpha\right\|_{2}\leq1}\frac{1}{n^2}\sum_{i,j}\sigma_{i}\alpha_{j}f_2\left(X_{i},X_{j}\right),\\
 W_{\sigma}&=\sup_{f\in\mathcal{F},k=1,\ldots,n}\frac{1}{n}\bigg|\sum_{i=1}^{n}\sigma_{i}f_2\left(X_{i},X_{k}\right)\bigg|,\\ \quad
 F&=sup_{f\in \mathcal{F} } \left \| f_2 \right \| _{\infty },
\end{align}
where $f_1$ and $f_2$ respectively represent the centered kernels:
\begin{align}
f_1(X_i) &= \mathbb{E}[f(X_i,X)|X_i] - \mathbb{E}[f(X_i,X_j)], \\ \nonumber
f_2(X_i,X_j) &= f(X_i,X_j) - f_1(X_i) - f_1(X_j) \\ \nonumber
&\quad \quad - \mathbb{E}[f(X_i,X_j)].
\end{align}
\end{theorem}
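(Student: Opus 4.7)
My plan is to bound the deviation between the empirical U-statistic estimator of Definition 2 and the population quantity of Definition 1 by decomposing each of the three sub-sums (of orders 2, 3 and 4) and applying the Hoeffding (ANOVA) decomposition of the product kernel. Writing $h(X_i,X_j) = \mathbb{E}[h] + h_1(X_i) + h_1(X_j) + h_2(X_i,X_j)$ with $h_1$ and $h_2$ as defined in the statement (first-order projection and completely degenerate remainder), each sub-sum splits into three pieces: a deterministic constant, a linear sum $\frac{1}{n}\sum_i h_1(X_i)$, and a degenerate bilinear sum $\frac{1}{n^2}\sum_{i,j} h_2(X_i,X_j)$. Because every cross term in HSIC is a product of a $k$- and an $l$-kernel, the boundedness hypothesis $c_1(f)\le f \le c_2(f)$ lets me factor one kernel as an envelope $c_2(\cdot)$ and treat the other as the stochastic driver; summing these two contributions across the three terms of HSIC accounts for both the factor $3$ and the explicit separation $\sum_{j=k,l} c_2(j)$ in the stated bound.

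First, I would handle the constant/zeroth-order piece via a Bernstein-type inequality for bounded i.i.d.\ sums, with bounded differences of order $c_2(f)-c_1(f)$; the deviation $(c_2(f)-c_1(f))\ln(2/\delta)/n$ reproduces $\gamma_1$ up to the universal constant. Next, for the linear part I would symmetrize in the usual way, which replaces the empirical process by a Rademacher average of $f_1$ over the function class and therefore by the first-order sample Rademacher complexity $\mathcal{R}_\sigma$; a McDiarmid step on top, tuned to the envelope $c_2(f)-c_1(f)$, produces a square-root concentration of the form $\sqrt{\ln(2/\delta)(\mathcal{R}_\sigma/(2n) + n^{-2}\ln(2/\delta))}$, which is precisely $\gamma_2$.

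The main obstacle is the completely degenerate second-order sum, because a direct McDiarmid bound on a $U$-statistic gives only an $O(n^{-1/2})$ rate and misses the faster $1/(n-1)$ scaling that appears inside $\gamma_3$. To obtain the sharp rate I would invoke the decoupling inequality of de la Pe\~na--Gin\'e together with the Arcones--Gin\'e moment bound for degenerate $U$-processes, and then apply Talagrand's concentration inequality to the resulting bilinear process $\frac{1}{n^2}\sum_{i,j}\sigma_i\sigma'_j f_2(X_i,X_j)$. The four different sample Rademacher-type quantities $W_{\sigma,\sigma}$, $W_{\sigma,\alpha}$, $W_\sigma$ and the envelope $F$ arise naturally as the diagonal, bilinear (via duality with $\|\alpha\|_2\le 1$), marginal-column, and uniform components of the deviation in the Bousquet/Talagrand-type upper tail; carefully tracking the constants $C_0$, the $(n-1)^{-1}$ prefactor from the $U$-statistic normalization, and the cross-terms $\sqrt{2}$, $n^{-1}$, $n^{-3/2}F$, $n^{-2}F$ recovers the exact structure of $\gamma_3$.

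Finally, I would collect the three contributions for each kernel and assemble the total bound by a union bound across the two kernels and the three terms of HSIC at a common confidence parameter $\delta$, absorbing universal numerical constants into the leading $3$. Using $\|f\|_\infty\le c_2(f)$ to convert kernel-product envelopes into the stated $c_2(j)$ weights yields the claimed inequality $3\sum_{j=k,l}c_2(j)\sum_{m=1}^{3}\gamma_m(j)$. The hard step is clearly the derivation of $\gamma_3$: the symmetrization, decoupling, and moment-inequality chain for degenerate $U$-statistics must be carried out with explicit constants, whereas the constant and linear parts follow from textbook Bernstein/McDiarmid arguments combined with first-order symmetrization.
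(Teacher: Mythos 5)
Your plan reproduces the paper's architecture almost exactly: the factor $3$ and the $\sum_{j=k,l}c_2(j)$ envelope weights come from the same reduction the paper imports as Lemma~\ref{estimate} (bounding the HSIC deviation by the deviations of the two single-kernel second-order U-statistics, each weighted by the other kernel's supremum), after which each U-statistic is split by Hoeffding's decomposition, the linear projection is handled by symmetrization into $\mathcal{R}_\sigma$ plus a concentration step, and the completely degenerate remainder is handled by a Bernstein-type tail bound whose expectation is controlled by the moment inequality for degenerate U-processes that produces $W_{\sigma,\sigma}$, $W_{\sigma,\alpha}$, $W_\sigma$ and $F$. Two points where you diverge are worth flagging. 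First, for the concentration of both $\sup_f|T'_n|$ and $\sup_f|W'_n|$ the paper does not use McDiarmid or Talagrand/Bousquet; its technical core is verifying that these suprema are $(a,b)$-self-bounding (with $a=1$ and $a=2$ respectively) and then invoking the Bernstein-type inequality for self-bounding functions, which is what makes the expectation (rather than a worst-case variance proxy) appear inside the square roots of $\gamma_2$ and $\gamma_3$; your Talagrand route would plausibly reach a bound of the same shape but with different constants, and the uniform-boundedness verification you would need there is essentially the same bounded-differences computation the paper performs for the self-bounding property. Second, your attribution of $\gamma_1$ to a separate ``zeroth-order'' concentration step is a bookkeeping error: the constant term of the Hoeffding decomposition is exactly $\mathbb{E}[Z_n(f)]$ and cancels identically, so there is nothing to concentrate there; $\gamma_1$ is instead the aggregation of the additive $a\ln(2/\delta)$ terms produced by the two self-bounding Bernstein inequalities, scaled by $2(c_2-c_1)/n$ and $4(c_2-c_1)/n$ (whence the coefficient $10$). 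Neither issue breaks the proof, but the second means your three-way split of the error does not line up term-by-term with $\gamma_1,\gamma_2,\gamma_3$ as stated.
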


Theorem \ref{hsicfinal} makes a significant breakthrough in the non-asymptotic analysis of HSIC statistics, establishing for the first time a convergence upper bound that explicitly incorporates double-sample Rademacher complexities. Through an innovative decomposition of the error into three terms with clear statistical significance: the $\gamma_1$ term controls the deviation with typical $\mathcal{O}(1/n)$ convergence rate ($\mathcal{O}(\cdot)$ characterizes the asymptotic order); the $\gamma_2$ term's square root structure reflects classical sub-exponential convergence properties; and the most innovative $\gamma_3$ term employs sophisticated probability inequality techniques to constrain the influence of double-sample complexities (including $W_{\sigma,\sigma}$ and $W_{\sigma,\alpha}$) at the $\mathcal{O}(1/\sqrt{n})$ level.

This theoretical achievement not only maintains optimal convergence rates but, more importantly, rigorously quantifies higher-order interaction effects in kernel function spaces in HSIC analysis, providing a new theoretical framework for high-dimensional nonparametric independence tests and offering important guidance for kernel method selection and sample complexity estimation.
In the RI loss framework, $r$ denotes the residual vector $\bm{Y}- \bm{\hat{Y}}$, $s$ denotes the noise vector $\bm{\epsilon}$, and $n$ is the sequence length. Theorem~\ref{hsicfinal} establishes that the empirical $\mathrm{HSIC}(r, s)$ estimator exhibits finite-sample convergence with error decaying as $\mathcal{O}(1/\sqrt{n})$, dependent on kernel complexities $\mathcal{R}_{\sigma}$,$W_{\sigma,\sigma}$,$ W_{\sigma,\alpha}$,$ W_{\sigma}$, and achieves asymptotic consistency:
$\lim_{n\to\infty}  {\mathrm{HSIC}}_n(r,s) \overset{a.s.}{=} \mathbb{E}_{R,S}[\mathrm{HSIC}(R,S)]$.
This result provides fundamental guarantees for RI loss optimization, ensuring both theoretical soundness and practical reliability in large-sample regimes.

\section{RI-based Time Series Forecasting}

Our proposed RI-Loss demonstrates broad compatibility with existing deep time series architectures, as illustrated in Figure~\ref{fig_riloss}. The modular design enables seamless integration with both Transformer-based (e.g., Autoformer, Informer) and MLP-based (e.g., DLinear) forecasting frameworks. 
We employ a uniform distribution over the interval $[-1,1]$ as the noise distribution. 
Complete implementation details are provided in Algorithm~1 of Appendix~B.

\vspace{-8pt}
\begin{figure}[ht]
\centering
\includegraphics[width=0.43\textwidth]{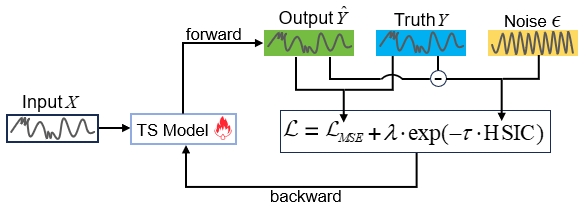}
\caption{RI-Loss Based Time Series Model.}
\label{fig_riloss}
\end{figure}
\vspace{-2em}

\section{Experimental Analysis}
%
%
We conduct a comprehensive evaluation of the proposed RI-Loss. Additional experimental results (e.g., the effect of the lookback window size and correlation structure visualization) are provided in the Appendix.

\begin{table*}[!ht]
\centering
\renewcommand{\arraystretch}{1.2}
\caption{The results of multivariate forecasting on eight datasets compared with baseline models. We use prediction lengths $H \in \{24, 36, 48, 60\}$ for ILI dataset and $H \in \{96, 192, 336, 720\}$ for the others. Ours indicates that the model uses RI-Loss as the loss function. The results with improved model performance are highlighted in bold. The last row indicates the number of top-1 times of each model.}
\label{Table_result}
\footnotesize 
\fontsize{8}{8}\selectfont
\setlength{\tabcolsep}{0.3pt} 
\begin{tabular*}{\hsize}{@{\extracolsep{\fill}}c|c|cccc|cccc|cccc|cccc|cccc}
\hline
\multicolumn{2}{c}{\centering Models}
& \multicolumn{4}{c}{Informer(2021)}
& \multicolumn{4}{c}{Autoformer(2021)}
& \multicolumn{4}{c}{Dlinear(2022)}
& \multicolumn{4}{c}{iTransformer(2024)}
& \multicolumn{4}{c}{RAFT(2025)} \\
\hline
\multicolumn{2}{c}{Metric} & MSE & Ours & MAE & Ours & MSE & Ours & MAE & Ours & MSE & Ours & MAE & Ours & MSE & Ours & MAE & Ours & MSE & Ours & MAE & Ours  \\
\hline
\multirow{4}{*}{\rotatebox[origin=c]{90}{\makecell[c]{ETTh1}}}
& 96  & 0.878 & \textbf{0.875} & 0.731 & \textbf{0.725} & 0.456 & \textbf{0.440}  & 0.458 & \textbf{0.446} & 0.384 & \textbf{0.346} & 0.405 & \textbf{0.384} &0.387	&\textbf{0.382}	&0.405 &\textbf{0.395}  & 0.387 & \textbf{0.378} & 0.414 & \textbf{0.402}
\\
& 192 & 1.013 & \textbf{0.970}  & 0.799 & \textbf{0.791} & 0.492 & \textbf{0.474} & 0.478 & \textbf{0.458} & 0.443 & \textbf{0.404} & 0.450  & \textbf{0.410} &0.441	&\textbf{0.437}	&0.436	&\textbf{0.427}  & 0.423 & \textbf{0.422} & 0.431 & \textbf{0.423}
 \\
& 336 & \textbf{1.172} & 1.225 & \textbf{0.873} & 0.888 & \textbf{0.506} & 0.522 & 0.490  & \textbf{0.484} & 0.447 & \textbf{0.443} & 0.448 & \textbf{0.439}  &0.491 &\textbf{0.490} &0.462 &\textbf{0.458}  & \textbf{0.458} & 0.460 & 0.445 & \textbf{0.439}
 \\
& 720 & 1.175 & \textbf{1.169} & 0.866 & \textbf{0.846} & \textbf{0.496} & 0.544 & \textbf{0.503} & 0.514 & 0.472 & \textbf{0.471} & 0.488 & \textbf{0.486} &0.508	&\textbf{0.490}	&0.493	&\textbf{0.477} & 0.463 & \textbf{0.462} & 0.464 & \textbf{0.458}
\\
\hline
\multirow{4}{*}{\rotatebox[origin=c]{90}{\makecell[c]{ETTh2}}}
& 96  & 3.340  & \textbf{2.299} & 1.515 & \textbf{1.221} & 0.375 & \textbf{0.336} & 0.410  & \textbf{0.375} & 0.297 & \textbf{0.279} & 0.359 & \textbf{0.332} &0.301	&\textbf{0.296}	&0.350 &\textbf{0.342} & 0.296 & \textbf{0.293} & 0.35 & \textbf{0.341}
\\
& 192 & 5.684 & \textbf{3.949} & 2.249 & \textbf{1.720}  & 0.451 & \textbf{0.417} & 0.450  & \textbf{0.425} & 0.373 & \textbf{0.355} & 0.410  & \textbf{0.386} &0.380	&\textbf{0.374}	&0.399	&\textbf{0.391} & 0.384 & \textbf{0.377} & 0.403 & \textbf{0.392}
\\
& 336 & 4.418 & \textbf{3.412} & 1.770  & \textbf{1.564} & 0.477 & \textbf{0.444} & 0.476 & \textbf{0.451} & 0.450  & \textbf{0.412} & 0.462 & \textbf{0.431}  &0.424 &\textbf{0.419} &0.432	&\textbf{0.426} & 0.426 & \textbf{0.415} & 0.442 & \textbf{0.426}
\\
& 720 & 3.249 & \textbf{3.094} & 1.528 & \textbf{1.516} & 0.476 & \textbf{0.445} & 0.489 & \textbf{0.461} & 0.696 & \textbf{0.582} & 0.595 & \textbf{0.534}  &0.430	&\textbf{0.422}	&0.447	&\textbf{0.439} & 0.434 & \textbf{0.417} & 0.460 & \textbf{0.440}
 \\
\hline
\multirow{4}{*}{\rotatebox[origin=c]{90}{\makecell[c]{ETTm1}}}
& 96  & 0.678 & \textbf{0.536} & 0.607 & \textbf{0.519} & 0.478 & \textbf{0.466} & 0.465 & \textbf{0.452} & 0.302 & \textbf{0.294} & 0.346 & \textbf{0.335}  &0.342 &\textbf{0.322} &0.377	&\textbf{0.350} & 0.329 & \textbf{0.316} & 0.371 & \textbf{0.359}
 \\
& 192 & 0.784 & \textbf{0.610}  & 0.669 & \textbf{0.569} & 0.549 & \textbf{0.534} & 0.497 & \textbf{0.476} & 0.337 & \textbf{0.333} & 0.368 & \textbf{0.358}  &0.383 &\textbf{0.376} &0.396 &\textbf{0.378} & 0.363 & \textbf{0.357} & 0.388 & \textbf{0.376}
 \\
& 336 & 1.011 & \textbf{0.807} & 0.786 & \textbf{0.684} & 0.516 & \textbf{0.480}  & 0.490  & \textbf{0.458} & 0.371 & \textbf{0.369} & 0.388 & \textbf{0.379} &0.418 &\textbf{0.412} &0.418 &\textbf{0.403} & 0.391 & \textbf{0.385} & 0.408 & \textbf{0.395}
\\
& 720 & 1.037 & \textbf{0.978} & 0.795 & \textbf{0.766} & 0.526 & \textbf{0.523} & 0.499 & \textbf{0.475} & 0.426 & \textbf{0.423} & 0.421 & \textbf{0.412} &0.487 &\textbf{0.478} &0.457 &\textbf{0.441} & \textbf{0.444} & 0.447 & 0.438 & \textbf{0.427}
\\
\hline
\multirow{4}{*}{\rotatebox[origin=c]{90}{\makecell[c]{ETTm2}}}
& 96  & 0.464 & \textbf{0.343} & 0.503 & \textbf{0.439} & 0.242 & \textbf{0.216} & 0.318 & \textbf{0.300}   & 0.170  & \textbf{0.164} & 0.262 & \textbf{0.247} &0.186 &\textbf{0.175} &0.272 &\textbf{0.252} & 0.177 & \textbf{0.174} & 0.266 & \textbf{0.257}
 \\
& 192 & 0.811 & \textbf{0.504} & 0.680  & \textbf{0.534} & 0.300   & \textbf{0.269} & 0.347 & \textbf{0.328} & 0.226 & \textbf{0.220}  & 0.302 & \textbf{0.287} &0.254 &\textbf{0.243} &0.314 &\textbf{0.299} & 0.243 & \textbf{0.238} & 0.308 & \textbf{0.299}
 \\
& 336 & 1.286 & \textbf{1.124} & 0.876 & \textbf{0.778} & 0.336 & \textbf{0.325} & 0.372 & \textbf{0.361} & 0.292 & \textbf{0.276} & 0.355 & \textbf{0.328} &0.316 &\textbf{0.306} &0.351 &\textbf{0.338} & 0.302 & \textbf{0.297} & 0.349 & \textbf{0.337}
 \\
& 720 & 3.949 & \textbf{3.269} & 1.497 & \textbf{1.353} & 0.432 & \textbf{0.415} & 0.423 & \textbf{0.411} & 0.406 & \textbf{0.375} & 0.424 & \textbf{0.395} &0.414 &\textbf{0.407} &0.407 &\textbf{0.398} & 0.402 & \textbf{0.396} & 0.404 & \textbf{0.395}
\\
\hline
\multirow{4}{*}{\rotatebox[origin=c]{90}{\makecell[c]{Exchange}}}
& 96  & 1.022 & \textbf{0.901} & 0.817 & \textbf{0.773} & 0.154 & \textbf{0.138} & 0.286 & \textbf{0.265} & 0.086 & \textbf{0.084} & 0.208 & \textbf{0.203} &0.086 &\textbf{0.085} &0.206 &\textbf{0.204} & 0.089 & \textbf{0.084} & 0.206 & \textbf{0.200}
\\
& 192 & 1.204 & \textbf{1.176} & 0.889 & \textbf{0.874} & 0.297 & \textbf{0.297} & 0.394 & \textbf{0.393} & 0.199 & \textbf{0.163} & 0.324 & \textbf{0.291} & 0.181 &\textbf{0.180} &0.304 &\textbf{0.301} & 0.192 & \textbf{0.177} & 0.309 & \textbf{0.297}
\\
& 336 & \textbf{1.577} & 1.674 & \textbf{1.001} & 1.024 & \textbf{0.457} & 0.511 & \textbf{0.501} & 0.531 & 0.345 & \textbf{0.265} & 0.445 & \textbf{0.387} &0.338 &\textbf{0.337} &0.422 &\textbf{0.421} & 0.370 & \textbf{0.343} & 0.442 & \textbf{0.423}
 \\
& 720 & \textbf{2.206} & 2.574 & \textbf{1.237} & 1.343 & \textbf{1.079} & 1.188   & \textbf{0.808} & 0.845 & \textbf{0.872} & 1.128 & \textbf{0.711} & 0.786 &\textbf{0.853} &0.863 &0.696 &\textbf{0.670} & 1.133 & \textbf{1.058} & 0.807 & \textbf{0.781}
\\
\hline
\multirow{4}{*}{\rotatebox[origin=c]{90}{\makecell[c]{ILI}}}
& 24  & 6.158 & \textbf{5.972} & 1.733 & \textbf{1.674} & 3.427 & \textbf{3.322} & 1.292 & \textbf{1.241} & 2.205 & \textbf{2.199} & 1.032 & \textbf{0.990} &\textbf{2.695} &2.739 &1.073 &\textbf{1.069} & 2.040 & \textbf{2.015} & 0.946 & \textbf{0.905}
\\
& 36 & 5.930  & \textbf{5.739} & 1.697 & \textbf{1.613} & 3.596 & \textbf{3.425} & 1.304 & \textbf{1.247} & 2.392 & \textbf{2.106} & 1.123 & \textbf{0.965} &2.563 &\textbf{2.531} &1.051 &\textbf{1.044} & 2.085 & \textbf{1.962} & 0.955 & \textbf{0.896}
\\
& 48 & \textbf{5.185} & 5.237 & 1.560  & \textbf{1.529} & 3.493 & \textbf{3.178} & 1.300   & \textbf{1.193} & 2.298 & \textbf{1.984} & 1.080  & \textbf{0.956} &2.567 &\textbf{2.455} &1.048 &\textbf{1.015} & 2.069 & \textbf{1.884} & 0.970 & \textbf{0.893}
 \\
& 60 & 5.296 & \textbf{5.091} & 1.566 & \textbf{1.516} & \textbf{2.847} & 3.019 & 1.145 & \textbf{1.150}  & 2.410  & \textbf{1.970}  & 1.115 & \textbf{0.966} &2.625 &\textbf{2.480} &1.068 &\textbf{1.029} & 2.065 & \textbf{1.859} & 0.975 & \textbf{0.902}
 \\
\hline
\multirow{4}{*}{\rotatebox[origin=c]{90}{\makecell[c]{Weather}}}
& 96  & 0.496 & \textbf{0.479} & 0.494 & \textbf{0.462} & \textbf{0.224} & \textbf{0.224}  & 0.303  & \textbf{0.285} & 0.144 & \textbf{0.143} & 0.210  & \textbf{0.179} &0.174 &\textbf{0.169} &0.214 &\textbf{0.201} & \textbf{0.189} & \textbf{0.189} & 0.246 & \textbf{0.237}
 \\
& 192 & 0.582 & \textbf{0.575} & 0.549 & \textbf{0.518} & 0.305 & \textbf{0.283} & 0.365 & \textbf{0.331} & 0.188  & \textbf{0.184} & 0.256 & \textbf{0.220} &0.224 &\textbf{0.219} &0.257 &\textbf{0.248} & 0.239 & \textbf{0.234} & 0.289 & \textbf{0.274}
 \\
& 336 & 0.643 & \textbf{0.609} & 0.590  & \textbf{0.540}  & 0.353 & \textbf{0.347} & 0.391 & \textbf{0.371} & 0.240 & \textbf{0.234} & 0.299 & \textbf{0.261} &0.283 &\textbf{0.278} &0.300 &\textbf{0.291} & 0.291 & \textbf{0.283} & 0.329 & \textbf{0.313}
\\
& 720 & 0.651 & \textbf{0.637} & 0.600 & \textbf{0.575} & 0.456 & \textbf{0.419} & 0.456  & \textbf{0.420} & 0.317 & \textbf{0.309} & 0.358 & \textbf{0.317}  &0.359 &\textbf{0.357} & 0.350 &\textbf{0.345} & 0.366 & \textbf{0.358} & 0.379 & \textbf{0.362}
\\
\hline
\multirow{4}{*}{\rotatebox[origin=c]{90}{\makecell[c]{Electricity}}}
& 96  & 0.289 & \textbf{0.281} & 0.381 & \textbf{0.364} & 0.202 & \textbf{0.197} & 0.317 & \textbf{0.307} & \textbf{0.140}  & 0.141 & 0.237 & \textbf{0.234} &\textbf{0.148} &0.149 &0.240 &\textbf{0.234} & \textbf{0.174} & 0.176 & 0.283 & \textbf{0.278}
\\
& 192 & 0.296 & \textbf{0.295} & 0.388 & \textbf{0.377} & 0.218 & \textbf{0.210}  & 0.331 & \textbf{0.316} & \textbf{0.154} & 0.155 & 0.250  & \textbf{0.246} &0.166 &\textbf{0.165} &0.258 &\textbf{0.250} & 0.172 & \textbf{0.170} & 0.275 & \textbf{0.271}
 \\
& 336 & \textbf{0.297} & 0.302 & 0.386 & \textbf{0.382} & \textbf{0.231} & \textbf{0.231} & 0.335 & \textbf{0.330}  & \textbf{0.169} & 0.170  & 0.268 & \textbf{0.262} &\textbf{0.178} &0.180 &0.271 &\textbf{0.266} & 0.184 & \textbf{0.181} & 0.284 & \textbf{0.278}
 \\
& 720 & \textbf{0.341} & 0.367 & \textbf{0.414} & 0.418 & 0.257 & \textbf{0.244} & 0.360  & \textbf{0.344} & \textbf{0.204} & \textbf{0.204} & 0.301 & \textbf{0.293} &\textbf{0.209} &\textbf{0.209} &0.299 &\textbf{0.291} & \textbf{0.218} & 0.220 & 0.304 & \textbf{0.300}
\\
\hline
\multicolumn{2}{c}{$1^{\mathrm{st}}$Count} & 6 & 26 & 4 & 28 & 7 & 27 & 3 & 29 & 5 & 28 & 1 & 31 & 7 & 26 & 1 & 31 & 5 & 28 & 0 & 32  \\
\hline
\end{tabular*}
\end{table*}

\begin{table}[t]
\centering
\caption{MSE comparison of DLinear at 720 steps under different noise levels. The dataset IDs follow the same order as in Table \ref{Table_result}.}
\label{Table_noise}
\fontsize{8}{8}\selectfont
\setlength{\tabcolsep}{3.5pt} 
\begin{tabular}{l|cccc|cccc}
\toprule
\multirow{2}{*}{ID} & \multicolumn{2}{c}{-3dB} & \multicolumn{2}{c}{0dB} & \multicolumn{2}{c}{3dB} & \multicolumn{2}{c}{10dB} \\
\cmidrule(lr){2-3} \cmidrule(lr){4-5} \cmidrule(lr){6-7} \cmidrule(lr){8-9}
 & MSE & Ours & MSE & Ours & MSE & Ours & MSE & Ours \\
\midrule
1 & 0.485 & \textbf{0.473} & 0.478 & \textbf{0.462} & 0.481 & \textbf{0.457} & 0.481 & \textbf{0.456} \\
2 & 0.693 & \textbf{0.579} & 0.657 & \textbf{0.565} & 0.649 & \textbf{0.558} & 0.613 & \textbf{0.551} \\
3 & 0.437 & \textbf{0.433} & 0.431 & \textbf{0.426} & 0.428 & \textbf{0.423} & 0.425 & \textbf{0.422} \\
4 & 0.471 & \textbf{0.391} & 0.464 & \textbf{0.384} & 0.459 & \textbf{0.380} & 0.455 & \textbf{0.375} \\
6 & 0.340 & \textbf{0.329} & 0.327 & \textbf{0.320} & 0.326 & \textbf{0.319} & 0.325 & \textbf{0.320} \\
\bottomrule
\end{tabular}
\vspace{-6pt} 
\end{table}

\begin{table}[t]
\centering
\fontsize{8}{8}\selectfont
\setlength{\tabcolsep}{3.5pt} 
\caption{Training time comparison (ms/iter) between MSE and RI-Loss at forecasting horizon 720 on the Weather dataset.}
\label{Table_efficiency}
\begin{tabular}{lccc}
\toprule
Method   & Dlinear & iTransformer & RAFT \\
\midrule
MSE      & 16.3 & 32.5         & 1386 \\
RI-Loss  & 30.5 & 40.2         & 1393 \\
\bottomrule
\end{tabular}
\vspace{-1.5em}
\end{table}

\begin{table}[ht]
\centering
\fontsize{8}{8}\selectfont
\setlength{\tabcolsep}{2pt}  
\caption{Comparison of different loss functions across five backbone models on the ETTh2 dataset.}
\label{Table_ablation}
\begin{tabular*}{\hsize}{@{\extracolsep{\fill}}c|c|cccc}
\toprule
\multirow{2}{*}{Models} & \multirow{2}{*}{Loss} & \multicolumn{4}{c}{ETTh2}  \\
\cmidrule(lr){3-6}
 & & 96 & 192 & 336 & 720  \\
\midrule
\multirow{4}{*}{Informer}
& RI-Loss        & \textbf{2.299}  & \textbf{3.949} & \textbf{3.412} & 3.094   \\
& MAE            & 2.444 & 3.995 & 3.713 & \textbf{3.083} \\
& MSE            & 3.340  & 5.684 & 4.418 & 3.249        \\
& Pearson$+$MSE  & 2.569 & 4.193 & 4.406 & 3.123       \\
\midrule
\multirow{4}{*}{Autoformer}
& RI-Loss        & \textbf{0.336}  & \textbf{0.417} & \textbf{0.444} & 0.445  \\
& MAE            & 0.337  & 0.420 & 0.449  & \textbf{0.443} \\
& MSE            & 0.375  & 0.451 & 0.477  & 0.476    \\
& Pearson$+$MSE  & 0.364  & 0.421 & 0.449  & 0.448      \\
\midrule
\multirow{4}{*}{DLinear}
& RI-Loss        & \textbf{0.279}  & \textbf{0.355} & \textbf{0.412} & \textbf{0.582}  \\
& MAE            & \textbf{0.279}  & 0.358  & 0.420  & 0.590     \\
& MSE            & 0.297  & 0.373  & 0.450  & 0.696     \\
& Pearson$+$MSE  & 0.305  & 0.404  & 0.494  & 0.714    \\
\midrule
\multirow{4}{*}{iTransformer}
& RI-Loss        & \textbf{0.296}  & \textbf{0.374} & \textbf{0.419} & \textbf{0.422}  \\
& MAE            & 0.298  & 0.377 & 0.421  & 0.427  \\
& MSE            & 0.301  & 0.380 & 0.424  & 0.430    \\
& Pearson$+$MSE  & 0.299  & 0.381 & 0.424  & 0.435    \\
\midrule
\multirow{4}{*}{RAFT}
& RI-Loss        & \textbf{0.293}  & \textbf{0.377} & \textbf{0.415} & \textbf{0.417}  \\
& MAE            & 0.294  & 0.379  & 0.420  & 0.422     \\
& MSE            & 0.296  & 0.384  & 0.426  & 0.434     \\
& Pearson$+$MSE  & 0.300  & 0.386  & 0.426  & 0.431    \\
\bottomrule
\end{tabular*}
\vspace{-8pt} 
\end{table}

\subsection{Experiments Settings}

\subsubsection{Datasets}
 We conduct extensive experiments on eight widely used real-world datasets, including ETT (ETTh1, ETTh2, ETTm1,
ETTm2), Exchange, ILI, Weather and Electricity. Please refer to Appendix C for data descriptions.

\subsubsection{Backbones}
We use five long-term time series forecasting models as our backbone models to comprehensively evaluate the effectiveness of RI-Loss. These include three Transformer-based models: Informer \cite{Informer}, Autoformer \cite{autoformer}, and iTransformer \cite{liu2024itransformer} and two MLP-based models: Decomposition-Linear (Dlinear) \cite{zeng2022} and Retrieval Augmented Time Series Forecasting (RAFT) \cite{han2025retrieval}.

\subsubsection{Evaluation Metrics}
Two widely used evaluation metrics in long-term time series forecasting, MSE and Mean Absolute Error (MAE), are employed to validate the proposed loss function. Lower values indicate better performance.

\subsubsection{Implementation Details}
We conduct our experiments using the official implementations of each backbone model from their respective repositories. To ensure a fair comparison, we adopt the original experimental setups and hyperparameter configurations when incorporating RI-Loss to enhance the performance of the backbone models. The hyperparameter $\lambda$ in RI-Loss is set to 10, and $\tau$ is set to 1. The kernel function is implemented as a Gaussian kernel with a bandwidth of 1. To mitigate the impact of randomness, all reported results are averaged over five independent runs.

\subsection{Main Results}

Table \ref{Table_result} presents a comparison of model performance using RI-Loss versus the traditional MSE Loss function.
Overall, RI-Loss yields better results. We analyzed the prediction error reduction across all datasets and found that RI-Loss leads to significant improvements in both the Informer and DLinear models. Specifically, it achieves an average reduction of 9.4\% in MSE and 6.9\% in MAE for Informer, and 5.2\% in MSE and 4.4\% in MAE for DLinear. Other models also show similar gains. Across 160 test cases, RI-Loss outperforms MSE loss in 133 instances, further demonstrating its robustness and broad applicability.

\begin{figure}[t]
\centering
\subfloat[MSE]{\includegraphics[width=0.24\textwidth,height=0.08\textwidth]{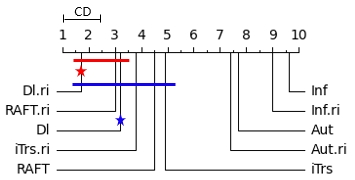}}
\subfloat[MAE]{\includegraphics[width=0.24\textwidth,height=0.08\textwidth]{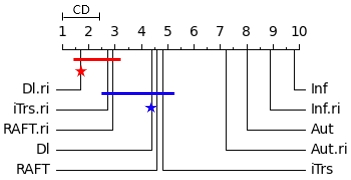}}
  \caption{The CD plots on two evaluation metrics (MSE and MAE) with a significance level at $ \alpha= 0.05$.}
  \label{fig_cd}
\vspace{-2em}
\end{figure}

To assess the statistical significance of performance differences between the two loss functions, we use the Friedman test. Critical Difference (CD) plots display the models’ average ranks, where lower ranks indicate better performance. In Figure \ref{fig_cd}, Inf, Aut, Dl, and iTrs represent Informer, Autoformer, DLinear, and iTransformer, respectively; suffix “.ri” denotes models trained with RI-Loss. Models A and B (highlighted by red and blue stars) correspond to the best-performing .ri model and the original model, serving as controls. Models not connected to A (red line) or B (blue line) differ significantly from these controls. Figure \ref{fig_cd} shows that all five models achieve improved rankings with RI-Loss. Further details on the Friedman test are in Appendix E.


\subsection{Noise Robustness}

To evaluate the robustness of our method under noisy conditions, we add zero-mean Gaussian noise to the input data at varying Signal-to-Noise Ratios (SNRs) and conduct experiments using DLinear as the backbone model. The results, summarized in Table~\ref{Table_noise}, show that our loss consistently outperforms the conventional MSE across all SNR levels. While MSE suffers significant performance degradation as noise increases, our method effectively mitigates the impact of noise, demonstrating better generalization and practical applicability for noisy time series forecasting.

\subsection{Running Cost}

To assess the computational overhead introduced by the RI loss, we adopt DLinear, iTransformer, and RAFT as backbone models and measure the per-iteration training time (ms/iter) on the Weather dataset under a forecasting horizon of 720, using both MSE loss and RI-loss.
As shown in Table \ref{Table_efficiency}, RI-Loss incurs a slight increase in computational time due to the higher overhead of HSIC compared to MSE. However, it consistently improves the prediction accuracy of backbone models. RI-Loss remains suitable for long-term time series forecasting on large-scale datasets. We include more results in the Appendix F.1.

\subsection{Ablation Study}

%
To further analyze the impact of different components in our loss function, we conducted four ablation experiments on the ETTh2 dataset: RI-Loss, MAE, MSE, and RI-Loss with HSIC replaced by the Pearson Correlation coefficient (PC). The results (Table \ref{Table_ablation}) show that replacing HSIC with PC degrades performance, as PC only captures linear relationships and cannot model nonlinear dependencies in complex time series. Meanwhile, RI-Loss outperforms both MSE and MAE, validating the effectiveness of our design. Further details are provided in Appendix F.2.

\subsection{Hyperparameter Sensitivity}\label{subsec4}

Our method introduces two key hyperparameters in the RI-Loss: the weight $\lambda$ and the temperature $\tau$.
We empirically set $\tau = 1$ to maintain a stable scaling of the HSIC term, and focus our analysis on $\lambda$. Using the iTransformer model, we evaluate $\lambda \in \{1, 5, 10, 15\}$. Figure \ref{Fig:lambda} shows that the prediction performance remains remarkably stable across this range, suggesting the model's robustness to $\lambda$ variations. Additional sensitivity analyses about $\tau $ are provided in Appendix F.3.

\begin{figure}[t]
\vspace{-0.5em}
\centering
\subfloat[ETTm1]{\includegraphics[width=0.15\textwidth,height=2cm]{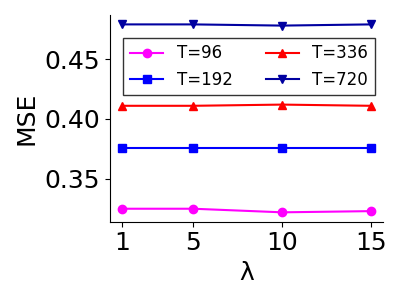}}
\subfloat[ETTh2]{\includegraphics[width=0.15\textwidth,height=2cm]{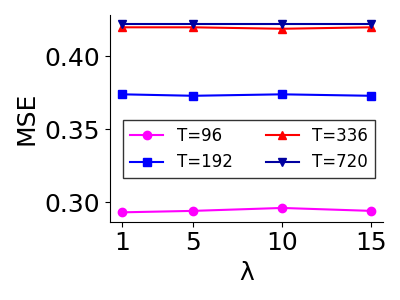}}
\subfloat[Weather]{\includegraphics[width=0.15\textwidth,height=2cm]{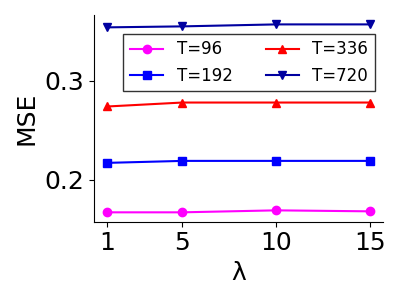}}

\caption{The impact of the hyperparameter $\lambda $ on iTransformer.}
\label{Fig:lambda}
\end{figure}

\subsection{Visualization of Prediction Results}
We use DLinear as the backbone model to visualize the prediction results. As shown in Figure \ref{figure_visual}, compared with the MSE loss, RI-Loss significantly improves the alignment with the ground truth. Predictions with RI-Loss align more closely with the ground truth’s statistical properties, yielding results with higher structural similarity. More visualizations are provided in Appendix F.4.

\begin{figure}[t]
\vspace{-8pt}
\centering
\subfloat[MSE]{\includegraphics[width=0.12\textwidth]{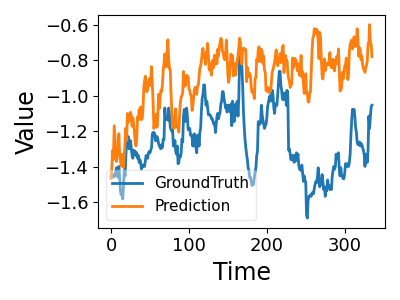}}
\subfloat[RI]{\includegraphics[width=0.12\textwidth]{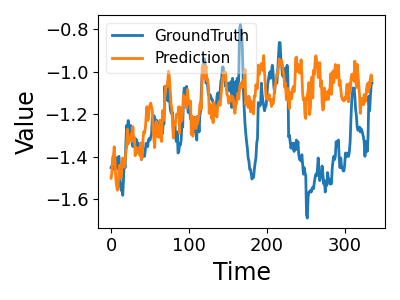}}
\subfloat[MSE]{\includegraphics[width=0.12\textwidth]{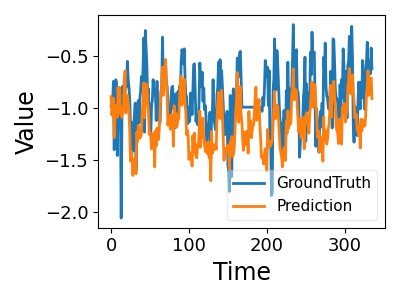}}
\subfloat[RI]{\includegraphics[width=0.12\textwidth]{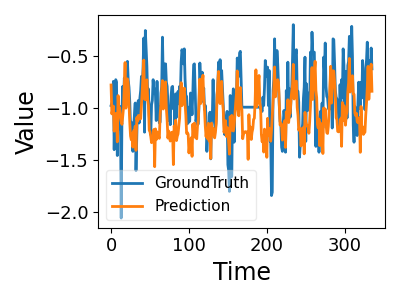}} \\
\caption{Forecasting visualization comparing RI-Loss and MSE loss as objective functions under the input-336-predict-336 settings. Blue lines are the ground truths and orange lines are the model predictions. Panels (a) and (b) correspond to ETTh1, with panels (c) and (d) representing ETTh2.}
\label{figure_visual}
\vspace{-16pt}
\end{figure}

\section{Conclusion}

Time series forecasting plays a pivotal role in various domains, yet existing models often rely on the MSE as a loss function, which suffers from critical limitations such as neglecting temporal dependencies and failing to handle inherent noise. To address these issues, this paper introduces RI-Loss, a novel loss function based on the Hilbert-Schmidt Independence Criterion (HSIC). RI-Loss leverages independence measures to improve the extraction of temporal patterns and mitigate noise interference. Additionally, theoretical learning bound for HSIC is proposed, providing theoretical support for its learnability. Experimental results on diverse benchmark datasets demonstrate the superior performance of RI-Loss, offering a significant advancement in time series forecasting.
Future directions include analyzing sophisticated noise correlations and residual dependencies to broaden the method and theoretical foundations.

%

\bibliography{aaai2026}


\clearpage
\newpage
\appendix

\maketitle

\section*{Appendix}
The appendix includes:
\begin{itemize}
  \item A Proofs.
  \item B Algorithm Framework.
  \item C Data Descriptions.
  \item D Backbone Models.
  \item E The Friedman Test.
  \item F More experimental results.
\end{itemize}

\section{A. Proofs}

This section includes:
\begin{itemize}
  \item A.1 Proof of Theorem \ref{crossterm}.
  \item A.2 Theoretical Analysis of HSIC Generalization: Proof Methodology Overview.
  \item A.3 Generalization Ability of HSIC.
  \item A.4 Bernstein-type Concentration Inequality.
  \item A.5 U-statistic and Related Work.
  \item A.6 Moment Bound for Degenerate U-Processes.
  \item A.7 Hoeffding's Decomposition of U-process.
  \item A.8 Generalization Bound for U-Processes.
\end{itemize}

\subsection{A.1 Proof of Theorem \ref{crossterm}}

\begin{theorem}[Cross-Term Expectation for Linear Projection]
Let $\bm{Y} = (Y_{t+1}, \dots, Y_{t+H})^\top \in \mathbb{R}^H$ be a noisy observation vector generated by:
$\bm{Y} = h(\bm{X}_t) + \bm{\epsilon},$
where $h(\bm{X}_t): \mathcal{X} \to \mathbb{R}^H$ is a deterministic mapping, and the noise vector $\bm{\epsilon} \in \mathbb{R}^H$ satisfies:
$\mathbb{E}[\bm{\epsilon}] = \bm{0},
\mathbb{E}[\bm{\epsilon}\bm{\epsilon}^\top] = \sigma^2\bm{I}_H,
\bm{\epsilon} \sim \mathcal{N}(\bm{0}, \sigma^2\bm{I}_H).$
Here, $\bm{I}_H$ denotes the $H\times H$ identity matrix and $\sigma^2 > 0$ is the noise variance.
For any linear estimator $\hat{\bm{Y}} = \bm{P}\bm{Y}$ with projection matrix $\bm{P} \in \mathbb{R}^{H \times H}$, the expected normalized cross-term evaluates to:
\begin{equation}
\mathbb{E}\left[\frac{1}{H}\langle \bm{Y} - \hat{\bm{Y}}, \bm{\epsilon} \rangle\right] = \frac{\sigma^2}{H}\mathrm{tr}(\bm{I}_H - \bm{P}),
\label{crossterm}
\end{equation}
where $\mathrm{tr}(\cdot)$ represents the trace of a matrix, defined as the sum of its diagonal elements.
\end{theorem}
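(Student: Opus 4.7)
The plan is to combine the observation model with the linearity of the estimator to reduce the cross term to a scalar plus a quadratic form in $\bm{\epsilon}$, whose expectations are then evaluated by elementary means. First I would rewrite the residual as $\bm{Y} - \hat{\bm{Y}} = (\bm{I}_H - \bm{P})\bm{Y}$ and substitute $\bm{Y} = h(\bm{X}_t) + \bm{\epsilon}$ to obtain
$$\langle \bm{Y} - \hat{\bm{Y}}, \bm{\epsilon}\rangle = \bm{\epsilon}^{\top}(\bm{I}_H - \bm{P})\, h(\bm{X}_t) + \bm{\epsilon}^{\top}(\bm{I}_H - \bm{P})\,\bm{\epsilon}.$$

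Next, I would take expectations term by term. The first summand is linear in $\bm{\epsilon}$ with deterministic coefficient $(\bm{I}_H - \bm{P})h(\bm{X}_t)$, so it vanishes by $\mathbb{E}[\bm{\epsilon}] = \bm{0}$. The second summand is a quadratic form in $\bm{\epsilon}$, which I would handle with the standard trace trick $\mathbb{E}[\bm{\epsilon}^{\top}\bm{A}\bm{\epsilon}] = \mathrm{tr}(\bm{A}\,\mathbb{E}[\bm{\epsilon}\bm{\epsilon}^{\top}])$ for a deterministic matrix $\bm{A}$; plugging in $\bm{A} = \bm{I}_H - \bm{P}$ and the isotropy assumption $\mathbb{E}[\bm{\epsilon}\bm{\epsilon}^{\top}] = \sigma^{2}\bm{I}_H$ reduces it to $\sigma^{2}\,\mathrm{tr}(\bm{I}_H - \bm{P})$. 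Dividing by $H$ yields the claimed identity.

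There is no substantive obstacle here; the argument rests only on linearity of expectation, the trace identity for quadratic forms in a zero-mean random vector with known covariance, and the fact that $\bm{P}$ and $h(\bm{X}_t)$ are treated as deterministic under the expectation over $\bm{\epsilon}$. A minor care-point worth flagging is that the Gaussianity stated in the hypotheses is not actually invoked: only the first two moments $\mathbb{E}[\bm{\epsilon}] = \bm{0}$ and $\mathbb{E}[\bm{\epsilon}\bm{\epsilon}^{\top}] = \sigma^{2}\bm{I}_H$ enter the computation, so the conclusion holds for any white-noise vector with these moments. A second point to flag is that the clean closed form relies on $\bm{P}$ being independent of $\bm{\epsilon}$; this is automatic for a fixed projection but would fail for nonlinear or data-adaptive estimators learned from $\bm{Y}$, and it is precisely this failure that motivates the need for a higher-order dependence measure such as HSIC in the subsequent loss design.
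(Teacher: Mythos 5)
Your proposal is correct and follows essentially the same route as the paper's proof: substitute $\hat{\bm{Y}} = \bm{P}\bm{Y}$ and $\bm{Y} = h(\bm{X}_t) + \bm{\epsilon}$, kill the linear term with $\mathbb{E}[\bm{\epsilon}] = \bm{0}$, and evaluate the quadratic form via the trace identity with $\mathbb{E}[\bm{\epsilon}\bm{\epsilon}^\top] = \sigma^2\bm{I}_H$. Your observation that Gaussianity is never actually used is accurate and a worthwhile aside, but it does not change the argument.
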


\begin{proof}
We begin by expanding the expectation of the inner product term:
\begin{align*}
&\mathbb{E}_{\bm{\epsilon} }\left[\langle \bm{Y} - \hat{\bm{Y}}, \bm{\epsilon} \rangle\right] \\
&\quad = \mathbb{E}_{\bm{\epsilon} }\left[\mathrm{tr}\left((\bm{Y} - \hat{\bm{Y}})^\top \bm{\epsilon}\right)\right]
\quad \text{\footnotesize (Since $\langle \bm{a}, \bm{b} \rangle = \mathrm{tr}(\bm{a}^\top\bm{b})$)} \\
&\quad = \mathbb{E}_{\bm{\epsilon} }\left[\mathrm{tr}\left((\bm{I}_H-\bm{P})\bm{Y}\bm{\epsilon}^\top\right)\right]
\quad \text{\footnotesize (Substituting $\hat{\bm{Y}} = \bm{P}\bm{Y}$)} \\
&\quad = \mathbb{E}_{\bm{\epsilon} }\left[\mathrm{tr}\left((\bm{I}_H-\bm{P})(h(\bm{X}_t) + \bm{\epsilon})\bm{\epsilon}^\top\right)\right]  \\
&\quad = \mathrm{tr}\left((\bm{I}_H-\bm{P})h(\bm{X}_t)\mathbb{E}_{\bm{\epsilon} }[\bm{\epsilon}^\top]\right) \\
&\qquad + \mathrm{tr}\left((\bm{I}_H-\bm{P})\mathbb{E}_{\bm{\epsilon} }[\bm{\epsilon}\bm{\epsilon}^\top]\right)
\quad \text{\footnotesize (Linearity of expectation)} \\
&\quad = \mathrm{tr}\left((\bm{I}_H-\bm{P})h(\bm{X}_t)\cdot \bm{0}^\top\right) \\
&\qquad + \mathrm{tr}\left((\bm{I}_H-\bm{P})\sigma^2\bm{I}_H\right)
\quad \text{\footnotesize (Using noise properties)} \\
&\quad = 0 + \sigma^2\mathrm{tr}(\bm{I}_H-\bm{P})
\quad \text{\footnotesize (Simplifying both terms)} \\
&\quad = \sigma^2\mathrm{tr}(\bm{I}_H-\bm{P}),
\end{align*}
By dividing both sides by $H$, we obtain the final expression.
\end{proof}

\subsection{A.2 Theoretical Analysis of HSIC Generalization: Proof Methodology Overview}

We first give the definition of Rademacher complexity~\cite{Bartlett2001Rademacher}.
\begin{definition}[Empirical Rademacher Complexity]
Given a function class $\mathcal{G} \subseteq \mathbb{R}^\mathcal{X}$ and a fixed sample $S = \{x_1,...,x_n\} \subseteq \mathcal{X}$, the empirical Rademacher complexity is defined as:
\[
\mathcal{R}_\sigma(\mathcal{G}) := \mathbb{E}_\sigma\left[ \sup_{g \in \mathcal{G}} \frac{1}{n} \sum_{i=1}^n \sigma_i g(x_i) \right],
\]
where $\sigma_1,...,\sigma_n$ are independent Rademacher random variables ($\mathbb{P}(\sigma_i = \pm 1) = \frac{1}{2}$).
\end{definition}

Denote
$\mathbb{E}[\mathrm{HSIC}(R,S)]$ as the population HSIC between $R$ and $S$, and
$\mathrm{HSIC}(R,S)$ as the empirical HSIC estimator.
Among existing bounds for HSIC, the most relevant one to our work ~\cite{Greenfeld} is as follows.
For random variables $R \in \mathcal{R}$ and $S \in \mathcal{S}$ with joint distribution $\mathbb{P}_{RS}$, let $k: \mathcal{R} \times \mathcal{R} \to \mathbb{R}$ and $l: \mathcal{S} \times \mathcal{S} \to \mathbb{R}$ be characteristic kernels. Let $(R',S')$ be an independent and identically distributed (i.i.d.) copy of $(R,S)$.
Suppose that $\sup_{r,r'} |k(r,r')| \leq C_1\quad \text{and} \quad \sup_{s,s'} |l(s,s')| \leq C_2 \quad \text{(a.s.)},$
then for any $n \geq 2$ and $\delta \in (0,1)$, with probability at least $1-\delta$:
\begin{align}
\label{hsicbound0}
&\big|\mathrm{HSIC}(R,S)- \mathbb{E}[\mathrm{HSIC}(R,S)] \big| \\ \notag
&\leq 3C_1\left(4L \mathcal{R}_\sigma(\mathcal{H}) + \mathcal{O}\left(\sqrt{\frac{\ln(1/\delta)}{n}}\right)\right)   \\ \notag
&\quad \quad + 3C_2 C_1 \sqrt{\frac{\ln(2/\delta)}{2n}},
\end{align}
where $L$ is the Lipschitz constant of the kernel function $K$ with respect to $r$, $\mathcal{O}(\cdot)$ characterizes the asymptotic order, the variable $R$ represents the residual $R = Y - h(X)$, $h \in \mathcal{H}$ and
$\mathcal{R}_n(\mathcal{H})$ is the Rademacher complexity of $\mathcal{H}$.

Although this bound provides basic sample complexity and single sample Rademacher complexity guarantees, its ignorance of second-order property of kernel complexity motivates our deeper theoretical analysis presented below.

To verify the learnability of the HSIC-based loss, we propose a learning bound for HSIC based on self-bounded property and second-order Rademacher complexity~\cite{10.1007/BFb0097492}.
To the best of our knowledge, we are the first to establish a bound for HSIC based on the two-order complexity function.
The proof of generalization bound for HSIC consists of several parts:   
\begin{itemize}
\item \textbf{Degenerate U-processes Construction}
We employ Hoeffding's decomposition to represent the kernel function as a sum of centered kernel and a degenerate U-processes. This decomposition exploits the crucial property of degenerate U-statistics: controllable moment analysis based on two order Rademacher complexity.

\item \textbf{Bounded Differences Characterization}
Through careful analysis of the degenerate components, we show their bounded differences property. This enables the derivation of Bernstein-type concentration inequalities.

\item \textbf{Second-Order Complexity Bound Application}
We develop the deviation bound based on Bernstein-type concentration inequality and construct the expectation bound based on second-order Rademacher complexity measures.
The bound fundamentally improve the characterization of kernel complexity with respect to two samples.
\end{itemize}

The core of our theoretical framework lies (1) the construction of degenerate U-processes by Hoeffding's decomposition, (2) the control of U-statistic estimator by Bernstein-type bound, and (3) the control of moment via second-order Rademacher complexity measures.

For better readability and clarity, each component is presented in a separate section.
For readers interested primarily in the key findings, the conclusions are firstly presented in the following section titled \textit{Generalization Ability of HSIC}.

\subsection{A.3 Generalization Ability of HSIC}

Lemma \ref{estimate} states that the estimation error bound of HSIC is related to the error gap between the expected kernel and the empirical kernel.
\begin{lemma}~\cite{Greenfeld}
\label{estimate}
For random variables $R \in \mathcal{R}$ and $S \in \mathcal{S}$ with joint distribution $\mathbb{P}_{RS}$, let $k: \mathcal{R} \times \mathcal{R} \to \mathbb{R}$ and $l: \mathcal{S} \times \mathcal{S} \to \mathbb{R}$ be characteristic kernels. Let $(R',S')$ be an independent and identically distributed (i.i.d.) copy of $(R,S)$.
Let $c_2(k)=\sup_{r,r^{\prime}}k(r,r^{\prime})$, $c_2(l)=\sup_{s,s^{\prime}}l(s,s^{\prime})$.
Suppose that $ c_1(k) \leq k \leq c_2(k)$ and $ c_1(l) \leq l \leq c_2(l)$, then the following holds:
\begin{align}
\label{hsic}
& |\mathrm{HSIC}(S,R)-\mathbb{E}(\mathrm{HSIC}\left(\{(s_{i},r_{i})\}_{i=1}^{n}\right))| \\\notag
&\le 3c_{2}(k) \big |\mathbb{E}_{r,r^{\prime}}[k(r,r^{\prime})]-\frac{1}{\binom{n}{2}}\sum_{i_{1}\neq i_{2}}k(r_{i_{1}},r_{i_{2}})\big|  \\\notag
&+ 3c_{2}(l) \big|\mathbb{E}_{s,s^{\prime}}[l(s,s^{\prime})]-\frac{1}{\binom{n}{2}}\sum_{i_{1}\neq i_{2}  }l(s_{i_{1}},s_{i_{2}})\big|,
\end{align}
\end{lemma}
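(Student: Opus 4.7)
The plan is to decompose both the population $\mathrm{HSIC}(R,S)$ (via Definition \ref{def:kernel_hsic}) and its empirical U-statistic counterpart (Definition \ref{def:empirical_hsic}) into three analogous pieces matching the three summands of HSIC, namely $T_1 = \mathbb{E}[k(R,R')l(S,S')]$, $T_2 = \mathbb{E}[k]\cdot\mathbb{E}[l]$, and $T_3 = \mathbb{E}_{RS}\!\left[\mathbb{E}_{R'}[k]\cdot\mathbb{E}_{S'}[l]\right]$ together with their empirical versions $\hat T_1,\hat T_2,\hat T_3$. I would then bound each $|\hat T_i - T_i|$ in terms of the marginal deviations $\Delta_k := |\mathbb{E}[k(R,R')] - \tfrac{1}{\binom{n}{2}}\sum_{i_1\neq i_2} k(r_{i_1},r_{i_2})|$ and $\Delta_l := |\mathbb{E}[l(S,S')] - \tfrac{1}{\binom{n}{2}}\sum_{i_1\neq i_2} l(s_{i_1},s_{i_2})|$, finally invoking the triangle inequality on $\mathrm{HSIC} = T_1 + T_2 - 2T_3$.

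The main algebraic workhorse is the product-telescoping identity $ab - \hat a\hat b = a(b - \hat b) + \hat b(a - \hat a)$, which converts a deviation of a product into a sum of one-factor deviations. Applied to the factored term $T_2 = \mathbb{E}[k]\mathbb{E}[l]$ against $\hat T_2 = \bar k\bar l$, it yields $|\hat T_2 - T_2| \leq \bar k\,\Delta_l + \mathbb{E}[l]\,\Delta_k$, which the boundedness hypotheses $|k|\leq c_2(k)$ and $|l|\leq c_2(l)$ immediately reduce to a sum of the form (constant)$\Delta_k$ + (constant)$\Delta_l$. The same two-step telescoping, applied to the nested-expectation cross term $T_3$ versus its empirical analogue (a normalized $3$-tuple sum of $k(r_i,r_j)l(s_j,s_k)$-type products), handles the second factored piece.

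The real technical obstacle is the first term $T_1 = \mathbb{E}[k(R,R')l(S,S')]$ versus $\hat T_1 = \tfrac{1}{\binom{n}{2}}\sum_{i<j} k(r_i,r_j) l(s_i,s_j)$: because $(R,S)$ and $(R',S')$ are jointly sampled, this expectation does \emph{not} factor into $\mathbb{E}[k]\cdot\mathbb{E}[l]$, and the direct telescoping identity does not apply on a pair-by-pair basis. My plan here is to insert a pivot quantity, rewriting
\begin{align*}
\hat T_1 - T_1
&= \tfrac{1}{\binom{n}{2}}\!\sum_{i<j}\!\bigl(k(r_i,r_j) - \mathbb{E}[k]\bigr)\,l(s_i,s_j) \\
&\quad + \mathbb{E}[k]\cdot\bigl(\bar l - \mathbb{E}[l]\bigr) + \mathbb{E}[k]\!\cdot\!\mathbb{E}[l] - T_1,
\end{align*}
and then bound the first summand by $c_2(l)\,\Delta_k$ after pulling out $|l|\leq c_2(l)$, the second by $c_2(k)\,\Delta_l$, and the last residual $\mathbb{E}[k]\mathbb{E}[l]-\mathbb{E}[kl]$ by an identical pivoted argument applied in expectation (which only costs a further factor of at most $c_2(k)$ or $c_2(l)$ times a marginal deviation).

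Collecting these pieces with the HSIC signs $(+1,+1,-2)$, the triangle inequality \emph{a priori} gives a coefficient of $4$ on each of $c_2(k)\Delta_k$ and $c_2(l)\Delta_l$. Achieving the sharper factor of $3$ stated in the lemma will require combining the $T_2$ and $T_3$ deviations before bounding—using that $\hat T_2 - 2\hat T_3$ telescopes against $T_2 - 2T_3$ more tightly than its individual pieces—so that one of the four triangle-inequality contributions is absorbed. This cancellation is the step I expect to be most delicate, and it is where the specific structure of the HSIC formula (as opposed to a generic product bound) is actually used.
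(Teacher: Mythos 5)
The paper itself offers no proof of this lemma --- it is imported verbatim from \cite{Greenfeld} --- so the only thing to assess is whether your argument stands on its own, and it does not. The decisive failures are exactly where you place your pivot for $T_1$. (a) Your first summand is $\frac{1}{\binom{n}{2}}\bigl|\sum_{i<j}(k(r_i,r_j)-\mathbb{E}[k])\,l(s_i,s_j)\bigr|$; pulling out $|l|\le c_2(l)$ yields $c_2(l)\cdot\frac{1}{\binom{n}{2}}\sum_{i<j}|k(r_i,r_j)-\mathbb{E}[k]|$, an $L_1$-type deviation with the absolute value \emph{inside} the sum. This does not reduce to $c_2(l)\,\Delta_k$, where the absolute value sits \emph{outside}: the signed deviations can cancel in $\Delta_k$ while remaining individually large and correlated with $l(s_i,s_j)$. (b) Your final residual $\mathbb{E}[k]\mathbb{E}[l]-T_1=-\mathbb{COV}\bigl(k(R,R'),l(S,S')\bigr)$ is a \emph{fixed, generally nonzero population constant}; no "pivoted argument in expectation" can dominate it by the sample-dependent quantities $\Delta_k,\Delta_l$, which tend to zero. (c) $T_3=\mathbb{E}_{RS}\bigl[\mathbb{E}_{R'}[k(R,R')]\,\mathbb{E}_{S'}[l(S,S')]\bigr]$ is not a "factored piece": the outer expectation is over the joint law of $(R,S)$, so the product-telescoping identity applies only to $T_2$. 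These are not issues of the constant $3$ versus $4$; they mean the joint terms $\hat T_1-T_1$ and $\hat T_3-T_3$ simply are not controlled by $\Delta_k$ and $\Delta_l$. Indeed the inequality as written fails pathwise: take $R,S$ independent uniform on $\{-1,+1\}$ with $k(r,r')=rr'$, $l(s,s')=ss'$, so the population HSIC is $0$, and consider the (positive-probability) sample with $r_i=s_i$ and balanced signs; then $\Delta_k=\Delta_l=\tfrac{1}{n-1}$ while the empirical HSIC is $1+\mathcal{O}(1/n)$, so the left side is near $1$ and the right side is $\tfrac{6}{n-1}$.

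The root cause is that the statement you were handed does not match the result in \cite{Greenfeld} it cites. The provable version --- and the one your term-by-term strategy naturally delivers --- is a Lipschitz-continuity bound for HSIC in one argument, comparing two residual kernels $k^{(1)},k^{(2)}$ with the $l$-argument held fixed, with the absolute value \emph{inside}: each of the three HSIC terms changes by at most $c_2(l)$ times the pointwise change in $k$, averaged, giving a right-hand side of the form $3c_2(l)\,\mathbb{E}\bigl|k^{(1)}(r,r')-k^{(2)}(r,r')\bigr|$ (resp.\ $\frac{3c_2(l)}{n^2}\sum_{i,j}|k^{(1)}_{ij}-k^{(2)}_{ij}|$ empirically). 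If your goal is a usable bridge to Theorem~\ref{hsicfinal}, either prove and use that $\mathbb{E}|{\cdot}|$-inside version, or abandon the reduction to $\Delta_k,\Delta_l$ entirely and bound each of the three U-statistics $\hat T_1,\hat T_2,\hat T_3$ directly against its own population mean by the concentration machinery of Appendix A.8.
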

where $\binom{n}{2}$ represents the number of unique ways to select 2 distinct elements from a set of
$n$ elements without regard to order.

A Rademacher complexity based bound~\cite{Greenfeld} has been proposed to bound the estimation error of the kernel. However,
since the kernel function is defined over two samples, while Rademacher complexity is defined with respect to a single sample. The existing Rademacher complexity-based bound for HSIC~\cite{Greenfeld} requires improvement. Theorem \ref{kernelbound} in Appendix A.8.3 compensates for this deficiency by considering
different two order complexity functions.

Based on Theorem \ref{kernelbound}, we have:
\begin{theorem}
\label{hsicfinal}
Let $\sigma_1,...,\sigma_n$ are independent Rademacher random variables ($\mathbb{P}(\sigma_i = \pm 1) = \frac{1}{2}$), where $n$ is the number of samples.
For random variables $R \in \mathcal{R}$ and $S \in \mathcal{S}$ with joint distribution $\mathbb{P}_{RS}$, let $k: \mathcal{R} \times \mathcal{R} \to \mathbb{R}$ and $l: \mathcal{S} \times \mathcal{S} \to \mathbb{R}$ be characteristic kernels. Let $(R',S')$ be an independent and identically distributed (i.i.d.) copy of $(R,S)$.
Suppose that $ c_1(k) \leq k \leq c_2(k)$ and $ c_1(l) \leq l \leq c_2(l)$,
for $\delta>0$,
then the following holds:
\begin{align}
& \big|\mathrm{HSIC}(\{(r_i,s_i)\}_{i=1}^n)  - \mathbb{E}[\mathrm{HSIC}(R,S)]\big|  \\\notag
& \leq 3\sum_{j=k,l} c_2(j)\sum_{m=1}^3 \gamma_m(j).
\end{align}
Specifically, the above key functions for $f \in \{k,l\}$ are defined as:
\begin{align}
\label{zn2}
& \gamma_1(n,\delta;f) \\   \notag
&\quad=\frac{10(c_2(f)-c_1(f))}{n}  \ln \frac{2}{\delta}    \\   \notag
& \gamma_2(n,\delta,\mathcal{R}_{\sigma} ;f)    \\   \notag
& \quad=2(c_2(f)-c_1(f))  \sqrt{
   2\ln \frac{2}{\delta} \bigg(\frac{\mathcal{R}_{\sigma}  }{2n}  + \frac{1}{n^2} \ln \frac{2}{\delta} \bigg)  } \\   \notag
& \gamma_3( n,\delta,W_{\sigma,\sigma} ,W_{\sigma,\alpha} ,W_{\sigma} ,F;f)   \\   \notag
&\quad=4 \bigg( \ln \frac{2}{\delta} \bigg( \frac{C_0}{ n-1 }
 \big( \mathbb{E}W_{\sigma,\sigma}
 +\sqrt{2} \mathbb{E}W_{\sigma,\alpha}
 \\ \notag
 & +\frac{2 ( \mathbb{E}W_{\sigma} +F) }{n}
+ \frac{\sqrt{8} F  n^{1/2}}{n^2} +  \frac{4F}{n^2}  \big)
 +\frac{1 }{n^2}  \ln\frac{2}{\delta} \bigg) \bigg)^{\frac{1}{2}}.
\end{align}
In the above functions, the first-order and second-order sample Rademacher complexities are:
\begin{align}
\mathcal{R}_{\sigma} &=\mathbb{E}_\sigma\left[ \sup_{f \in \mathcal{G}} \frac{1}{n} \sum_{i=1}^n \sigma_i f_1(X_i) \right], \\
W_{\sigma,\sigma} &=\sup_{f\in\mathcal{F}}\frac{1}{n^2}\bigg|\sum_{i,j}\sigma_{i}\sigma_{j}f_2\left(X_{i},X_{j}\right)\bigg|, \quad \\
 W_{\sigma,\alpha} &=\sup_{f\in\mathcal{F}}\sup_{\alpha:\left\|\alpha\right\|_{2}\leq1}\frac{1}{n^2}\sum_{i,j}\sigma_{i}\alpha_{j}f_2\left(X_{i},X_{j}\right),\\
 W_{\sigma} &=\sup_{f\in\mathcal{F},k=1,\ldots,n}\frac{1}{n}\bigg|\sum_{i=1}^{n}\sigma_{i}f_2\left(X_{i},X_{k}\right)\bigg|,\\ \quad
 F&=sup_{f\in \mathcal{F} } \left \| f_2 \right \| _{\infty },
\end{align}
where $f_1$ and $f_2$ respectively represent the centered kernels:
\begin{align}
f_1(X_i) &= \mathbb{E}[f(X_i,X)|X_i] - \mathbb{E}[f(X_i,X_j)], \\ \nonumber
f_2(X_i,X_j) &= f(X_i,X_j) - f_1(X_i) - f_1(X_j) \\ \nonumber
&\quad \quad - \mathbb{E}[f(X_i,X_j)].
\end{align}
\end{theorem}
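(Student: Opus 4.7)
\textbf{Proof proposal for Theorem \ref{hsicfinal}.}
The plan is to reduce the HSIC deviation to a pair of kernel U-statistic deviations and then attack each via Hoeffding's decomposition, treating the linear (non-degenerate) part and the degenerate part separately. First, I would invoke Lemma \ref{estimate}, which already bounds $|\mathrm{HSIC}-\mathbb{E}[\mathrm{HSIC}]|$ by $3c_2(k)\,\Delta_k + 3c_2(l)\,\Delta_l$, where $\Delta_f = |\mathbb{E}_{x,x'}[f(x,x')] - \binom{n}{2}^{-1}\sum_{i\neq j} f(x_i,x_j)|$. Thus the entire theorem reduces to showing that, for every bounded kernel $f$ with $c_1(f)\le f\le c_2(f)$, the deviation $\Delta_f$ is controlled by $\gamma_1(f)+\gamma_2(f)+\gamma_3(f)$ with probability at least $1-\delta$; once this is established, a union bound over the two kernels $k,l$ yields the stated inequality.

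The next step, applied to each kernel $f$ separately, is Hoeffding's decomposition (Section A.7 of the appendix) of $f(x_i,x_j) - \mathbb{E}[f]$ into $f_1(x_i)+f_1(x_j)+f_2(x_i,x_j)$, where $f_1$ is centred and $f_2$ is completely degenerate, meaning $\mathbb{E}[f_2(x,x')\mid x]=0$ almost surely. This splits $\Delta_f$ into a linear empirical-process term $\tfrac{2}{n}\sum_i f_1(x_i)$ plus a degenerate U-process term $\binom{n}{2}^{-1}\sum_{i<j} f_2(x_i,x_j)$. For the linear part, I would apply a Bernstein/McDiarmid concentration tailored to bounded random variables: the range $c_2(f)-c_1(f)$ produces the deterministic $\gamma_1$ term via the $\tfrac{1}{n}\ln(2/\delta)$ contribution, while symmetrisation against Rademacher signs, followed by a self-bounded-variance argument, produces the first-order empirical Rademacher complexity $\mathcal{R}_\sigma$ and the $\sqrt{\cdot}$ shape in $\gamma_2$.

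The heart of the argument, and the step I expect to be the main obstacle, is the degenerate U-process contribution that yields $\gamma_3$. Here I would import a Bernstein-type moment/concentration inequality for degenerate U-statistics (Section A.6 of the appendix, in the spirit of Arcones--Giné and de la Peña--Giné) whose upper bound is expressed through three complexity functionals of the degenerate kernel class: the double-Rademacher chaos $W_{\sigma,\sigma}$, the mixed Rademacher/$\ell_2$-vector term $W_{\sigma,\alpha}$, the slice-wise one-sample term $W_\sigma$, and the sup-norm envelope $F$. The key technical manoeuvre is a decoupling plus symmetrisation that replaces $\sum_{i<j} f_2(x_i,x_j)$ by $\sum_{i,j}\sigma_i\sigma_j f_2(x_i,x_j')$ on an independent copy, then applies Talagrand-type entropy/chaining to separate the $(W_{\sigma,\sigma},W_{\sigma,\alpha},W_\sigma,F)$ scales at the respective rates $(n-1)^{-1}$, $n^{-1}(n-1)^{-1}$, $n^{-2}\sqrt{n}$, and $n^{-2}$ that appear inside $\gamma_3$. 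Inverting the resulting tail bound against $\delta$ yields the square-root envelope $\gamma_3$; plugging $\gamma_1+\gamma_2+\gamma_3$ back into Lemma \ref{estimate} completes the proof.

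The two delicate points to verify carefully are (i) that the bounded-differences constant for the linear piece is exactly $(c_2(f)-c_1(f))/n$, which is what allows the $\gamma_1,\gamma_2$ coefficients to appear with the stated constants; and (ii) that the degenerate kernel $f_2$ indeed satisfies the hypotheses of the moment inequality in Section A.6 (complete degeneracy and a finite envelope $F=\sup_{f\in\mathcal{F}}\|f_2\|_\infty$), which lets us invoke Theorem \ref{kernelbound} to obtain $\gamma_3$ uniformly over the kernel class. Once these two ingredients are in place, assembling the bound is just bookkeeping.
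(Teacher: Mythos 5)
Your proposal follows essentially the same route as the paper: reduce via Lemma \ref{estimate} to two kernel U-statistic deviations, apply Hoeffding's decomposition, control the linear part by a Bernstein-type bound plus symmetrisation (giving $\gamma_1,\gamma_2$), and control the degenerate part through the moment inequality of Appendix A.6 in terms of $W_{\sigma,\sigma},W_{\sigma,\alpha},W_\sigma,F$ (giving $\gamma_3$). The only minor divergence is at the last step: rather than inverting the moment/tail bound for the degenerate U-process directly, the paper first establishes that $\sup_f|W'_n|$ is $(2,0)$-self-bounding and applies the Bernstein-type Corollary \ref{elementary}, using the moment bound of Theorem \ref{thm:U-process} (at $q=2$) only to control the expectation appearing inside that bound --- which is what produces the specific $\sqrt{\ln(2/\delta)(\cdots)}$ shape of $\gamma_3$.
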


Next, we show the involved foundational concepts in the following four sections.
The deviation bound of U-Processes is presented in the section titled
\textit{Generalization Bound for U-Processes} and the final section titled \textit{Concentration Inequality of U-process} gives the proof of Theorem \ref{kernelbound}, the major result of this paper.

\subsection{A.4 Bernstein-type Concentration Inequality}\label{subsecB1}

Recent advances have established sharper Bernstein-type concentration inequalities for functions satisfying the $(a,b)$-self-bounding property \citep{boucheron2000sharp,McDiarmid2006ConcentrationFS,9765714,2000AboutTC}. These bounds typically outperform the Hoeffding-type bounds, as the self-bounding property provides more refined control through its dual constraints: local boundedness of individual variations and global dependence on the function's current value, rather than relying on uniform worst-case bounds.
\subsubsection{A.4.1 The property of \textbf{(a,b)-Self-bounding}}
\begin{definition}(\textbf{(a,b)-Self-bounding}~\cite{McDiarmid2006ConcentrationFS})
\label{def:self-bound}
Let $Z_n: \mathcal{X}^n \to \mathbb{R}$ be a measurable function. $Z_n$ has the (a,b)-self-bounding property if for any $\overline{X} = (X_1,\dots,X_n) \in \mathcal{X}^n$ consisting of independent random variables, there exist constants $a,b > 0$ and functions $Z_{n-1}^{(t)}: \mathcal{X}^{n-1} \to \mathbb{R}$ for $t = 1,\dots,n$ such that:
\begin{align}
& 0 \leq Z_n(\overline{X}) - Z_{n-1}^{(t)}(\overline{X}^{(t)}) \leq 1, \\
& \sum_{t=1}^n \left(Z_n(\overline{X}) - Z_{n-1}^{(t)}(\overline{X}^{(t)})\right) \leq a Z_n(\overline{X}) + b,
\end{align}
where $\overline{X}^{(t)} = (X_1,\dots,X_{t-1},X_{t+1},\dots,X_n)$ denotes the set with the $t$-th component removed.
\end{definition}

This definition describes a special class of functions $Z_N$ where, when any single input component is removed, the change in the function's value is both pointwise bounded (by at most 1) and globally constrained (with total variation not exceeding $aZ_N + b$). Such self-bounding property ensures controlled fluctuations of $Z_N$, making it particularly useful for proving concentration inequalities and exponential tail probability bounds in stochastic processes.
\subsubsection{A.4.2 Bernstein-type inequality}
\begin{theorem}\label{Bernstein}~\cite{McDiarmid2006ConcentrationFS})
If $Z$ is an $(a,b)$-self-bounding function, then for all $t > 0$, we have:
\begin{align}
\mathbb{P}\{Z \geq \mathbb{E}[Z] + t\} &\leq \exp\left(-\frac{t^2}{2(a\mathbb{E}[Z] + b + at)}\right), \\
\mathbb{P}\{Z \leq \mathbb{E}[Z] - t\} &\leq \exp\left(-\frac{t^2}{2(a\mathbb{E}[Z] + b + t/3)}\right).
\end{align}
\end{theorem}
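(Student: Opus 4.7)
The plan is to employ the entropy method via Herbst's argument, which is the canonical route for obtaining Bernstein-type tail bounds from self-bounding conditions. The strategy is to establish a differential inequality for the moment generating function $F(\lambda) = \mathbb{E}[e^{\lambda Z}]$, solve it to obtain a Bernstein-type bound on the cumulant generating function, and then apply a Chernoff step with optimal $\lambda$.

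First, I would invoke the modified logarithmic Sobolev inequality for functions of independent random variables: for any $\lambda \in \mathbb{R}$,
$$\mathrm{Ent}\bigl[e^{\lambda Z}\bigr] \leq \sum_{t=1}^n \mathbb{E}\Bigl[e^{\lambda Z}\,\psi\bigl(-\lambda(Z - Z_{n-1}^{(t)})\bigr)\Bigr],$$
where $\psi(x) = e^x - x - 1$ and $\mathrm{Ent}[U] = \mathbb{E}[U\log U] - \mathbb{E}[U]\log \mathbb{E}[U]$. Since the $(a,b)$-self-bounding property guarantees $0 \leq Z - Z_{n-1}^{(t)} \leq 1$, I can exploit the elementary bound $\psi(-u) \leq u^2/2$ for $u \in [0,\lambda]$ on each summand. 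Combined with the global constraint $\sum_t (Z - Z_{n-1}^{(t)}) \leq aZ + b$, this yields
$$\mathrm{Ent}\bigl[e^{\lambda Z}\bigr] \leq \tfrac{\lambda^2}{2}\,\mathbb{E}\bigl[(aZ + b)\,e^{\lambda Z}\bigr] = \tfrac{\lambda^2}{2}\bigl(a F'(\lambda) + b F(\lambda)\bigr).$$
Using the identity $\mathrm{Ent}[e^{\lambda Z}] = \lambda F'(\lambda) - F(\lambda)\log F(\lambda)$, this reduces to a first-order differential inequality for $G(\lambda) = \log F(\lambda)$.

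Next, I would integrate. Introducing the auxiliary function $H(\lambda) = \bigl(G(\lambda) - \lambda \mathbb{E}[Z]\bigr)/\lambda$ (with $H(0)$ defined by continuity), the inequality rearranges into a linear ODE whose solution, integrated from $0$, gives $G(\lambda) - \lambda \mathbb{E}[Z] \leq (a\mathbb{E}[Z] + b)\lambda^2/(2(1 - a\lambda))$ on the appropriate range of $\lambda > 0$. This is the Bernstein-type MGF bound. Applying Markov's inequality to $e^{\lambda(Z - \mathbb{E}[Z])}$ and optimizing over $\lambda \in (0, 1/a)$ produces the upper tail with denominator $2(a\mathbb{E}[Z] + b + at)$. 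For the lower tail, the same scheme is carried out with $\lambda < 0$; because only the positive increments $Z - Z_{n-1}^{(t)}$, not $Z_{n-1}^{(t)} - Z$, are controlled by the self-bounding definition, the analysis is asymmetric and ultimately produces the sharper denominator $2(a\mathbb{E}[Z] + b + t/3)$.

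The main obstacle is the bookkeeping in the ODE step and the asymmetric handling of the two tails. Getting a clean Bernstein denominator rather than a looser sub-Gaussian or sub-gamma tail requires the correct substitution for $H(\lambda)$ and careful tracking of second-order terms in the Taylor expansion of $\psi$; in particular, the $1/3$ constant in the lower tail arises from a distinct bound on $\psi(-\lambda x)$ valid for $\lambda < 0$, $x \in [0,1]$ (namely $\psi(-u) \leq u^2/(2(1 - u/3))$ for $u \geq 0$), and extracting this constant precisely is the subtlest part of the calculation.
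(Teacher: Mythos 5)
The paper does not prove this statement at all: Theorem~\ref{Bernstein} is imported verbatim from \cite{McDiarmid2006ConcentrationFS} as a black-box tool, so there is no in-paper argument to compare yours against. Your proposal reconstructs the standard proof from the literature (the entropy method of Boucheron--Lugosi--Massart and McDiarmid--Reed): modified logarithmic Sobolev inequality, the bound $\psi(-u)=e^{-u}+u-1\le u^2/2$ together with $(Z-Z_{n-1}^{(t)})^2\le Z-Z_{n-1}^{(t)}$ from the $[0,1]$ increment condition, the global constraint $\sum_t(Z-Z_{n-1}^{(t)})\le aZ+b$, the Herbst-type ODE for $\log\mathbb{E}[e^{\lambda Z}]$ yielding a sub-gamma cumulant bound with variance factor $a\mathbb{E}[Z]+b$ and scale $a$, and Chernoff optimization. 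That is exactly the right architecture and it does deliver the stated upper tail. One small correction for the lower tail: with $\lambda<0$ the argument of $\psi$ in the modified LSI is $-\lambda(Z-Z_{n-1}^{(t)})=|\lambda|(Z-Z_{n-1}^{(t)})\ge 0$, so the inequality you need is the Bernstein-type bound at \emph{positive} arguments, $\psi(v)\le v^2/\bigl(2(1-v/3)\bigr)$ for $0\le v<3$; your parenthetical writes this as a bound on $\psi(-u)$, which has the sign reversed (at negative arguments $\psi$ already satisfies the stronger bound $u^2/2$, which would not produce the $t/3$ term). With that sign fixed, the asymmetric treatment of the two tails and the provenance of the $t/3$ constant are correctly identified, and the sketch is a faithful account of how the cited result is actually proved.
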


The bounds in Theorem \ref{Bernstein} characterize the deviation probabilities of $Z$ around its mean $\mathbb{E}[Z]$, showing exponential decay that depends on three key factors: the self-bounding parameters $(a,b)$, the expectation $\mathbb{E}[Z]$, and the deviation size $t$. Based on Theorem \ref{Bernstein}, we have:

\begin{Corollary} (\textbf{Corollary of Theorem \ref{Bernstein}})  \label{elementary}
If $Z$ is an $(a,b)$-self-bounding function with $a > 1/3$, then for all $\delta > 0$, with probability at least $1-\delta$, we have:
\begin{align}
&|Z- \mathbb{E}[Z]| \leq a \ln (2/\delta) \\ \nonumber
&+ \sqrt{2\ln(2/\delta)(a\mathbb{E}[Z] + b) +a^2 (\ln(2/\delta))^2 }.
\label{deviationbound}
\end{align}
\end{Corollary}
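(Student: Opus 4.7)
The plan is to obtain the two-sided deviation inequality by applying the two one-sided Bernstein-type bounds of Theorem \ref{Bernstein} through a union bound, and then solving an algebraic inequality for the critical deviation. Concretely, I would first combine the upper and lower tail statements: for the event $|Z-\mathbb{E}[Z]|\leq t$ to hold with probability at least $1-\delta$, it suffices that each one-sided tail probability is bounded by $\delta/2$. This substitution is what introduces the factor $\ln(2/\delta)$ appearing throughout the final bound.

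Next I would argue that the hypothesis $a>1/3$ makes the upper tail the binding constraint. Since $at>t/3$ for every $t>0$, the denominator of the upper tail exponent, $a\mathbb{E}[Z]+b+at$, strictly exceeds the denominator $a\mathbb{E}[Z]+b+t/3$ of the lower tail exponent, so the upper tail probability bound is the larger of the two. Consequently, any $t$ that forces the upper tail bound below $\delta/2$ automatically controls the lower tail, and the problem reduces to picking the smallest $t$ satisfying
\begin{equation*}
\exp\!\left(-\frac{t^2}{2(a\mathbb{E}[Z]+b+at)}\right)\leq \frac{\delta}{2}.
\end{equation*}

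Finally I would solve this inequality in closed form. Taking logarithms and rearranging yields the quadratic condition $t^2-2a\ln(2/\delta)\,t-2\ln(2/\delta)(a\mathbb{E}[Z]+b)\geq 0$, whose positive root is $t^{\star}=a\ln(2/\delta)+\sqrt{a^2\ln^2(2/\delta)+2\ln(2/\delta)(a\mathbb{E}[Z]+b)}$. This matches the right-hand side of the claim exactly. The only delicate point in the argument is justifying that the positive branch of the quadratic is the correct one to take (immediate, because $t$ plays the role of a deviation and must be nonnegative) and explicitly invoking $a>1/3$ to legitimize using a single formula for both tails; without this hypothesis one would instead need to retain $t/3$ in the lower tail denominator, leading to a slightly different closed form. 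I do not expect a serious obstacle: given Theorem \ref{Bernstein}, the proof is essentially a union bound followed by inverting a Bernstein-type tail, and the self-bounding structure has already absorbed all of the probabilistic difficulty.
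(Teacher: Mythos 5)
Your proposal is correct and follows essentially the same route as the paper's own proof: set each one-sided tail to $\delta/2$, use $a>1/3$ to note that $at>t/3$ so the upper-tail bound dominates and a single choice of $t$ controls both tails, then invert the resulting quadratic in $t$ and take the positive root, which reproduces the stated expression exactly. The paper states this more tersely ("amplify the probability upper bound" of the second inequality to match the first), but the logic is identical.
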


\begin{proof}
Based on Theorem \ref{Bernstein}, we set the exponential probability bound of the first inequality as $\delta/2$. After solving for $t$,
we obtain:
$ 
t = a \ln (2/\delta) + \sqrt{2\ln(2/\delta)(a\mathbb{E}[Z] + b) +a^2(\ln(2/\delta))^2 }.
$ 
For the second inequality, we first amplify the probability upper bound as the same as the first one.
Then, combining the two inequalities, we obtain the upper bound of the deviation.
 \end{proof}

Corollary \ref{elementary} demonstrates that: for any $(a,b)$-self-bounding function $Z$ with $a>\frac{1}{3}$, its value concentrates around the expectation with high probability ($1-\delta$), where the deviation bound depends on $\mathbb{E}[Z]$, the self-bounding parameters $(a,b)$, and the confidence level $\delta$.

\subsection{A.5 U-statistic and Related Work}

Let $X_1, X_2, \ldots, X_n$ be independent and identically distributed (i.i.d.) random variables from a distribution $P$, and let $h:\mathcal{X}^m \to \mathbb{R}$ be a symmetric kernel function of order $m$ (i.e., $h$ is invariant under any permutation $\sigma$ of its arguments: $h(x_1,\ldots,x_m) = h(x_{\sigma(1)},\ldots,x_{\sigma(m)})$). Suppose $h$ is measurable and satisfies $\mathbb{E}_P[|h(X_1,\ldots,X_m)|] < \infty$, with
\[
\theta(P) = \mathbb{E}_P[h(X_1,\ldots,X_m)].
\]
\begin{definition}[U-statistic]
The U-statistic of degree $m$ associated with $h$ and $\theta(P)$ is defined as:
\[
U_n = \binom{n}{m}^{-1} \sum_{(i_1,\ldots,i_m) \in C_m^n} h(X_{i_1},\ldots,X_{i_m}),
\]
where $C_m^n$ denotes the set of all combinations of $m$ distinct indices $(i_1,\ldots,i_m)$ from $\{1,\ldots,n\}$.
\end{definition}

The corresponding U-statistic $U_n$ is the canonical unbiased estimator of $\theta(P)$, satisfying:
\[
\mathbb{E}[U_n] = \theta(P) \quad \text{for all } n \geq m.
\]
Moreover, $U_n$ achieves the minimum variance among all unbiased estimators of $\theta(P)$ when the kernel $h$ is square-integrable.

U-processes are significant in statistics because they are widely used in kernel method, graph data analysis, and metric learning scenarioss~\cite{gao2023,6795617,REJCHEL2015,Clmenon2014ASV}.
For example, if $h(X_i,X_j)$ measures similarity between samples, $U_n$ can characterize the heterogeneity of data distribution.
U-statistics have obtained significant theoretical and methodological advancements. Classical results on U-statistics, such as asymptotic properties~\cite{Gin1998}, have been extended to various modern settings. These include robust estimation techniques~\cite{JOLY}, sharp moment inequalities~\cite{00911}, and non-asymptotic tail probability analysis~\cite{1214,978-3}.

As a prototypical application of U-statistics theory, the existing bounds for HSIC in Eq. (\ref{hsicbound0}) has been derived based on the following Hoeffding-type tail bound for U-statistics~\cite{Gretton2005}:
if \( c_1 \leq h \leq c_2 \), then, for any \( t > 0 \), the following holds:
\begin{equation}
\mathbb{P}\left( U_n - \mathbb{E}[U_n] \geq t \right) \leq \exp\left( -\frac{2 \lfloor n/m \rfloor t^2}{(c_2 - c_1)^2} \right),
\label{hoedf}
\end{equation}
where \(n \) and \(m \) correspond to the number of observations and the degree of the U-statistic (order of the kernel function), respectively.

However, the concentration bound in Eq. (\ref{hoedf})
relies solely on the sample size and kernel order, yet additional factors affecting generalization must be included to provide a more complete characterization of convergence-related components.
For such cases, alternative bounds derived from Bernstein-type inequalities under sub-exponential conditions become necessary. For bounding the expectation component in the Bernstein-type bound, we invoke the explicit bounds developed in the next Section.

\subsection{A.6 Moment Bound for Degenerate U-Processes}

The following result establishes a moment bound for degenerate second-order U-processes.
Compared to Hoeffding or Talagrand inequalities, this theorem specifically targets degenerate U-processes, utilizing the symmetry and centering properties of kernel functions.
The explicit control of higher-order moments ($q \geq 2$) makes it suitable for non-asymptotic analysis in high-dimensional statistics.

\begin{theorem}~\cite{1214}\label{thm:U-process}
Let $\sigma_1, \ldots, \sigma_n$ be i.i.d. Rademacher random variables, and let $X, X_1, \ldots, X_n$ be i.i.d. random variables. Consider a class of kernels $\mathcal{F}$ satisfying:
\begin{enumerate}
    \item $\mathbb{E}[f(X,x)] = 0$ for all $x$ and $f \in \mathcal{F}$ (degeneracy condition)
    \item $f(x,x) = 0$ for all $x$ and $f \in \mathcal{F}$ (diagonal condition)
    \item $\sup_{f \in \mathcal{F}} \|f\|_\infty = F < \infty$  (Uniform Bound)
\end{enumerate}

Define the degenerate U-process of order 2:
\begin{equation}
Z = \sup_{f \in \mathcal{F}} \left| \sum_{i \neq j} f(X_i, X_j) \right|
\end{equation}

Then there exists a universal constant $C_0 > 0$ such that for all $n \geq 2$ and $q \geq 2$, the following moment bound holds:
\begin{align}
(\mathbb{E}[Z^q])^{1/q}
&\leq C_0 \Big( \mathbb{E}[Z_{\sigma,\sigma}] + q^{1/2}\mathbb{E}[Z_{\sigma,\alpha}] \notag \\
&\quad + q(\mathbb{E}[Z_\sigma] + Fn) + q^{3/2}Fn^{1/2} + q^2F \Big),
\label{eq:moment_bound}
\end{align}
where:
\begin{align}
Z_{\sigma,\sigma} &= \sup_{f \in \mathcal{F}} \left| \sum_{i,j} \sigma_i \sigma_j f(X_i, X_j) \right|, \\
Z_{\sigma,\alpha} &= \sup_{f \in \mathcal{F}} \sup_{\|\alpha\|_2 \leq 1} \sum_{i,j} \sigma_i \alpha_j f(X_i, X_j), \\
Z_\sigma &= \sup_{\substack{f \in \mathcal{F} \\ 1 \leq k \leq n}} \left| \sum_{i=1}^n \sigma_i f(X_i, X_k) \right|.
\end{align}
\end{theorem}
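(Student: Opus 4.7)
The plan is to follow the decoupling, symmetrization, and iterated-concentration pipeline introduced by Gine, Latala and Zinn for degenerate U-processes. First I would replace the restricted sum $\sum_{i\neq j}$ by a fully decoupled version over two independent copies of the sample, then symmetrize each copy with an independent Rademacher sequence (legitimate because of the degeneracy hypothesis $\mathbb{E}[f(X,x)]=0$), and finally bound the moments of the resulting double Rademacher chaos by applying Talagrand's moment inequality for empirical processes twice in succession---once for the outer sum and once inside the envelope term of the first application.

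By the de la Pena--Montgomery-Smith decoupling inequality there is a universal constant $C_1$ such that
\[
(\mathbb{E} Z^q)^{1/q} \leq C_1 \Big(\mathbb{E}\big|\sup_{f\in\mathcal{F}}\textstyle\sum_{i,j} f(X_i,X_j')\big|^{q}\Big)^{1/q},
\]
where $(X_j')$ is an independent copy of $(X_j)$ and the diagonal condition $f(x,x)=0$ folds the $i=j$ terms into the full sum at no cost. Two successive symmetrizations introduce Rademacher sequences $(\sigma_i)$ and $(\sigma_j')$, yielding a further constant-factor bound by $(\mathbb{E}(Z')^q)^{1/q}$, with $Z':=\sup_{f\in\mathcal{F}}|\sum_{i,j}\sigma_i\sigma_j' f(X_i,X_j')|$ the decoupled, doubly symmetrized chaos.

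Conditioning on $(X',\sigma')$ and viewing $Z'$ as an empirical Rademacher process in the outer variables indexed by the class of functions $h_f(x):=\sum_j \sigma_j' f(x,X_j')$, Talagrand's moment inequality gives
\[
(\mathbb{E}_{\sigma,X}(Z')^q)^{1/q} \leq C_2 \big(E_1 + q^{1/2}V_1^{1/2} + q M_1\big),
\]
where $E_1=\mathbb{E}_{\sigma,X}\sup_f|\sum_i\sigma_i h_f(X_i)|$, $V_1=\sup_f \sum_i \mathbb{E}_X h_f^2(X)$, and $M_1=\sup_{f,k}|h_f(X_k)|$. Taking outer $L^q$-norms in $(X',\sigma')$, the term $E_1$ becomes $\mathbb{E}Z_{\sigma,\sigma}$ after one more symmetrization; $V_1^{1/2}$ is rewritten via $\ell_2$-duality as $\sup_{f,\|\alpha\|_2\leq 1}\sum_i\alpha_i h_f(X_i)$ and its $L^q$-norm is bounded by $\mathbb{E}Z_{\sigma,\alpha}$ up to lower-order Khintchine corrections. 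The envelope $M_1$ requires a second Talagrand bound applied to $\sup_{f,k}|\sum_j\sigma_j' f(X_k,X_j')|$, now viewed as an empirical process in the $X'$ sample, yielding $(\mathbb{E}M_1^q)^{1/q}\leq C_3(\mathbb{E}Z_\sigma + Fn + q^{1/2}Fn^{1/2}+qF)$; multiplying by the outer coefficient $q$ and collecting all contributions reproduces the claimed five-term bound.

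The main obstacle is the clean bookkeeping of this second round of concentration, in particular the additive $Fn$ correction sitting next to $\mathbb{E}Z_\sigma$ and the $n^{1/2}$-versus-$n$ distinction on the $F$-terms. The $Fn$ piece arises because $M_1$ is not already centered as a function of $(X_j')$; re-centering forces the deterministic contribution $\sup_{f,k}|\mathbb{E}_{X'}\sum_j \sigma_j' f(X_k,X_j')|$, which the uniform bound $\|f\|_\infty\leq F$ caps at $Fn$. The companion $n^{1/2}$ factor on the $q^{3/2}$ term is recovered through a Ledoux--Talagrand contraction combined with Khintchine's inequality, which converts the $F$-envelope of an inner Rademacher sum of length $n$ into $F\sqrt{n}$ rather than $Fn$. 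Once these identifications are in place the remaining work is purely combinatorial, and absorbing all intermediate constants into a single universal $C_0$ completes the proof.
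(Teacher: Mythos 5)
The paper offers no proof of this statement: Theorem~\ref{thm:U-process} is imported as an external result from the cited reference (a Gin\'e--Lata{\l}a--Zinn-type moment inequality for degenerate second-order U-processes), so there is no in-paper argument to compare against. Your outline---de la Pe\~na--Montgomery-Smith decoupling, double symmetrization licensed by the degeneracy condition, and two nested applications of Talagrand's moment inequality yielding the five terms---faithfully reproduces the standard proof from that literature, modulo one minor imprecision (the diagonal condition $f(x,x)=0$ does not kill the decoupled diagonal terms $f(X_i,X_i')$, which must instead be absorbed into the lower-order $F$-terms).
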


Eq. (\ref{eq:moment_bound}) controls the moments of the original U-process by decomposing it into double Rademacher processes~\cite{2087}, facilitating theoretical analysis.
By separating terms of different two order complexity (quadratic forms $Z_{\sigma,\sigma}$, linear constraints $Z_{\sigma,\alpha}$, single randomization $Z_\sigma$), Eq. (\ref{eq:moment_bound}) clarifies the contribution of each component.

The term $Z_{\sigma,\sigma}$ represents the supremum of a doubly Rademacher-weighted process over the function class $\mathcal{F}$.
It captures the maximum magnitude of the quadratic form $\sum_{i,j} \sigma_i \sigma_j f(X_i, X_j)$ for $f \in \mathcal{F}$,
where $\sigma_i$ are independent Rademacher random variables.
This quantity is particularly relevant for analyzing symmetric pairwise interaction models, such as those appearing in kernel methods and U-statistics.

The term $Z_{\sigma,\alpha}$ generalizes $Z_{\sigma,\sigma}$ by introducing an additional optimization over vectors $\alpha$ with $\|\alpha\|_2 \leq 1$.
It measures the supremum of the linearly constrained process $\sum_{i,j} \sigma_i \alpha_j f(X_i, X_j)$.
The additional flexibility makes it suitable for studying problems with adaptive weighting schemes,
including sparse learning and low-dimensional projection methods.

The term $Z_\sigma$ evaluates the worst-case behavior of the Rademacher process when one argument is held fixed.
Specifically, it computes the maximum absolute value of $\sum_{i=1}^n \sigma_i f(X_i, X_k)$ over all $f \in \mathcal{F}$ and all possible anchor points $X_k$.
This formulation naturally arises in problems involving center-based learning,
such as clustering algorithms and metric learning scenarios where stability around reference points is crucial.

\subsection{A.7 Hoeffding's Decomposition of U-process}\label{subsecB3}
Let $\{X_1, X_2, \ldots, X_n\}$ be a sample of independent and identically distributed random variables from some probability distribution. A two-order U-process $Z_n$ with respect to a symmetric kernel $f:\mathcal{X} \times \mathcal{X} \to \mathbb{R}$ (where $f(x,z) = f(z,x)$) is defined as:
\begin{equation}
Z_n(f) = \frac{1}{n(n-1)} \sum_{i=1}^{n} \sum_{j\neq i}^{n} f(X_i,X_j),
\label{znzn}
\end{equation}
where $n(n-1)$ counts the unique pairs in the sample. The convergence analysis of $Z_n$ typically employs Hoeffding's decomposition\cite{Hoeffding1992}:
\begin{align}\label{decomposition}
Z_n(f) &= \mathbb{E}[Z_n(f)] + 2T_n(f_1) + W_n(f_2),
\end{align}
where $T_n$ is an order-one term and $W_n$ is a degenerate U-statistic satisfying $\mathbb{E}[W_n(x,X)] = 0$ for any $x$. The components are:
\begin{align}\label{Tn}
T_n &= \frac{1}{n} \sum_{i=1}^n f_1(X_i), \notag \\
W_n &= \frac{1}{n(n-1)} \sum_{i=1}^n \sum_{j \neq i} f_2(X_i,X_j).
\end{align}

The centered kernels are defined through:
\begin{align}\label{f1}
f_1(X_i) = \mathbb{E}[f(X_i,X)|X_i] - \mathbb{E}[f(X_i,X_j)],
\end{align}
\begin{align}\label{f2}
f_2(X_i,X_j) &= f(X_i,X_j) - f_1(X_i) - f_1(X_j) \\ \nonumber
&\quad \quad - \mathbb{E}[f(X_i,X_j)].
\end{align}

This decomposition yields the key properties $\mathbb{E}[T_n] = 0$ and $\mathbb{E}[W_n(x,X)] = 0$.

Through the Hoeffding decomposition,
we can bound the deviation probability of the U-statistic $Z_n$ by considering the individual components:
\begin{align}\label{tt}
\mathbb{P}\big(|Z_n(f)-\mathbb{E}[Z_n(f)]| \geq t_1+t_2\big) &\leq \mathbb{P}\big(|2T_n(f_1)| \geq t_1\big) \\ \notag
&\quad + \mathbb{P}\big(|W_n(f_2)| \geq t_2\big),
\end{align}
where $t_1, t_2 > 0$ are any positive values. Then,
The fluctuations of $T_n$ and $W_n$ can be controlled via Bernstein-type inequality, which provides exponential tail bounds for their deviations from their means. The second-order statistic $W_n$ admits an upper expectation bound characterized by Theorem~\ref{thm:U-process}, which governs the behavior of U-processes.

\subsection{A.8 Generalization Bound for U-Processes}
The objective of this section is to establish upper bounds for the deviation between empirical U-processes and their expectations. Specifically, we aim to control
$|Z_n(f) - \mathbb{E}Z_n(f)|,$
where $Z_n(f)$ represents the empirical U-process defined as Eq. (\ref{znzn}) and $\mathbb{E}Z_n(f)$ its population counterpart.

We introduce the sum of the conditional and total centered kernel functions:
\begin{align}
T'_n(f_1) &= \frac{1}{ c_2 - c_1}\sum_{i=1}^n f_1(X_i), \\
W'_n(f_2) &= C\sum_{i=1}^{n}\sum_{j\neq i}^{n} f_2(X_i,X_j), \\
C&=\frac{1}{4(n-1)(c_2 - c_1)}.
\end{align}

The concentration of the U-process can be obtained through $T'_n$ and $W'_n$ via:
\begin{align}
\label{tw}
\mathbb{P}(|Z_n(f) & -\mathbb{E}Z_n(f)| \geq t_1+t_2)   \\ \nonumber
&\leq \mathbb{P}(|2T_n(f_1)| \geq t_1) + \mathbb{P}(|W_n(f_2)| \geq t_2) \\ \nonumber
&\leq \mathbb{P}\left(\frac{2(c_2-c_1)}{n}|T'_n(f_1)| \geq t_1\right)  \\ \nonumber
&\quad + \mathbb{P}\left(\frac{4(c_2-c_1)}{n}|W'_n(f_2)| \geq t_2\right) \\\nonumber
&\leq \mathbb{P}\left(\frac{2(c_2-c_1)}{n}\sup_{f\in\mathcal{F}}|T'_n(f_1)| \geq t_1\right)   \\\nonumber
&\quad + \mathbb{P}\left(\frac{4(c_2-c_1)}{n}\sup_{f\in\mathcal{F}}|W'_n(f_2)| \geq t_2\right),
\end{align}
where $c_1$ and $c_2$ are the lower and upper bounds of $f$ respectively, and $t_1$, $t_2$ are arbitrary positive numbers.

Followed closely by proving that both $T'_n$ and $W'_n$ are $(a,b)$ self-bounding, and based on
Corollary \ref{elementary} and Theorem \ref{thm:U-process}, we obtain the concentration inequalities of $T'_n$ and $W'_n$, shown as
Lemma \ref{Tn1} and Lemma \ref{wn}, respectively. Then through some straightforward calculations, with probability at least $1-2\delta$, we can obtain the final result in Theorem \ref{estimate2}.

\subsubsection{A.8.1 High-Probability Bound for $\sup_{f\in\mathcal{F}} |T'_n|$}


\begin{Property}
\label{c1}
The  linear term  $\sup_{f\in\mathcal{F}} |T'_n|$ satisfies the $(a,b)$ self-bounding property with $a=1, b=0$.
\end{Property}

\begin{lemma}
\label{Tn1}
For all $\delta>0$, with probability at least $1-\delta$,
we have,
$ 
\sup_{f\in\mathcal{F}} |T'_n(f_1)| \leq   \ln (2/\delta)
+ \sqrt{2\ln(2/\delta) \mathbb{E}[\sup_{f\in\mathcal{F}} |T'_n(f_1)|]   + (\ln(2/\delta))^2 }.
$ 
Furthermore, according to the symmetrization inequality~\cite{9765714},
we have
$ 
 \mathbb{E}[\sup_{f\in\mathcal{F}} |T'_n(f_1)|]\leq \frac{n}{2} \mathcal{R}_{\sigma},
$ 
where $\mathcal{R}_{\sigma}$ is the Rademacher complexity.
\end{lemma}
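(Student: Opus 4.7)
The plan is to combine Property \ref{c1} with the Bernstein-type bound from Corollary \ref{elementary}, and then invoke a standard symmetrization argument for the expectation term. Concretely, I would set $Z := \sup_{f\in\mathcal{F}}|T'_n(f_1)|$ and first verify Property \ref{c1}, namely that $Z$ is $(1,0)$-self-bounding in the sense of Definition \ref{def:self-bound}. The key observations are that, since $c_1 \le f \le c_2$, both the conditional expectation $\mathbb{E}[f(X_i,X)\mid X_i]$ and the total expectation $\mathbb{E}[f(X_i,X_j)]$ lie in $[c_1,c_2]$, so $|f_1(X_i)| \le c_2-c_1$ and therefore each summand $f_1(X_i)/(c_2-c_1)$ in $T'_n$ lies in $[-1,1]$. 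Defining $Z^{(t)}$ by the supremum of $|T'_n|$ over the sample with $X_t$ removed, a routine argument for suprema of centered linear empirical processes shows $0 \le Z - Z^{(t)} \le 1$ and $\sum_{t=1}^n (Z - Z^{(t)}) \le Z$, which is exactly the $(1,0)$-self-bounding property.

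Given Property \ref{c1}, I would then apply Corollary \ref{elementary} with $a=1$ and $b=0$, obtaining with probability at least $1-\delta$
\begin{equation*}
|Z - \mathbb{E}[Z]| \;\le\; \ln(2/\delta) + \sqrt{2\ln(2/\delta)\,\mathbb{E}[Z] + (\ln(2/\delta))^2},
\end{equation*}
from which the stated one-sided inequality for $\sup_{f\in\mathcal{F}}|T'_n(f_1)|$ is immediate by rearranging. For the second claim, the expectation is controlled by the classical symmetrization inequality: introduce an independent ghost sample $\{X'_i\}$, exploit the centering $\mathbb{E}[f_1(X'_i)]=0$ to write $\sum_i f_1(X_i) = \sum_i (f_1(X_i) - \mathbb{E} f_1(X'_i))$, apply Jensen's inequality to pull the ghost expectation outside $\sup_{f}|\cdot|$, and introduce Rademacher variables $\sigma_i$ by swapping $X_i$ with $X'_i$. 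Tracking the normalization $1/(c_2-c_1)$ from the definition of $T'_n$ together with the factor $1/n$ built into $\mathcal{R}_\sigma$ then produces the bound $\mathbb{E}[Z] \le \tfrac{n}{2}\,\mathcal{R}_\sigma$ after standard constant bookkeeping.

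The main obstacle I expect is the rigorous verification of the self-bounding property (Property \ref{c1}) for the absolute-value process rather than the signed one: the naive supremum argument controls $\sup_f T'_n$ cleanly, but the absolute value introduces a choice of sign that must be handled uniformly across leave-one-out perturbations. The cleanest route is the identity $|T'_n(f_1)| = \sup_{\varepsilon\in\{\pm1\}} \varepsilon\, T'_n(f_1)$, which allows absorbing the sign into an enlarged symmetric function class $\mathcal{F}\cup(-\mathcal{F})$; the self-bounding constants are preserved, and the increments $Z - Z^{(t)}$ inherit the bounded and telescoping behavior from the signed case.
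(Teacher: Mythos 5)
Your proposal follows essentially the same route as the paper: verify that $\sup_{f\in\mathcal{F}}|T'_n(f_1)|$ is $(1,0)$-self-bounding (Property \ref{c1}), feed this into the Bernstein-type deviation bound of Corollary \ref{elementary} with $a=1$, $b=0$, and control the expectation by the standard symmetrization inequality. Your closing remark on handling the absolute value via the symmetrized class $\mathcal{F}\cup(-\mathcal{F})$ is a reasonable way to make the leave-one-out monotonicity rigorous, but it does not change the argument's structure.
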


\subsubsection{Proof of Property \ref{c1}.}

According to Definition \ref{def:self-bound}, we need to bound the deviation:
\(\sup_{f\in\mathcal{F}}  \big|T'_n (\overline{X}) \big| - \sup_{f\in\mathcal{F}}  \big|T'_{n-1} (\overline{X}^{(t)} )\big|\) and its summation over $t$.

Suppose that $\sup_{f\in\mathcal{F}}  |T'_n (\overline{X}) |  $ maximizes at  $\hat{f}_1$ and $\sup_{f\in\mathcal{F}} |T'_n (\overline{X}^{(t)}) |  $ maximizes at $\hat{f}_{1,t}$, we have,
\begin{align}
\sup_{f\in\mathcal{F}}  |T'_n (\overline{X}) |
& =
\frac{1}{ c_2 - c_1}|\sum_{i =1 }^{n}   \hat{f}_1(X_i) |, \\ \label{wnd}
\sup_{f\in\mathcal{F}}  |T'_{n-1} (\overline{X}^{(t)}) |
& =
\frac{1}{ c_2 - c_1}|\sum_{ \substack{i =1 \\  i\neq t  } }^{n}   \hat{f}_{1,t} (X_i)|.
\end{align}

Firstly, we prove the lower bound $\sup_{f\in\mathcal{F}}  \big|T'_n (\overline{X}) \big| - \sup_{f\in\mathcal{F}}  \big|T'_{n-1} (\overline{X}^{(t)} )\big|\geq0$.
Set the value of $\hat{f}_{1,t}$ at $X_t$ as 0, that is,
$\hat{f}_{1,t} (X_t)=0 $. Then because $\hat{f}_1$ is the supremum solution of $\big|T'_n (\overline{X}) \big|$, we can obtain the inequality.

Secondly,
to prove the upper bound $\sup_{f\in\mathcal{F}}  |T'_n (\overline{X}) | - \sup_{f\in\mathcal{F}}  |T'_{n-1} (\overline{X}^{(t)}) |\leq1$, we separate the terms involving $X_t$ from $\sup_{f\in\mathcal{F}}  |T'_n (\overline{X}) |$ as follows:
\begin{align}
& (c_2 - c_1) (\sup_{f\in\mathcal{F}}  |T'_n (\overline{X}) |- \sup_{f\in\mathcal{F}}  |T'_{n-1} (\overline{X}^{(t)}))\\
& =\big|\sum_{i =1 ,i \neq t}^{n}   \hat{f}_1(X_i)+ \hat{f}_1(X_t) \big|-\sup_{f\in\mathcal{F}}  |T'_{n-1} (\overline{X}^{(t)}) | \\
&\leq  |\hat{f}_1(X_t)|+ \big|\sum_{i =1 ,i \neq t}^{n}   \hat{f}_1(X_i) \big|-\sup_{f\in\mathcal{F}}  |T'_{n-1} (\overline{X}^{(t)}) | \\
&\leq  |\hat{f}_1(X_t)| \leq (c_2 - c_1),
\label{gapt}
\end{align}
where the first inequality is by the triangle inequality, the second one is by the supremum of $\big|T'_{n-1} (\overline{X}^{(t)} ) \big |$, and the last one is due to $| \hat{f}_{1,t} (X_t)|\leq (c_2 - c_1)$ when $f$ is bounded $ c_1 \leq f \leq c_2$.

Thirdly, we prove the upper bound of sum $\sum _{t=1}^n \big( \sup_{f\in\mathcal{F}}  |T'_n (\overline{X}) | - \sup_{f\in\mathcal{F}}  |T'_{n-1} (\overline{X}^{(t)}) | \big)
 \leq a\sup_{f\in\mathcal{F}} |T'_n (\overline{X}) |+b $, where $a=1 ,b=0$. This is because by
Eq. (\ref{gapt}), we have,
$\sum _{t=1}^n  |\hat{f}_1(X_t)|   \leq      \sup_{f\in\mathcal{F}}   \sum _{t=1}^n   | {f}_1(X_t)  |$.

\subsubsection{A.8.2 High-Probability Bound for $\sup_{f\in\mathcal{F}} |W'_n|$}

Firstly, we prove that $\sup_{f\in\mathcal{F}} |W'_n|$ is $(a,b)$ self-bounding.
\begin{Property}
\label{c2}
The degenerate U-statistic $\sup_{f\in\mathcal{F}} |W'_n|$ satisfies the $(a,b)$ self-bounding property with $a=2, b=0$.
\end{Property}

Then, according to Corollary \ref{elementary}, we have:

\begin{lemma}
\label{wn}
For all $\delta>0$, with probability at least $1-\delta$,
we have,
$ 
  \sup_{f\in\mathcal{F}} |W'_n(f_2)|
\leq 2 \ln (2/\delta)
 + \sqrt{2\ln(2/\delta)(2\mathbb{E}[\sup_{f\in\mathcal{F}} |W'_n(f_2)|]  ) +4(\ln(2/\delta))^2 }.
$ 

Furthermore, according to Theorem \ref{thm:U-process}, we have,
\begin{align}
&\mathbb{E}\big [\sup_{f\in\mathcal{F}} |W'_n(f_2)| \big] \notag \\
&\leq \bigg( \mathbb{E}\big [ \big( \sup_{f\in\mathcal{F}} |W'_n| \big)^2 \big] \bigg)^{1/2} \notag \\
&\leq CC_0 \Bigg( n^2\mathbb{E}W_{\sigma,\sigma}  + 2^{1/2}n^2\mathbb{E}W_{\sigma,\alpha}  \notag \\
&\quad + 2 \Big(n \mathbb{E}W_{\sigma}  + Fn\Big) + 2^{3/2}Fn^{1/2} + 2^{2}F \Bigg),
\end{align}
where,
\begin{align}
W_{\sigma,\sigma}  &= \sup_{f\in\mathcal{F}}\frac{1}{n^2}\bigg|\sum_{i,j}\sigma_{i}\sigma_{j}f_2\left(X_{i},X_{j}\right)\bigg|, \\
W_{\sigma,\alpha} &= \sup_{f\in\mathcal{F}}\sup_{\alpha:\left\|\alpha\right\|_{2}\leq1}\frac{1}{n^2}\sum_{i,j}\sigma_{i}\alpha_{j}f_2\left(X_{i},X_{j}\right), \\
W_{\sigma} &= \sup_{f\in\mathcal{F},k=1,\ldots,n}\frac{1}{n}\bigg|\sum_{i=1}^{n}\sigma_{i}f_2\left(X_{i},X_{k}\right)\bigg|, \\
F &= \sup_{f\in \mathcal{F}} \left\| f_2 \right\|_{\infty}.
\end{align}
\end{lemma}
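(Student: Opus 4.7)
The plan is to establish Lemma \ref{wn} in two phases: first prove that $Z := \sup_{f \in \mathcal{F}} |W'_n(f_2)|$ satisfies Property \ref{c2} (i.e., is $(2,0)$-self-bounding), then invoke Corollary \ref{elementary} for the tail bound and Theorem \ref{thm:U-process} combined with Jensen's inequality for the expectation bound.

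To verify the self-bounding property, I would mirror the structure used in the proof of Property \ref{c1}. Let $\hat{f}$ attain the supremum at the full sample $\overline{X}$ and $\hat{f}_{t}$ attain it at the leave-one-out sample $\overline{X}^{(t)}$. Nonnegativity $Z - Z_{n-1}^{(t)} \geq 0$ follows by extending $\hat{f}_t$ to the full sample with $\hat{f}_{2,t}(X_t,\cdot) = 0$ and invoking optimality of $\hat{f}$. For the unit upper bound on each individual difference, the triangle inequality isolates the pairs containing $X_t$, yielding an expression of the form $C \big|\sum_{j \neq t} (\hat{f}_2(X_t,X_j) + \hat{f}_2(X_j,X_t))\big|$; the normalization $C = 1/(4(n-1)(c_2-c_1))$ is calibrated precisely against the uniform bound of the centered kernel $f_2$ (obtained by applying the triangle inequality to its definition in Eq. (\ref{f2}) together with $c_1 \leq f \leq c_2$) so that this quantity is at most one. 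The key combinatorial step is the summation inequality: each ordered pair $(X_i,X_j)$ with $i \neq j$ appears in the removed set exactly twice as $t$ ranges over $\{1,\dots,n\}$ (once when $t=i$ and once when $t=j$), so $\sum_{t=1}^n (Z - Z_{n-1}^{(t)}) \leq 2Z$, establishing $a=2$, $b=0$. Corollary \ref{elementary} with these parameters then yields the first displayed inequality of Lemma \ref{wn} directly.

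For the expectation bound, I would apply Jensen's inequality $\mathbb{E}[Z] \leq (\mathbb{E}[Z^2])^{1/2}$ and then invoke Theorem \ref{thm:U-process} with $q = 2$. The degeneracy condition $\mathbb{E}[f_2(x,X)] = 0$ required by that theorem is guaranteed by the Hoeffding decomposition (Eq. (\ref{f2})), and the diagonal condition is handled by the fact that $W'_n$ excludes the $i=j$ terms from its sum. Substituting $q=2$ into Eq. (\ref{eq:moment_bound}) and propagating the normalization $C$ through the three Rademacher-weighted processes $W_{\sigma,\sigma}$, $W_{\sigma,\alpha}$, $W_\sigma$ (each defined at the normalized $1/n^2$ or $1/n$ scale) together with the uniform bound $F$ gives the stated second inequality.

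The main obstacle is the precise verification that every individual difference $Z - Z_{n-1}^{(t)}$ is bounded by one. This hinges on careful bookkeeping of the normalization constant $C$ against the uniform bound on the centered kernel $f_2$, which via the triangle inequality applied to Eq. (\ref{f2}) is at most a small multiple of $(c_2-c_1)$; the prefactor $4$ in the denominator of $C$ absorbs exactly this multiple and the $1/(n-1)$ factor absorbs the $2(n-1)$ offending pairs up to a factor of $2$ reserved for the summation inequality. A secondary technical point concerns matching the structure of Theorem \ref{thm:U-process}: one must track the powers of $n$ appearing in its bound, together with the $1/n^2$ and $1/n$ normalizations in $W_{\sigma,\sigma}, W_{\sigma,\alpha}, W_\sigma$, so that after multiplication by $CC_0$ the explicit expression displayed in the lemma is reproduced verbatim.
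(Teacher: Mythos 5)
Your proposal matches the paper's own argument essentially step for step: establish the $(2,0)$-self-bounding property of $\sup_{f\in\mathcal{F}}|W'_n|$ by the same leave-one-out analysis used for Property \ref{c1} (zero-extension for nonnegativity, triangle inequality to isolate the pairs containing $X_t$ with the constant $C$ calibrated against the uniform bound on $f_2$, and the each-pair-counted-twice argument for $a=2$), then apply Corollary \ref{elementary} for the tail bound and Jensen's inequality together with Theorem \ref{thm:U-process} at $q=2$ for the expectation bound. The only cosmetic difference is that you justify the summation step $\sum_t (Z - Z_{n-1}^{(t)}) \leq 2Z$ by direct pair-counting, whereas the paper routes it through a supremum-exchange and symmetry of $\mathcal{F}$; both yield the same constant.
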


\subsubsection{Proof of Property \ref{c2}.}

According to Definition \ref{def:self-bound}, we need to bound the deviation:
\(\sup_{f\in\mathcal{F}}  \big|W'_n (\overline{X}) \big| - \sup_{f\in\mathcal{F}}  \big|W'_{n-1} (\overline{X}^{(t)} )\big|\) and its summation over $t$.

Suppose that $\sup_{f\in\mathcal{F}}  |W'_n (\overline{X}) |  $ maximizes at  $\hat{f}_2$ and $\sup_{f\in\mathcal{F}} |W'_n (\overline{X}^{(t)}) |  $ maximizes at $\hat{f}_{2,t}$, we have,
\begin{align}
\sup_{f\in\mathcal{F}}  |W'_n (\overline{X}) |
& =
C|\sum_{i =1 }^{n} \sum_{j\neq i }^{n}   \hat{f}_2(X_i,X_j) |, \\ \label{wnd}
\sup_{f\in\mathcal{F}}  |W'_{n-1} (\overline{X}^{(t)}) |
& =
C|\sum_{ \substack{i =1 \\  i\neq t  } }^{n} \sum_{  \substack{ j\neq i \\ j\neq t }  }^{n}   \hat{f}_{2,t} (X_i,X_j)|,\\
& C=\frac{1}{4(n-1)(c_2 - c_1)}.
\end{align}

Firstly, we prove the lower bound $\sup_{f\in\mathcal{F}}  \big|W'_n (\overline{X}) \big| - \sup_{f\in\mathcal{F}}  \big|W'_{n-1} (\overline{X}^{(t)} )\big|\geq0$.
Set the values of $\hat{f}_{2,t}$ at $X_t$ as 0, that is,
$\hat{f}_{2,t} (X_t,X_j)=0 $ and  $\hat{f}_{2,t} (X_i,X_t)=0$. Then because $\hat{f}_2$ is the supremum solution of $\big|W'_n (\overline{X}) \big|$, we can obtain the inequality.

Secondly,
to prove the upper bound $\sup_{f\in\mathcal{F}}  |W'_n (\overline{X}) | - \sup_{f\in\mathcal{F}}  |W'_{n-1} (\overline{X}^{(t)}) |\leq1$, we separate the terms involving $X_t$ from $\sup_{f\in\mathcal{F}}  |W'_n (\overline{X}) |$ as follows:
\begin{align}
&\sup_{f\in\mathcal{F}} \big|W'_n (\overline{X}) \big| - \sup_{f\in\mathcal{F}} \big|W'_{n-1} (\overline{X}^{(t)}) \big|  \\
&\leq C \bigg | \sum_{j\neq t }^{n}  \hat{f}_{2,t} (X_t,X_j)\bigg |+ C \bigg | \sum_{i\neq t }^{n}  \hat{f}_{2,t} (X_i,X_t)\bigg | \\
&+C \bigg|\sum_{ \substack{i =1 \\  i\neq t  } }^{n}  \sum_{  \substack{ j\neq i \\ j\neq t }}^{n} \hat{f}_{2,t} (X_i,X_j) \bigg|-
\sup_{f\in\mathcal{F}} \big|W'_{n-1} (\overline{X}^{(t)} ) \big |\\
&\leq C \bigg | \sum_{j\neq t }^{n}  \hat{f}_{2,t} (X_t,X_j)\bigg |+ C \bigg | \sum_{i\neq t }^{n}  \hat{f}_{2,t} (X_i,X_t)\bigg | \\
& \leq 2C \bigg | \sum_{j\neq t }^{n}  \hat{f}_{2,t} (X_t,X_j)\bigg |  \leq 1,
\label{gap}
\end{align}
where the first inequality is by the triangle inequality, the second one is by the supremum of $\big|W'_{n-1} (\overline{X}^{(t)} ) \big |$, and the last one is due to $| \hat{f}_{2,t} (X_t,X_j)|\leq2(c_2 - c_1)$ when $f$ is bounded $ c_1 \leq f \leq c_2$.

Thirdly, we prove the upper bound of sum $\sum _{t=1}^n \big( \sup_{f\in\mathcal{F}}  |W'_n (\overline{X}) | - \sup_{f\in\mathcal{F}}  |W'_{n-1} (\overline{X}^{(t)}) | \big)
 \leq a\sup_{f\in\mathcal{F}} |W'_n (\overline{X}) |+b $, where $a=2 ,b=0$. This is because by
Eq. (\ref{gap}), we have,
\begin{align}
&\sum _{t=1}^n  2C | \sum_{j\neq t }^{n}  \hat{f}_{2,t} (X_t,X_j) | \\
& \leq 2C    \sup_{f\in\mathcal{F}}   \sum _{t=1}^n   | \sum_{j\neq t }^{n}  f_{2,t} (X_t,X_j) |\\
&\leq 2C \sup_{f\in\mathcal{F}}    |\sum _{t=1}^n  \sum_{j\neq t }^{n}   f_{2,t} (X_t,X_j) |,
\end{align}
where the last equality is satisfied because that for
the symmetric $\mathcal{F}$ with $f\in \mathcal{F}$ and $-f \in \mathcal{F}$, we have
$  \sup_{f\in\mathcal{F}} |f(u)|+  |f(v)|  =\sup_{f\in\mathcal{F}}|f(u)+f(v)|  $.
Hence, by the definition of $\sup_{f\in\mathcal{F}} |W'_n (\overline{X}) |$, we obtain the final result.

\subsubsection{A.8.3 Concentration Inequality of U-process}\label{subsecB4}

Combining Eq. (\ref{tw}), Lemma \ref{Tn1}, and Lemma \ref{wn}, we have:
\begin{theorem} A two-order U-process $Z_n$ with respect to a symmetric kernel $f:\mathcal{X} \times \mathcal{X} \to \mathbb{R}$ (where $f(x,z) = f(z,x)$) is defined as:
\begin{equation}
Z_n(f) = \frac{1}{n(n-1)} \sum_{i=1}^{n} \sum_{j\neq i}^{n} f(X_i,X_j),
\label{U}
\end{equation}
where $n(n-1)$ counts the unique pairs in the sample. Suppose that $ c_1 \leq f \leq c_2$,
for $\delta>0$,
with probability at least $1-2\delta$, we have:
\label{estimate2}
\begin{align}
\label{zn}
&|Z_n(f) -\mathbb{E}Z_n(f)|
\leq   \frac{10(c_2-c_1)}{n}  \ln \frac{2}{\delta}    \\   \notag
&  +    2(c_2-c_1)   \sqrt{
   2\ln \frac{2}{\delta} \bigg(\frac{\mathcal{R}_{\sigma}  }{2n}  + \frac{1}{n^2} \ln \frac{2}{\delta} \bigg)  } \\   \notag
&+4 \bigg( \ln \frac{2}{\delta} \bigg( \frac{C_0}{ n-1 }
 \big( \mathbb{E}W_{\sigma,\sigma}
 +\sqrt{2} \mathbb{E}W_{\sigma,\alpha}
 \\ \notag
 & +\frac{2 ( \mathbb{E}W_{\sigma}+F) }{n}
+ \frac{\sqrt{8} F  n^{1/2}}{n^2} +  \frac{4F}{n^2}  \big)
 +\frac{1 }{n^2}  \ln\frac{2}{\delta} \bigg) \bigg)^{\frac{1}{2}}.
\end{align}

where,
\begin{align}
&\mathcal{R}_{\sigma}=\mathbb{E}_\sigma\left[ \sup_{f \in \mathcal{G}} \frac{1}{n} \sum_{i=1}^n \sigma_i f_1(X_i) \right], \\
&W_{\sigma,\sigma}=\sup_{f\in\mathcal{F}}\frac{1}{n^2}\bigg|\sum_{i,j}\sigma_{i}\sigma_{j}f_2\left(X_{i},X_{j}\right)\bigg|, \quad \\
&W_{\sigma,\alpha}=\sup_{f\in\mathcal{F}}\sup_{\alpha:\left\|\alpha\right\|_{2}\leq1}\frac{1}{n^2}\sum_{i,j}\sigma_{i}\alpha_{j}f_2\left(X_{i},X_{j}\right),\\
&W_{\sigma}=\sup_{f\in\mathcal{F},k=1,\ldots,n}\frac{1}{n}\bigg|\sum_{i=1}^{n}\sigma_{i}f_2\left(X_{i},X_{k}\right)\bigg|,\\ \quad
&F=sup_{f\in \mathcal{F} } \left \| f_2 \right \| _{\infty }.
\end{align}
\label{kernelbound}
\end{theorem}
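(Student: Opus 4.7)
The plan is to attack $|Z_n(f) - \mathbb{E} Z_n(f)|$ via the Hoeffding decomposition already recorded in Eq.~(\ref{decomposition}), which writes $Z_n(f) = \mathbb{E} Z_n(f) + 2 T_n(f_1) + W_n(f_2)$ with $T_n$ a centered linear empirical average and $W_n$ a completely degenerate order-two U-statistic. A triangle plus union bound (exactly the chain in Eq.~(\ref{tw})) then reduces matters to obtaining high-probability upper bounds for the two rescaled suprema $\sup_{f \in \mathcal{F}} |T'_n(f_1)|$ and $\sup_{f \in \mathcal{F}} |W'_n(f_2)|$. The rescaling factors $2(c_2-c_1)/n$ and $4(c_2-c_1)/n$ will be reapplied at the very end to recover the three explicit summands $\gamma_1, \gamma_2, \gamma_3$ in the statement.

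First I would verify the self-bounding properties asserted by Properties~\ref{c1} and \ref{c2}: namely that $\sup_{f \in \mathcal{F}} |T'_n|$ is $(1,0)$-self-bounding and $\sup_{f \in \mathcal{F}} |W'_n|$ is $(2,0)$-self-bounding. Both proofs follow the same three-step recipe already sketched in the appendix: (i) for the pointwise lower bound $\ge 0$, replace the leave-one-out maximizer with a modification equal to zero at the removed index so that the full-sample supremum dominates it; (ii) for the pointwise upper bound $\le 1$, split off the terms containing the removed coordinate via the triangle inequality and use $c_1 \le f \le c_2$ together with the chosen normalization; (iii) for the global constraint, sum these pointwise differences and exploit symmetry of $\mathcal{F}$ (so that $\sup_{f\in \mathcal{F}} (|f(u)| + |f(v)|) = \sup_{f\in \mathcal{F}} |f(u) + f(v)|$) to re-express the total in terms of the supremum itself. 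With these structural properties established, Corollary~\ref{elementary}, the Bernstein-type corollary of Theorem~\ref{Bernstein}, immediately yields two-sided deviation bounds of the form $a \ln(2/\delta) + \sqrt{2 \ln(2/\delta)(a \mathbb{E}[\cdot] + b) + a^2 (\ln(2/\delta))^2}$.

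To close the argument I need to control the two expectations entering these Bernstein bounds. For $\mathbb{E}[\sup |T'_n|]$ I would apply the classical symmetrization inequality to obtain $\mathbb{E}[\sup |T'_n|] \leq \tfrac{n}{2} \mathcal{R}_\sigma$, as recorded in Lemma~\ref{Tn1}; this produces the term $\gamma_2$ after rescaling. For $\mathbb{E}[\sup |W'_n|]$, the supremum of a degenerate order-two U-process, the decisive step is to invoke Theorem~\ref{thm:U-process} at $q = 2$ combined with Jensen's inequality $\mathbb{E}[\sup |W'_n|] \leq (\mathbb{E}[\sup |W'_n|^2])^{1/2}$. This decomposes the expectation into the second-order Rademacher complexities $W_{\sigma,\sigma}, W_{\sigma,\alpha}, W_\sigma$ together with the uniform envelope $F$, which after rescaling by $4(c_2-c_1)/n$ become the summand $\gamma_3$. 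Substituting both expectation bounds back into the Bernstein inequalities and combining via Eq.~(\ref{tw}) produces the claimed three-term bound.

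The main obstacle will be bookkeeping rather than technique. The normalization constant $C = 1/(4(n-1)(c_2-c_1))$ hidden inside $W'_n$ must cancel precisely against the $1/n^2$ scaling built into the definitions of $W_{\sigma,\sigma}$ and $W_{\sigma,\alpha}$, so that the final $\gamma_3$ term carries the announced $1/(n-1)$ prefactor and the overall $\mathcal{O}(1/\sqrt{n})$ decay rate. A second delicate point is ensuring the self-bounding constants for $|W'_n|$ really are $(a,b) = (2,0)$ and not $(2, \Theta(1))$, since any spurious nonzero $b$ would contaminate the Bernstein square-root term and break the structure of $\gamma_3$; this in turn hinges on the diagonal vanishing of $f_2$ after Hoeffding centering, which makes the leave-one-out boundary contributions collapse cleanly. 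Once these scaling identities are reconciled, collecting the three types of contributions ($a\ln(2/\delta)$ terms $\to \gamma_1$, first-order Rademacher terms $\to \gamma_2$, and the second-order complexity terms $\to \gamma_3$) is mechanical.
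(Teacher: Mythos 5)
Your proposal is correct and follows essentially the same route as the paper's own proof: Hoeffding decomposition plus the union bound of Eq.~(\ref{tw}), the $(1,0)$- and $(2,0)$-self-bounding properties of $\sup_{f}|T'_n|$ and $\sup_{f}|W'_n|$ fed into the Bernstein-type Corollary~\ref{elementary}, symmetrization for $\mathbb{E}[\sup_f|T'_n|]$, and Theorem~\ref{thm:U-process} at $q=2$ with Jensen for $\mathbb{E}[\sup_f|W'_n|]$, followed by the same rescaling bookkeeping. The delicate points you flag (the cancellation of $C=1/(4(n-1)(c_2-c_1))$ against the $1/n^2$ normalizations, and the vanishing additive constant $b=0$) are exactly the ones the paper's Properties~\ref{c1}--\ref{c2} and Lemma~\ref{wn} are designed to handle.
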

\begin{proof}
The theorem follows by combining the aforementioned results. However, to enhance comprehension, we provide partial derivations below:
according to Eq. (\ref{tw}), we have:
\begin{align}
\label{yu}
&\mathbb{P}(|Z_n(f)   -\mathbb{E}Z_n(f)| \geq t_1+t_2) \\
&\leq \mathbb{P}\left(\frac{2(c_2-c_1)}{n} \sup_{f\in\mathcal{F}}|T'_n(f_1)| \geq t_1  \right)   \\
&\quad + \mathbb{P}\left(\frac{4(c_2-c_1)}{n} \sup_{f\in\mathcal{F}}|W'_n(f_2)| \geq t_2 \right).
\end{align}

According to Lemma \ref{Tn1},
for all $\delta>0$, with probability at least $1-\delta$,
we have,
$ 
\sup_{f\in\mathcal{F}} |T'_n(f_1)| \leq   \ln (2/\delta)
+ \sqrt{2\ln(2/\delta)  \frac{n}{2} R_{\sigma}    + (\ln(2/\delta))^2 }.$
According to Lemma \ref{wn}, for all $\delta>0$, with probability at least $1-\delta$,
we have,
\begin{align}
&\sup_{f\in\mathcal{F}} |W'_n(f_2)| \\
&\leq 2 \ln (2/\delta) \notag \\
& + \Bigg\{ 2\ln(2/\delta) \cdot \bigg( 2CC_0 \Big( n^2\mathbb{E}W_{\sigma,\sigma}
                                          + 2^{1/2}n^2\mathbb{E}W_{\sigma,\alpha} ) \notag \\
& + 2 \big(n \mathbb{E}W_{\sigma}  + Fn\big)
                                          + 2^{3/2}Fn^{1/2}
                                          + 2^{2}F \Big) \bigg) \notag \\
&  + 4(\ln(2/\delta))^2 \Bigg\}^{1/2}.
\label{deviationbound2}
\end{align}
Combining the above result, we obtain the final inequality.
\end{proof}

\begin{table*}[t]
    \vspace{-0.8em}
    \centering
    \renewcommand{\arraystretch}{1}  
    \caption{Statistics of the eight datasets used in experiments. “Frequency” denotes the sampling interval of time points.}
    \label{Table_data}
    \begin{tabular}{lcccccccc}
        \hline
        Dataset       & ETTh1 & ETTh2 & ETTm1 & ETTm2 & Exchange & ILI & Weather & Electricity \\
        \hline
        Features      & 7     & 7     & 7     & 7     & 8        & 7   & 21      & 321         \\
        Timesteps     & 17420 & 17420 & 69680 & 69680 & 7588     & 966 & 52696   & 26304       \\
        Frequency     & 1 hour     & 1 hour     & 15 min     & 15 min     & 1 day       & 1 week   & 10 min     & 1 hour \\
        \hline
    \end{tabular}
    \vspace{-0.8em}
\end{table*}

\section{B. Algorithm Framework}
We present the complete integration procedure of RI-Loss with the backbone model in Algorithm~\ref{alg: algorithm_hsic}.

\begin{algorithm}[H]
\caption{RI-Loss Based Time Series Model with HSIC Regularization}
\label{alg: algorithm_hsic}
\begin{algorithmic}[1]
\small
\REQUIRE Time series dataset $\mathcal{D} = \{(\boldsymbol{X}_{1:T}^{(i)}, \boldsymbol{Y}_{T+1:T+H}^{(i)})\}_{i=1}^N$, batch size $B$
\REQUIRE Hyperparameters: $\lambda$, $\tau$, learning rate $\eta$, epochs $E$
\ENSURE Trained model $G_{\theta}$

\STATE Initialize $\theta$ \COMMENT{Parameter initialization}
\STATE $K \gets \lceil N/B \rceil$ \COMMENT{Compute total batches}

\FOR{epoch $= 1$ \TO $E$}
    \STATE $\{\mathcal{B}_k\}_{k=1}^K \gets \text{ShuffleSplit}(\mathcal{D})$
    \FOR{batch $\mathcal{B}_k \in \{\mathcal{B}_{1:K}\}$}
        \STATE $(\boldsymbol{X}_{1:T}, \boldsymbol{Y}_{T+1:T+H}) \gets \mathcal{B}_k$
        \STATE $\hat{\boldsymbol{Y}}_{T+1:T+H} \gets G_{\theta}(\boldsymbol{X}_{1:T})$

        \STATE Generate $\epsilon \sim \mathcal{U}(-1,1)^{B \times H \times d}$ \COMMENT{Uniform noise matching residual dimensions}
        \STATE $\mathcal{L}_{\text{MSE}} \gets \frac{1}{BHd}\sum_{i=1}^B\sum_{t=T+1}^{T+H}\|\boldsymbol{Y}_t^{(i)}-\hat{\boldsymbol{Y}}_t^{(i)}\|_2^2$ \COMMENT{Normalized by dimensions}
        \STATE $\mathcal{L}_{\text{HSIC}} \gets \exp(-\tau\cdot\text{HSIC}(\boldsymbol{Y}_{T+1:T+H}-\hat{\boldsymbol{Y}}_{T+1:T+H}, \epsilon))$
        \STATE $\mathcal{L} \gets \mathcal{L}_{\text{MSE}} + \lambda\mathcal{L}_{\text{HSIC}}$

        \STATE $\theta \gets \theta - \eta\nabla_{\theta}\mathcal{L}$ \COMMENT{Gradient update}
    \ENDFOR
\ENDFOR
\RETURN $G_{\theta}$
\end{algorithmic}
\end{algorithm}

\section{C. Data Descriptions} \label{ref_data}
We use eight widely-used datasets in the main paper. The details are listed in the following.
\begin{itemize}
  \item ETT (Electricity Transformer Temperature) \cite{Informer}: This dataset includes 7 variables related to electricity transformers, covering the period from July 2016 to July 2018. It is divided into four subsets: ETTh1 and ETTh2, which are sampled hourly, and ETTm1 and ETTm2, which are sampled every 15 minutes.
  \item Exchange \cite{autoformer}: This dataset collects the daily exchange rates of 8 countries from 1990 to 2016.
  \item ILI \cite{zeng2022}: This dataset represents the proportion of patients diagnosed with influenza-like illness relative to the total number of patients. It includes weekly data from the Centers for Disease Control and Prevention of the United States from 2002 to 2021.
  \item Weather \cite{liu2024itransformer}: This dataset consists of 21 meteorological variables recorded at 10-minute intervals by the Weather Station of the Max Planck Institute for Biogeochemistry in 2020.
  \item ECL (Electricity Consuming Load) \cite{autoformer}: This dataset collects the hourly electricity consumption of 321 clients from 2012 to 2014.
\end{itemize}
The details of the datasets are provided in Table \ref{Table_data}.

\section{D. Backbone Models}
We use five long-term time series forecasting models as our backbone models. These include three Transformer-based models Informer \cite{Informer}, Autoformer \cite{autoformer}, and iTransformer \cite{liu2024itransformer} and two MLP-based models Dlinear \cite{zeng2022}, RAFT \cite{han2025retrieval}. A brief introduction to these models is given as follows. For more details, please refer to their original papers.
\begin{itemize}
  \item Informer: Its core idea is to use a sparse self-attention mechanism based on probabilistic sampling, enabling efficient modeling of long-range dependencies in long sequences and significantly improving forecasting efficiency.
  \item Autoformer: Its core approach is to introduce a recursive decomposition module to extract trend and seasonal components, which are then adaptively modeled using self-attention to enhance forecasting accuracy.
  \item Dlinear: Its core concept is to simplify time series modeling by decomposing the input into trend and residual components, which are then modeled separately with linear layers to achieve both efficiency and competitive performance.
  \item iTransformer: Its core idea is to embed the entire time series into a single token and apply attention across the channel dimension, explicitly capturing inter-channel dependencies for improved modeling capacity.
  \item RAFT: Its key approach is to leverage a retrieval-augmented mechanism that constructs context representations from historically similar segments, enhancing the modeling of complex patterns when combined with multilayer perceptrons.
\end{itemize}

\section{E. The Friedman Test} \label{ref_test}
The Friedman test is a non-parametric statistical method widely used for comparing multiple algorithms across several datasets. It evaluates whether there are significant differences in performance rankings among the algorithms.

The test statistic follows an $F$-distribution with $k-1$ and $(k-1)(N-1)$ degrees of freedom and is computed as:
\begin{equation}
\tau_F = \frac{(N-1)\tau_\chi^2}{N(k-1) - \tau_\chi^2},
\end{equation}

where
\begin{equation}
\tau_\chi^2 = \frac{12N}{k(k+1)}\left(\sum_{i=1}^k r_i^2 - \frac{k(k+1)^2}{4}\right).
\end{equation}
Here, $k$ denotes the number of compared models, $N$ represents the number of datasets, and $r_i$ is the average rank of the $i$-th model across all datasets. A lower average rank indicates better model performance.

If the computed $\tau_F$ exceeds the critical value at a chosen significance level, the null hypothesis---that all models perform equally---is rejected, suggesting statistically significant differences among the algorithms.

As shown in Table \ref{Table_Friedman}, at significance level $\alpha =0.05$, both MSE and MAE clearly reject the hypothesis that all models have the same performance.
\begin{table}[ht]
    \centering
    \renewcommand{\arraystretch}{1.2}
    \caption{Summary of the Friedman Statistics$\tau _{F}$}
     \label{Table_Friedman}
    \begin{tabular}{lcc}
        \hline
        Evaluation metric     & $\tau _{F}$ & Critical value$(\alpha=0.05)$ \\
        \hline
        MSE     & 235.6096       & 1.9135  \\
        MAE     & 242.2829       & 1.9135  \\
        \hline
    \end{tabular}
\end{table}

Further, the Nemenyi post-hoc test determines whether two algorithms differ significantly by comparing their average rank difference to the critical distance:
\begin{equation}
CD = q_\alpha \sqrt{\frac{k(k+1)}{6N}},
\end{equation}
where $q_\alpha$ is the critical value from the Studentized range distribution at significance level $\alpha$ (with $q_{0.05}=3.164$ for $k=10$ models), $k$ is the number of compared algorithms, and $N$ is the number of datasets. The performance difference is considered statistically significant when the observed rank difference exceeds $CD$.

\section{F. More Experimental Results}
This section includes:
\begin{itemize}
  \item F.1 More Results on Running Cost.
  \item F.2 More Results on Ablation Study.
  \item F.3 More Results on Hyperparameter Sensitivity.
  \item F.4 More Visualization of Prediction Results.
  \item F.5 Lookback Window Size.
  \item F.6 Correlation Structure Visualization.
  \item F.7 Comparison with Other Loss Functions.
\end{itemize}
All the experiments are implemented by PyTorch on two NVIDIA RTX A6000 48GB GPUs.

\subsection{F.1 More Results on Running Cost} \label{ref_run}

As shown in Table \ref{Table_eff_ecl}, RI-Loss incurs slightly higher computational overhead than MSE on large-scale datasets, but the additional cost remains manageable. Despite introducing extra training time compared to the traditional MSE, RI-Loss consistently achieves better forecasting accuracy across various models and prediction lengths, demonstrating a favorable trade-off between performance and efficiency.

\begin{table}[ht]
\centering
\fontsize{8}{8}\selectfont
\setlength{\tabcolsep}{3.5pt} 
\caption{Training time (ms/iter) of MSE and RI-Loss with a forecasting horizon of 96 on the Electricity dataset.}
\label{Table_eff_ecl}
\begin{tabular}{c c c c c c}
\toprule
Models & Informer & Autoformer & DLinear & iTransformer \\
\midrule
MSE     & 104.3 & 153.7 & 13.6 & 50.2 \\
RI-Loss & 116.3 & 166.4 & 55.7 & 63.7 \\
\bottomrule
\end{tabular}
\vspace{-13pt}  
\end{table}

\subsection*{F.2 More Results on Ablation Study}  \label{ref_ablation}
Firstly, we provide the calculation formulas for the Pearson correlation coefficient and MAE, while the formulas for RI-Loss and MSE are presented in the main text.

\begin{definition}[Pearson Correlation Coefficient]
For paired sample data $\{(r_i, s_i)\}_{i=1}^n$, the Pearson correlation coefficient $PC(\bm{r}, \bm{s})$ is given by:
\begin{equation}
PC(\bm{r}, \bm{s}) = \frac{\sum_{i=1}^n (r_i - \bar{r})(s_i - \bar{s})}{\sqrt{\sum_{i=1}^n (r_i - \bar{r})^2} \sqrt{\sum_{i=1}^n (s_i - \bar{s})^2}},
\end{equation}
where: $\bm{r} = (r_1, \ldots, r_n)$ and $\bm{s} = (s_1, \ldots, s_n)$ are observed samples and
$\bar{r} = \frac{1}{n}\sum_{i=1}^n r_i$ and $\bar{s} = \frac{1}{n}\sum_{i=1}^n s_i$ are the sample means.
\end{definition}

The Pearson Correlation Coefficient is used to measure the correlation between residual series and noise series: $PC(\bm{Y} - \hat{\bm{Y}}, \bm{\epsilon})$.
 \begin{definition}
For the true series $\boldsymbol{Y}$ and its predicted counterpart $\hat{\boldsymbol{Y}}$, the Mean Absolute Error (MAE) is computed as:
\begin{equation}
\mathrm{MAE} = \frac{1}{H}\|\boldsymbol{Y} - \hat{\boldsymbol{Y}}\|_1 = \frac{1}{H}\sum_{k=1}^H \|\boldsymbol{y}_{t+k} - \hat{\boldsymbol{y}}_{t+k}\|_1.
\end{equation}
\end{definition}

\begin{figure}
\centering
\vspace{-7pt}
\captionsetup[subfigure]{labelformat=empty,skip=2pt}
\hspace{0.05\textwidth}
\subfloat{\includegraphics[width=0.35\textwidth]{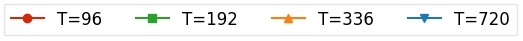}}\\
\subfloat[(a) ETTh1]{\includegraphics[width=0.23\textwidth]{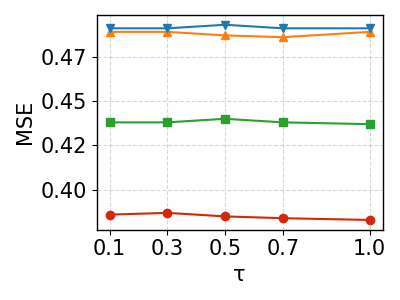}}
\subfloat[(b) ETTm1]{\includegraphics[width=0.23\textwidth]{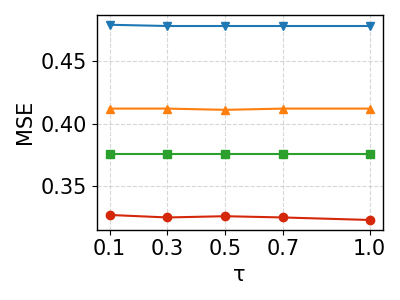}}
\caption{The impact of the hyperparameter $\tau$ on iTransformer based on ETTh1 and ETTm1 datasets.}
\label{Figure_tao}
\end{figure}

We conducted ablation experiments on five backbone models using the ETTh1, ETTm1, and Weather datasets,
using RI-Loss, MAE, MSE, and RI-Loss with HSIC replaced by the Pearson Correlation coefficient (PC) as loss functions. Results for using HSIC alone as the loss function are not included in the table, as this approach leads to severe performance degradation.
The results are shown in Table \ref{Table_ILI}.
The results show that replacing HSIC with PC degrades performance, as PC only captures linear relationships and cannot model nonlinear dependencies in complex time series. Meanwhile, RI-Loss outperforms both MSE and MAE, validating the effectiveness of our design.
Additionally, we found that using MAE as the loss function yields better predictive performance than MSE. This is because MSE is highly sensitive to noise and tends to amplify its effect during optimization, causing the model to overfit noise and degrade overall performance.

\begin{figure}[h]
\centering
\subfloat[ETTm1(MSE-Loss)]{\includegraphics[width=0.21\textwidth]{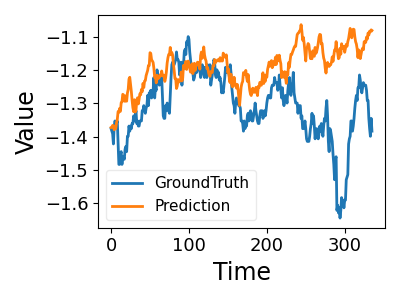}}
\subfloat[ETTm1(RI-loss)]{\includegraphics[width=0.21\textwidth]{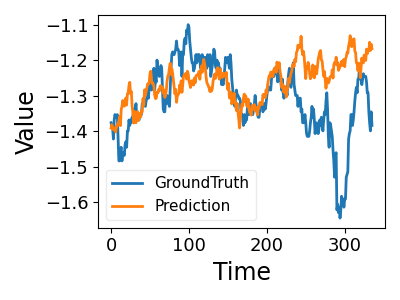}} \\
\subfloat[ETTm2(MSE-Loss)]{\includegraphics[width=0.21\textwidth]{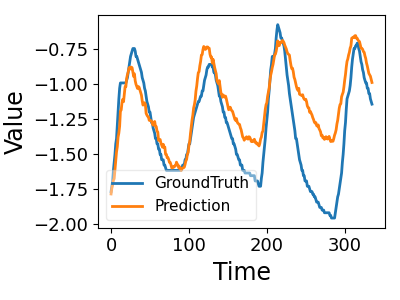}}
\subfloat[ETTm2(RI-loss)]{\includegraphics[width=0.21\textwidth]{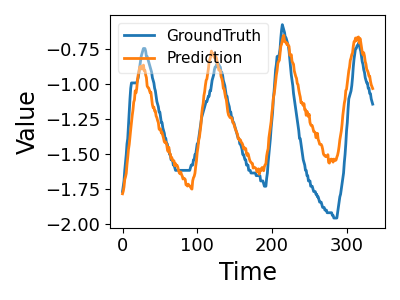}} \\
\subfloat[Weather(MSE-Loss)]{\includegraphics[width=0.21\textwidth]{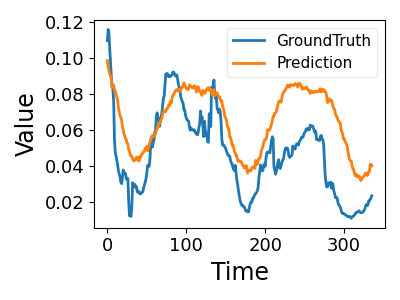}}
\subfloat[Weather(RI-loss)]{\includegraphics[width=0.21\textwidth]{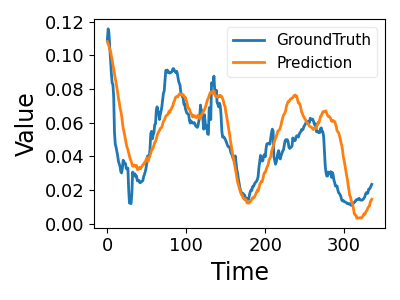}} \\
\caption{Forecasting visualization comparing RI-Loss and MSE loss as objective functions under the input-336-predict-336 settings. Blue lines are the ground truths and orange lines are the model predictions.}
\label{fig_visual}
\vspace{-1em}
\end{figure}

\begin{table*} 
\centering
\footnotesize
\setlength{\tabcolsep}{4pt}
\renewcommand{\arraystretch}{1}
\caption{Comparison of Different Loss Functions Across Five Backbone Models on the ETTh1, ETTm1, and Weather Datasets.}
\label{Table_ILI}
\begin{tabular*}{\hsize}{@{\extracolsep{\fill}}c|c|cccc|cccc|cccc}
\toprule
\multirow{2}{*}{\textbf{Model}} & \multirow{2}{*}{\textbf{Loss}} & \multicolumn{4}{c|}{\textbf{ETTh1}} & \multicolumn{4}{c|}{\textbf{ETTm1}} & \multicolumn{4}{c}{\textbf{Weather}} \\
\cmidrule(r){3-6} \cmidrule(lr){7-10} \cmidrule(lr){11-14}
 & & \textbf{96} & \textbf{192} & \textbf{336} & \textbf{720} & \textbf{96} & \textbf{192} & \textbf{336} & \textbf{720} & \textbf{96} & \textbf{192} & \textbf{336} & \textbf{720} \\
\midrule
\multirow{4}{*}{Informer}
& RI-Loss & 0.875 & \textbf{0.970} & 1.225 & 1.169 & \textbf{0.536} & \textbf{0.610} & \textbf{0.807} & \textbf{0.978} & \textbf{0.479} & \textbf{0.575} & \textbf{0.609} & \textbf{0.637} \\
& MAE & \textbf{0.870} & 1.020 & 1.198 & \textbf{1.159} & 0.553 & 0.622 & 0.873 & 1.024 & 0.485 & 0.579 & 0.626 & 0.641 \\
& MSE & 0.878 & 1.013 & \textbf{1.172} & 1.175 & 0.678 & 0.784 & 1.011 & 1.037 & 0.496 & 0.582 & 0.643 & 0.651 \\
& Pearson+MSE & 0.986 & 1.005 & 1.271 & 1.188 & 0.673 & 0.764 & 1.037 & 1.225 & 0.500 & 0.586 & 0.623 & 0.672 \\
\midrule
\multirow{4}{*}{Autoformer}
& RI-Loss & \textbf{0.440} & 0.474 & 0.522 & 0.544 & \textbf{0.466} & \textbf{0.534} & \textbf{0.480} & \textbf{0.523} & \textbf{0.224} & \textbf{0.283} & \textbf{0.347} & \textbf{0.419} \\
& MAE & 0.453 & \textbf{0.471} & 0.532 & 0.566 & 0.469 & 0.589 & 0.594 & 0.594 & 0.231 & 0.289 & 0.350 & 0.422 \\
& MSE & 0.456 & 0.492 & \textbf{0.506} & \textbf{0.496} & 0.478 & 0.549 & 0.516 & 0.526 & \textbf{0.224} & 0.305 & 0.353 & 0.456 \\
& Pearson+MSE & 0.442 & 0.488 & 0.525 & 0.551 & 0.514 & 0.581 & 0.625 & 0.641 & 0.238 & 0.285 & 0.351 & 0.436 \\
\midrule
\multirow{4}{*}{DLinear}
& RI-Loss & \textbf{0.346} & \textbf{0.404} & \textbf{0.443} & \textbf{0.471} & \textbf{0.294} & \textbf{0.333} & \textbf{0.369} & \textbf{0.423} & \textbf{0.143} & \textbf{0.184} & \textbf{0.234} & \textbf{0.309} \\
& MAE & 0.367 & 0.407 & 0.472 & 0.476 & 0.296 & 0.334 & 0.371 & 0.425 & 0.178 & 0.217 & 0.261 & 0.322 \\
& MSE & 0.384 & 0.443 & 0.447 & 0.472 & 0.302 & 0.337 & 0.371 & 0.426 & 0.144 & 0.188 & 0.240 & 0.317 \\
& Pearson+MSE & 0.374 & 0.407 & 0.499 & 0.483 & 0.310 & 0.344 & 0.378 & 0.432 & 0.183 & 0.223 & 0.281 & 0.334 \\
\midrule
\multirow{4}{*}{iTransformer}
& RI-Loss & \textbf{0.382} & \textbf{0.437} & \textbf{0.490} & \textbf{0.490} & \textbf{0.322} & \textbf{0.376} & \textbf{0.412} & \textbf{0.478} & \textbf{0.169} & 0.219 & \textbf{0.278} & 0.357 \\
& MAE & 0.383 & 0.439 & 0.495 & 0.492 & 0.323 & 0.379 & 0.417 & 0.479 & 0.172 & \textbf{0.218} & 0.281 & \textbf{0.355} \\
& MSE & 0.387 & 0.441 & 0.491 & 0.508 & 0.342 & 0.383 & 0.418 & 0.487 & 0.174 & 0.224 & 0.283 & 0.359 \\
& Pearson+MSE & 0.396 & 0.446 & 0.498 & 0.525 & 0.348 & 0.387 & 0.423 & 0.490 & 0.175 & 0.221 & 0.288 & 0.363 \\
\midrule
\multirow{4}{*}{RAFT}
& RI-Loss & \textbf{0.378} & \textbf{0.422} & 0.460 & \textbf{0.462} & \textbf{0.316} & \textbf{0.357} & \textbf{0.385} & 0.447 & \textbf{0.189} & \textbf{0.234} & \textbf{0.283} & \textbf{0.358} \\
& MAE & 0.379 & 0.426 & 0.465 & 0.468 & 0.320 & 0.359 & 0.388 & 0.449 & 0.191 & 0.236 & 0.285 & 0.368 \\
& MSE & 0.387 & 0.423 & \textbf{0.458} & 0.463 & 0.329 & 0.363 & 0.391 & \textbf{0.444} & \textbf{0.189} & 0.239 & 0.291 & 0.366 \\
& Pearson+MSE & 0.399 & 0.433 & 0.464 & 0.466 & 0.333 & 0.367 & 0.393 & 0.450 & 0.197 & 0.247 & 0.299 & 0.374 \\
\bottomrule
\end{tabular*}
\end{table*}

\begin{table*} 
\centering
\caption{The prediction performance of four backbone models using different $\lambda$ with RI-Loss on the ETTh2, ETTm1 and Weather datasets.}
\label{Table_lambda}
\setlength{\tabcolsep}{0pt}  
\renewcommand{\arraystretch}{1}
\small
\begin{tabularx}{\textwidth}{c@{\hskip 2pt} *{4}{>{\centering\arraybackslash}X@{\hskip 2pt}}}
\toprule
Dataset & Informer & Autoformer & DLinear & RAFT \\
\midrule
ETTm1 &
\raisebox{-.5\height}{\includegraphics[width=\linewidth, trim=5 5 5 5, clip]{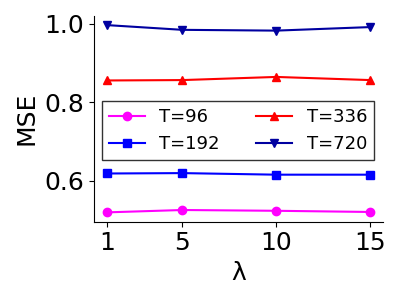}} &
\raisebox{-.5\height}{\includegraphics[width=\linewidth, trim=5 5 5 5, clip]{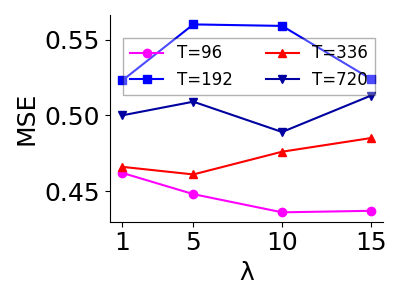}} &
\raisebox{-.5\height}{\includegraphics[width=\linewidth, trim=5 5 5 5, clip]{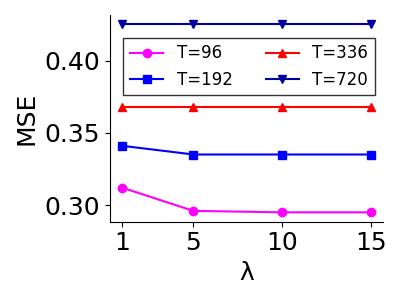}} &
\raisebox{-.5\height}{\includegraphics[width=\linewidth, trim=5 5 5 5, clip]{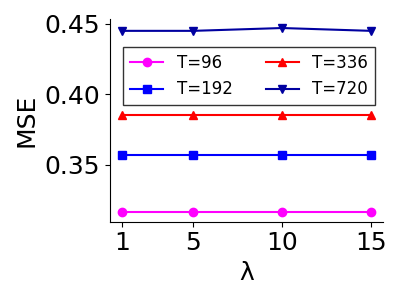}} \\
\midrule
ETTh2 &
\raisebox{-.5\height}{\includegraphics[width=\linewidth, trim=5 5 5 5, clip]{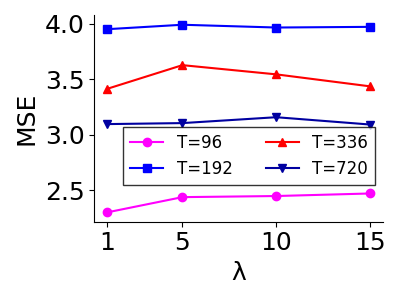}} &
\raisebox{-.5\height}{\includegraphics[width=\linewidth, trim=5 5 5 5, clip]{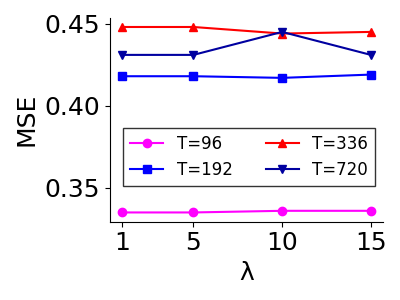}} &
\raisebox{-.5\height}{\includegraphics[width=\linewidth, trim=5 5 5 5, clip]{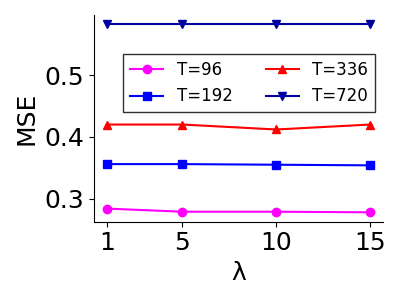}} &
\raisebox{-.5\height}{\includegraphics[width=\linewidth, trim=5 5 5 5, clip]{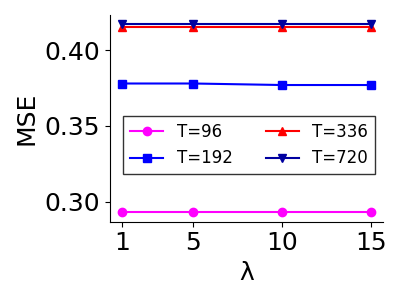}} \\
\midrule
Weather &
\raisebox{-.5\height}{\includegraphics[width=\linewidth, trim=5 5 5 5, clip]{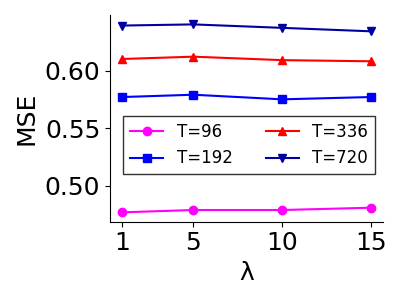}} &
\raisebox{-.5\height}{\includegraphics[width=\linewidth, trim=5 5 5 5, clip]{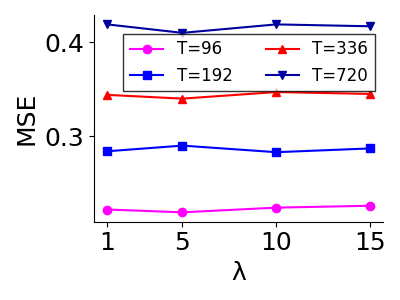}} &
\raisebox{-.5\height}{\includegraphics[width=\linewidth, trim=5 5 5 5, clip]{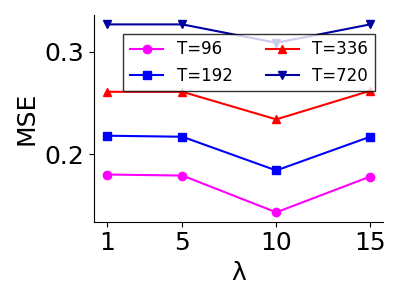}} &
\raisebox{-.5\height}{\includegraphics[width=\linewidth, trim=5 5 5 5, clip]{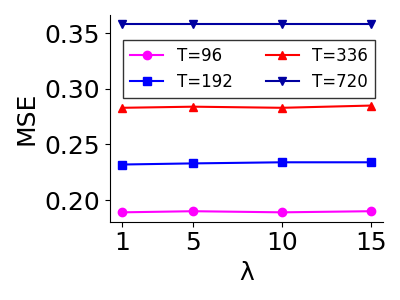}} \\
\bottomrule
\end{tabularx}
\end{table*}

\begin{figure*} 
\centering
\subfloat[DLinear-ETTh2]{\includegraphics[width=0.24\textwidth]{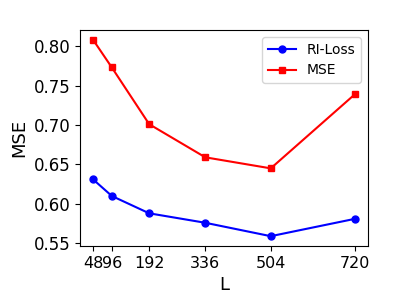}}
\subfloat[DLinear-ETTm2]{\includegraphics[width=0.24\textwidth]{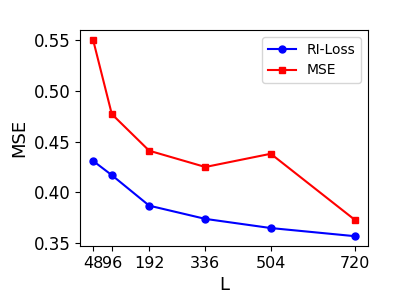}}
\subfloat[Autoformer-ETTh2]{\includegraphics[width=0.24\textwidth]{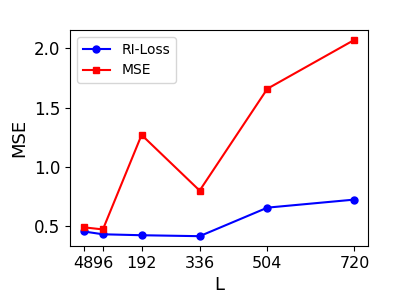}}
\subfloat[Autoformer-ETTm2]{\includegraphics[width=0.24\textwidth]{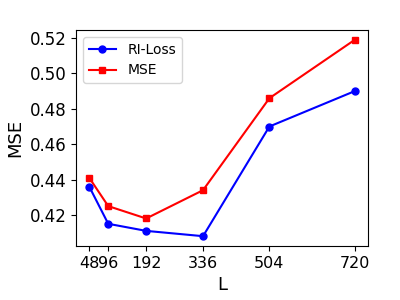}} \\
\subfloat[Informer-ETTh2]{\includegraphics[width=0.24\textwidth]{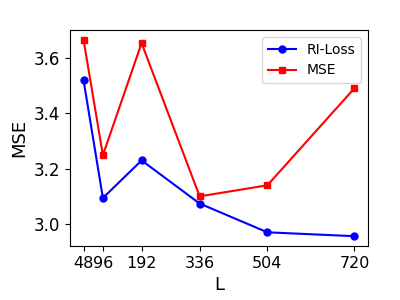}}
\subfloat[Informer-ETTm2]{\includegraphics[width=0.24\textwidth]{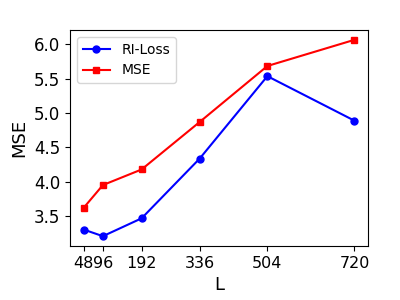}}
\subfloat[iTransformer-ETTh2]{\includegraphics[width=0.24\textwidth]{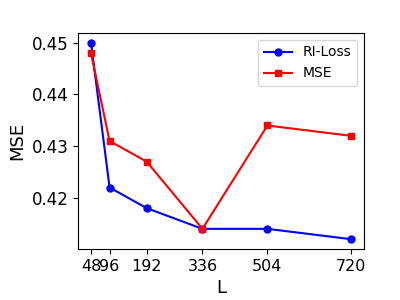}}
\subfloat[iTransformer-ETTm2]{\includegraphics[width=0.24\textwidth]{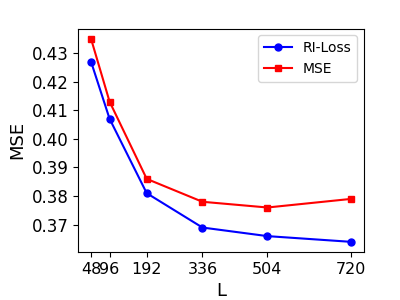}} \\
\vspace{10pt}
\caption{The MSE results for long-term forecasting ($H=720$) on the ETTh2 and ETTm2 datasets using different lookback window sizes.}
\label{figure_pred_len}
\end{figure*}

\begin{figure}
\vspace{-7pt}
\centering
\subfloat[Groundtruth]{\includegraphics[width=0.15\textwidth]{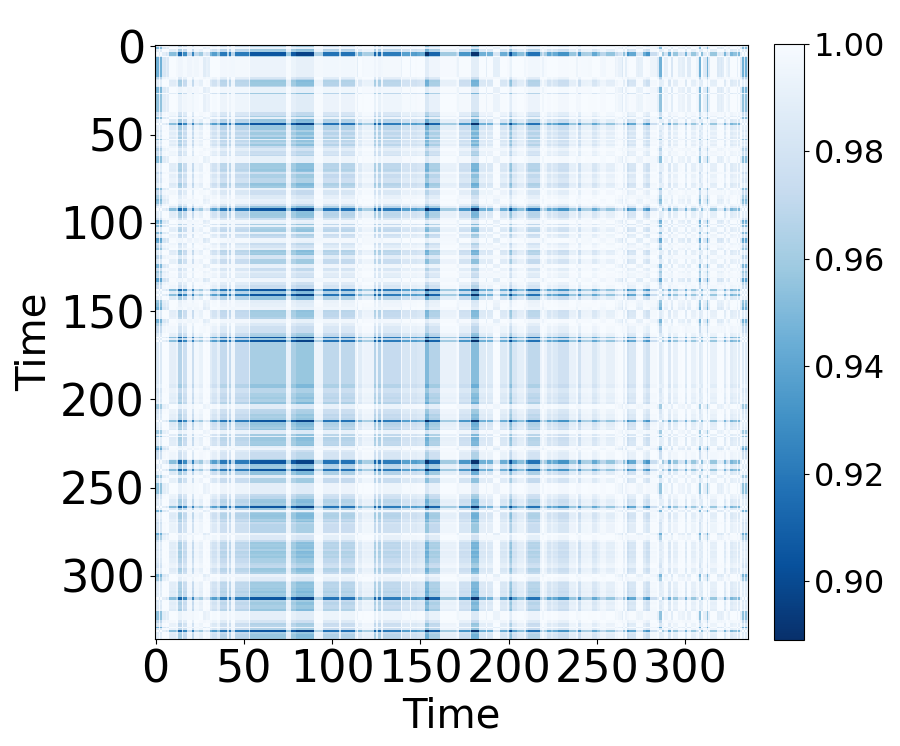}}
\subfloat[MSE-loss]{\includegraphics[width=0.15\textwidth]{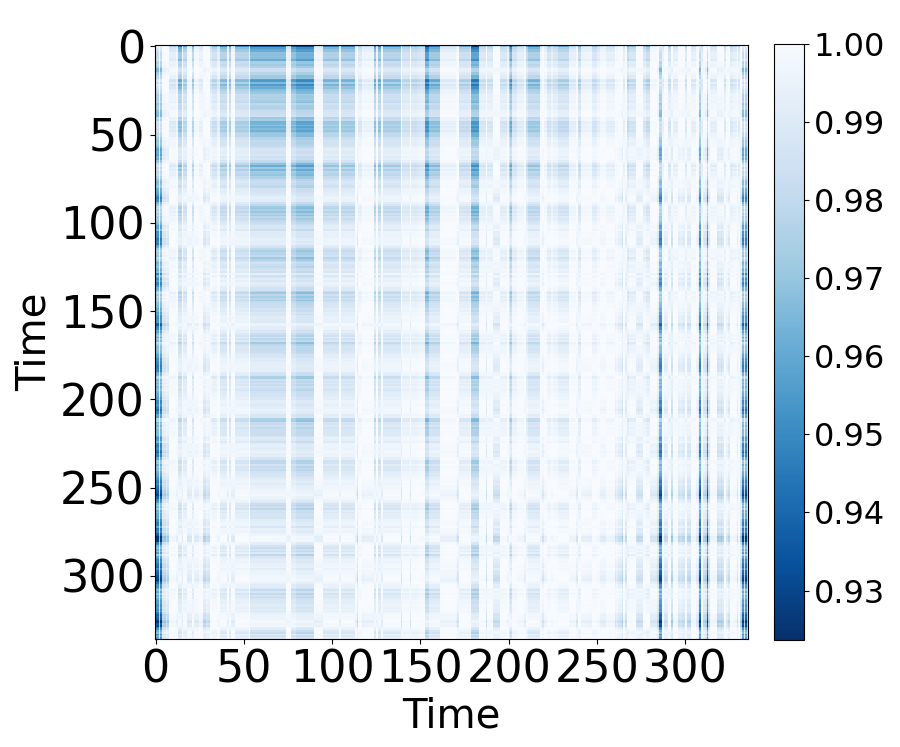}}
\subfloat[RI-Loss]{\includegraphics[width=0.15\textwidth]{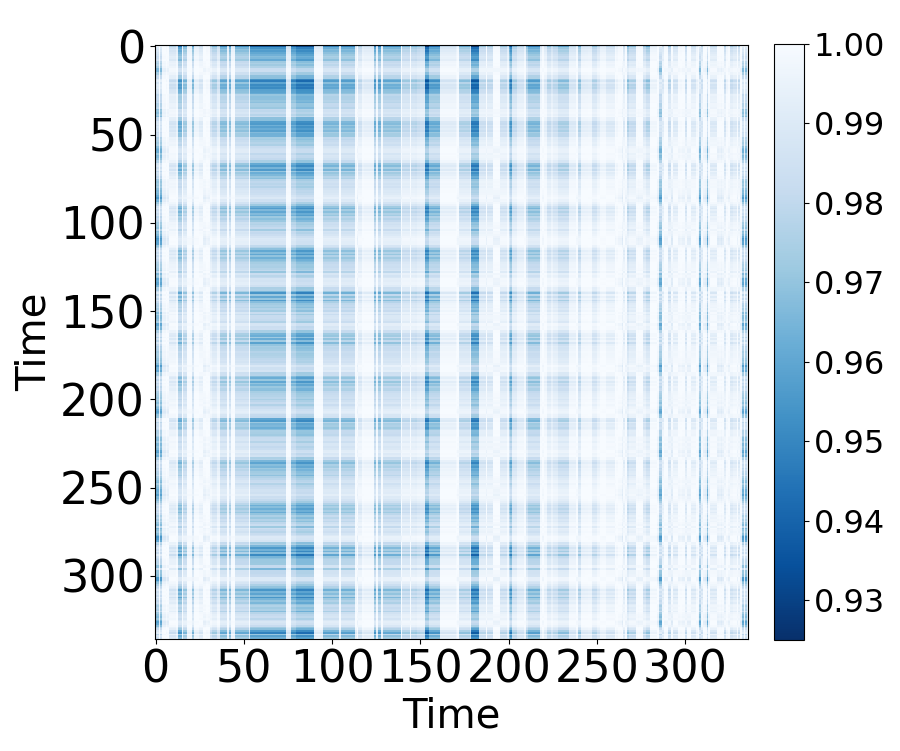}}
\caption{Visualization of temporal dependencies in the ETTh2 dataset.}
\label{Figure_relitu}
\vspace{-5pt}
 \end{figure}

\begin{table*}[ht]
\centering
\caption{Forecasting performance (MSE / MAE) on ETTh1, ETTm1 and Weather datasets. Lower is better.}
\setlength{\tabcolsep}{3pt} 
\renewcommand{\arraystretch}{1.2}  
\begin{tabular}{c|c|cccc|cccc|cccc}
\toprule
\multicolumn{2}{c|}{\textbf{Dataset}}
& \multicolumn{4}{c|}{\textbf{ETTh1}}
& \multicolumn{4}{c|}{\textbf{ETTm1}}
& \multicolumn{4}{c}{\textbf{Weather}} \\
\midrule
\multicolumn{2}{c|}{\textbf{Forecast Length}}
& 96 & 192 & 336 & 720 & 96 & 192 & 336 & 720 & 96 & 192 & 336 & 720 \\
\midrule
\multirow{2}{*}{\textbf{MSE}}
& MSE & 0.387 & 0.441 & 0.491 & 0.508 & 0.342 & 0.383 & 0.418 & 0.487 & 0.174 & 0.224 & 0.283 & 0.359 \\
& MAE & 0.405 & 0.436 & 0.462 & 0.493 & 0.377 & 0.396 & 0.418 & 0.457 & 0.214 & 0.257 & 0.300 & 0.350 \\
\midrule
\multirow{2}{*}{\makecell[c]{\textbf{TILDE-Q}\\(2022)}}
& MSE & 0.386 & 0.434 & 0.477 & 0.521 & 0.336 & 0.382 & 0.418 & 0.483 & 0.172 & 0.221 & 0.279 & \textbf{0.356} \\
& MAE & 0.401 & 0.429 & 0.454 & 0.502 & 0.366 & 0.392 & 0.416 & 0.453 & 0.208 & 0.252 & 0.296 & 0.346 \\
\midrule
\multirow{2}{*}{\makecell[c]{\textbf{FreDF}\\(2024)}}
& MSE & \textbf{0.381} & \textbf{0.433} & \textbf{0.471} & \textbf{0.487} & 0.334 & 0.380 & 0.416 & 0.486 & 0.170 & 0.222 & 0.279 & 0.357 \\
& MAE & 0.396 & \textbf{0.426} & \textbf{0.446} & 0.481 & 0.367 & 0.393 & 0.414 & 0.451 & 0.210 & 0.254 & 0.297 & 0.347 \\
\midrule
\multirow{2}{*}{\textbf{RI-Loss}}
& MSE & 0.382 & 0.437 & 0.490 & 0.490 & \textbf{0.322} & \textbf{0.376} & \textbf{0.412} & \textbf{0.478} & \textbf{0.169} & \textbf{0.219} & \textbf{0.278} & 0.357 \\
& MAE & \textbf{0.395} & 0.427 & 0.458 & \textbf{0.477} & \textbf{0.350} & \textbf{0.378} & \textbf{0.403} & \textbf{0.441} & \textbf{0.201} & \textbf{0.248} & \textbf{0.291} & \textbf{0.345} \\
\bottomrule
\end{tabular}
\label{tab:loss}
\end{table*}

\subsection{F.3 More Results on Hyperparameter Sensitivity}  \label{ref_lambda}

We evaluate the impact of the hyperparameter $\lambda$ in RI-Loss on prediction performance using the ETTh2, ETTm1, and Weather datasets, with $\lambda \in \{1, 5, 10, 15\}$. As shown in Table \ref{Table_lambda}, the performance remains stable across settings, and the maximum error is still lower than that with MSE loss. Notably, $\lambda$ has minimal effect in long-term forecasting.

In addition, we evaluate the impact of the hyperparameter $\tau$ using the iTransformer model when $\lambda$ is set to 10. We test $\tau \in \{0.1, 0.3, 0.5, 0.7, 1\}$, as shown in Figure \ref{Figure_tao}. The results demonstrate that the model performance remains stable across different values of $\tau$.

\subsection{F.4 More Visualization of Prediction Results}  \label{ref_vis}

To evaluate the prediction quality of the Mean Squared Error (MSE) loss and our proposed RI-Loss, we use DLinear as the backbone model to visualize sample predictions on the ETTm1, ETTm2, and Weather datasets. The results are shown in Figure \ref{fig_visual}.

\subsection{F.5 Lookback Window Size}

In long-term time series forecasting tasks, it is generally expected that more powerful models will perform better with larger lookback windows~\cite{xu2024fits,zeng2022,li2023r}. However, in practice, expanding the lookback window does not always lead to improved performance. As the window size increases, the model must process more features, which can lead to higher complexity and lower performance.

Figure \ref{figure_pred_len} illustrates the long-term forecasting capabilities of three baseline models with varying lookback window sizes \{48, 96, 192, 336, 504, 720\}. From the figures, we observe the following:

(1) DLinear and iTransformer benefit more significantly as the lookback window increases, especially when trained with our loss function, which further amplifies this advantage.

(2) For Informer and Autoformer, our loss function helps the models better leverage moderately extended lookback windows, resulting in improved forecasting performance.

(3) There is a limit to the performance gains achieved by increasing the lookback window. If there is a distribution shift in the dataset, a longer window may introduce irrelevant or outdated information, which can degrade model performance.

\subsection{F.6 Correlation Structure Visualization}

In Figure \ref{Figure_relitu}, we present a visualization of the temporal dependencies learned by the Dlinear model using MSE-Loss and RI-Loss. Panel (a) shows the correlation heatmap between future and historical real data. Panels (b) and (c) display the correlation heatmaps between the predicted and historical values when the model uses MSE-Loss and RI-Loss, respectively. It is easy to see that when the model uses MSE-Loss, it captures some irrelevant dependencies. In contrast, when using RI-Loss, the correlation between the predicted and historical values more closely aligns with the actual correlation, demonstrating a better alignment with the temporal relationships.

\subsection{F.7 Comparison with Other Loss Functions}

We evaluate the proposed RI-Loss in comparison with advanced loss functions. TILDE-Q \cite{lee2022tildeq} enhances the model’s sensitivity to temporal shape patterns through a transformation-invariant regularization term, thereby improving structural consistency in predictions. FreDF \cite{wang2025fredf}, on the other hand, leverages frequency-domain comparisons to reduce reliance on noisy labels and emphasizes spectral structure alignment. Table \ref{tab:loss} presents the forecasting performance of different loss functions across multiple benchmark datasets and prediction lengths. The results show that RI-Loss consistently achieves the lowest MSE and MAE in most settings, outperforming both TILDE-Q and FreDF. This performance gain can be attributed to RI-Loss’s ability to better capture critical temporal patterns and to effectively suppress the adverse impact of noise during optimization.

\end{document}